\documentclass[conference]{IEEEtran}

\pagestyle{plain}

\hyphenation{op-tical net-works semi-conduc-tor}

\def\BibTeX{{\rm B\kern-.05em{\sc i\kern-.025em b}\kern-.08emT\kern-.1667em\lower.7ex\hbox{E}\kern-.125emX}}

\usepackage{nicefrac}
\usepackage{siunitx}
\usepackage{array,framed}
\usepackage{booktabs}
\usepackage{
  color,
  float,
  epsfig,
  wrapfig,
  graphics,
  graphicx,
  subcaption
}
\usepackage{textcomp,amssymb}
\usepackage{setspace}
\usepackage{latexsym,fancyhdr,url}
\usepackage{enumerate}
\usepackage[linesnumbered,ruled,vlined]{algorithm2e}
\usepackage{algpseudocode}
\usepackage{amsthm}
\usepackage{xparse} 
\usepackage{xspace}
\usepackage{thmtools}
\usepackage{multirow}
\usepackage{csvsimple}
\usepackage{balance}
\usepackage{multirow}
\SetKwInOut{Input}{input}
\SetKwInOut{Output}{output}
\usepackage{bbm}

\usepackage{enumitem}
\usepackage{marvosym}
\setlist{leftmargin=4mm}
\setlength{\itemsep}{0.5em}
\graphicspath{{./images/}}

\newcommand{\partitle}[1]{\medskip \noindent \textbf{#1.}}

\usepackage{
  tikz,
  pgfplots,
  pgfplotstable
}
\usepackage{hyperref}

\usetikzlibrary{
  shapes.geometric,
  arrows,
  external,
  pgfplots.groupplots,
  matrix
}

\pgfplotsset{compat=1.9}


\usepackage{mathtools,}

\DeclareMathAlphabet{\mathcal}{OMS}{cmsy}{m}{n}



\DeclareGraphicsExtensions{%
    .png,.PNG,%
    .pdf,.PDF,%
    .jpg,.mps,.jpeg,.jbig2,.jb2,.JPG,.JPEG,.JBIG2,.JB2}

\newcolumntype{C}[1]{>{\centering\let\newline\\\arraybackslash\hspace{0pt}}m{#1}}
\newcolumntype{K}[1]{>{\centering\arraybackslash}p{#1}}
\newtheorem{assumption}{Assumption}
\newtheorem{lemma}{Lemma}
\newtheorem{theorem}{Theorem}

\newtheorem{definition}{Definition}
\newtheorem{remark}{Remark}

\makeatletter
\newsavebox\myboxA
\newsavebox\myboxB
\newlength\mylenA

\newcommand*\xbar[2][0.75]{%
    \sbox{\myboxA}{$\m@th#2$}%
    \setbox\myboxB\null
    \ht\myboxB=\ht\myboxA%
    \dp\myboxB=\dp\myboxA%
    \wd\myboxB=#1\wd\myboxA
    \sbox\myboxB{$\m@th\overline{\copy\myboxB}$}
    \setlength\mylenA{\the\wd\myboxA}
    \addtolength\mylenA{-\the\wd\myboxB}%
    \ifdim\wd\myboxB<\wd\myboxA%
        \rlap{\hskip 0.5\mylenA\usebox\myboxB}{\usebox\myboxA}%
    \else
        \hskip -0.5\mylenA\rlap{\usebox\myboxA}{\hskip 0.5\mylenA\usebox\myboxB}%
    \fi}
\makeatother

\setlength{\belowcaptionskip}{-10pt} 
\setlength{\footskip}{30pt}
\definecolor{darkgreen}{rgb}{0.0, 0.5, 0.0}
\setlength{\abovecaptionskip}{5pt plus 3pt minus 2pt} 

\begin{document}

\title{Differentially Private Zeroth-Order Methods for Scalable Large Language Model Fine-tuning}





\author{
\IEEEauthorblockN{Zhihao Liu\textsuperscript{1,2}, Jian Lou\textsuperscript{1,2}, Wenjie Bao\textsuperscript{1,2}, Yuke Hu\textsuperscript{1,2}, Bo Li\textsuperscript{3}, Zhan Qin\textsuperscript{1,2,\Letter}, Kui Ren\textsuperscript{1,2}}
\IEEEauthorblockA{\textsuperscript{1}The State Key Laboratory of Blockchain and Data Security, Zhejiang University\\
\textsuperscript{2}Hangzhou High-Tech Zone (Binjiang) Institute of Blockchain and Data Security, \textsuperscript{3}University of Chicago\\
\{zhihao\_liu, jian.lou, wenjie\_bao, yukehu, qinzhan, kuiren\}@zju.edu.cn; bol@uchicago.edu}
\thanks{\textsuperscript{\Letter} Corresponding author.}
\thanks{\textsuperscript{*} Code is available at \url{https://github.com/Liuxiaohao6/DPZOPO}.}
}

\IEEEoverridecommandlockouts
\makeatletter\def\@IEEEpubidpullup{6.5\baselineskip}\makeatother

\maketitle
\begin{abstract}
Fine-tuning on task-specific datasets is a widely-embraced paradigm of harnessing the powerful capability of pretrained LLMs for various downstream tasks. Due to the popularity of LLMs fine-tuning and its accompanying privacy concerns, differentially private (DP) fine-tuning of pretrained LLMs has been widely used to safeguarding the privacy of task-specific datasets. Lying at the design core of DP LLM fine-tuning methods is the satisfactory tradeoff among privacy, utility, and scalability. Most existing methods build upon the seminal work of DP-SGD. Despite pushing the scalability of DP-SGD to its limit, DP-SGD-based fine-tuning methods are unfortunately limited by the inherent inefficiency of SGD.

In this paper, we investigate the potential of DP zeroth-order methods for LLM pretraining, which avoids the scalability bottleneck of SGD by approximating the gradient with the more efficient zeroth-order gradient. Rather than treating the zeroth-order method as a drop-in replacement for SGD, this paper presents a comprehensive study both theoretically and empirically. First, we propose the stagewise DP zeroth-order method (DP-ZOSO) that dynamically schedules key hyperparameters. This design is grounded on the synergy between DP random perturbation and the gradient approximation error of the zeroth-order method, and its effect on fine-tuning trajectory.
Second, we further enhance the scalability by reducing the trainable parameters that are identified by repurposing a data-free pruning technique requiring no additional data or extra privacy budget. We propose DP zeroth-order stagewise pruning method (DP-ZOPO) with several pruning strategies and undertake a comparative analysis of these strategies. Additionally, we identify and elucidate the superior pruning strategy.

We provide theoretical analysis for both proposed methods. We conduct extensive empirical analysis on both encoder-only masked language model and decoder-only autoregressive language model, achieving impressive results in terms of scalability and utility regardless of the class of tasks (compared with DPZero \cite{zhang2024dpzero}, DP-ZOPO improves $4.5\%$ on SST-5, $5.5\%$ on MNLI with RoBERTa-Large and 9.2\% on CB, 3.9\% on BoolQ with OPT-2.7b when $\epsilon=4$, demonstrates more significant enhancement in performance on more complicated tasks).

\end{abstract}

\section{Introduction}
\label{sec:intro}
Pretrained Large Language Models (LLMs), scaling up to unprecedented sizes, have demonstrated remarkable potential in their capabilities of understanding and processing natural languages with human-like proficiency \cite{radford2018improving} \cite{radford2019language}. This has prompted a rapid surge in demand to harness the power of pretrained LLMs, particularly open-sourced series like OPT \cite{zhang2022opt}, llama \cite{touvron2023llama} and GPT \cite{radford2019language}, to boost performance across a wide range of downstream tasks. Fine-tuning is one of the most fundamental and dominant approaches for adapting pretrained LLMs to specific downstream tasks, which has been proven effective in  \cite{ziegler2019fine} \cite{sun2019fine}. Fine-tuning starts with the publicly accessible checkpoint of the selected model and continues to train the model for several epochs based on the task-specific dataset (referred to as the fine-tuning dataset hereafter). While appearing a simple process at first glance, the sheer scale of nowadays LLMs introduces significant scalability issues for fine-tuning, e.g., incurring prohibitive memory footprint, which complicates the design of training methods even in nonprivate fine-tuning \cite{zhang2023llama}. 


It is widely recognized that a finetuned LLM can leak sensitive information \cite{carlini2019secret} \cite{carlini2021extracting} from its fine-tuning dataset, which is considered private and valuable in certain downstream application areas related to finance, healthcare \cite{carlini2021extracting}. Therefore, private fine-tuning of pretrained LLMs has become a pressing need due to the escalating privacy concerns associated with the growing popularity of LLMs \cite{yu2021differentially}. Differential privacy (DP) \cite{dwork2006differential} is a widely adopted mathematical framework that ensures a rigorous privacy protection guarantee by introducing calibrated random perturbations. A long-standing research theme in DP is the pursuit of an ideal tradeoff between privacy and utility, which arises because the random perturbation for privacy-preserving purpose will inevitably degrade the utility. DP LLM fine-tuning faces even intensified tension as it has to handle the tradeoff among privacy, utility, and scalability. 

Most of the existing DP LLM fine-tuning methods \cite{li2021large} \cite{he2022exploring} build upon the seminal work of Differentially Private SGD (DP-SGD) \cite{abadi2016deep}, which clips the gradient vector per training sample, injects random Gaussian noise after clipping, and tracks the privacy budget across iterations by the moments accountant technique. These DP-SGD-based methods mitigate the scalability issue in roughly two lines of effort. The first line \cite{liu2021differentially} \cite{bu2022automatic} \cite{he2022exploring} aims to reduce the computational cost of the per-sample clipping of the gradient vector that is known to incur heavy computational and memory burden. For instance, \cite{liu2021differentially} introduce group clipping to replace the per-sample clip, \cite{bun2014fingerprinting} propose book keep (BK) strategy. The second line \cite{feng2023tensor} incorporates DP-SGD with parameter-efficient fine-tuning techniques in LLMs, which limits the number of trainable parameters. The intuition is that the utility-privacy tradeoff degrades quickly with respect to the number of trainable parameters since the DP perturbations need to be injected into all dimensions of the gradient vector. For instance, \cite{yu2021differentially} \cite{li2021large} consider LoRA, adapters, and compacter, which either train two smaller factors to approximate the LLM parameter update or insert additional trainable modules while freezing the pretrained LLM parameters. 

Although the DP-SGD-based methods mentioned above have made the best effort to squeeze performance out of DP-SGD, the inefficiency inherent in SGD continues to hinder achieving a satisfactory ``privacy-utility-scalability'' tradeoff for LLM fine-tuning. That is, the gradient calculation requires caching all intermediate activations during the forward pass and gradients during the backward pass, which leads to prohibitive memory usage for LLM fine-tuning, i.e., consuming up to $12\times$ the memory usage required for inference \cite{malladi2023fine}. Driven by such limitation, recent nonprivate LLM fine-tuning methods turn to zeroth-order methods, which circumvent the scalability issues associated with gradient computations by approximating them using two inferences \cite{malladi2023fine} \cite{zhao2024second} \cite{zhang2024revisiting}. In nonprivate LLM fine-tuning, the effectiveness of zeroth-order methods has been validated through both theoretical and empirical studies, which suggests a promising direction for developing the DP LLM fine-tuning method.

In this paper, we strive to achieve a better ``privacy-utility-scalab-ility'' tradeoff for DP LLM fine-tuning, through the lens of zeroth-order methods that are previously under-explored in DP literature. We comprehensively investigate DP zeroth-order methods from both theoretical and empirical perspectives, rather than treating the zeroth-order method as a drop-in replacement for SGD. 

We further leverage pruning to enhance DP zeroth-order LLM fine-tuning where LLM is updated on a subset of parameters with zeroth-order optimizer. We are motivated by the insights drawn from the following three factors. Firstly, a large proportion of LLM's parameters are typically redundant, and fine-tuning only a subset of parameters can yield significant improvements in downstream tasks. Secondly, reducing the dimensionality of the parameters to be updated improves the accuracy of zeroth-order gradient estimation. Lastly, since only a subset of parameters is updated with private data, the scale of DP noise introduced will also be reduced. Moreover, we further design multiple pruning strategies to enhance model performance under private settings.

\partitle{Our contributions}
First, we focus on the synergy between the DP random perturbations and the gradient approximation error of zeroth-order estimation to the true gradient, across different training stages. Specifically, we propose to dynamically adjust the ZO scale parameter that controls the gradient approximation error and divide the DP LLM fine-tuning into stages. We propose DP-ZOSO, the first stagewise algorithm for DP-ZO fine-tuning to our best knowledge. In earlier stages, the gradient approximation error is controlled to be smaller, and together with a larger learning rate, it allows the LLM fine-tuning to approach the optimum more quickly. In the later stages, we need to deliberately increase the gradient approximation error, together with a decreased learning rate, offering a stabilization effect on the fine-tuning trajectory. Our theoretical analysis demonstrates that this stagewise DP-ZO fine-tuning strategy provides an improved convergence rate. 

Second, we are the first to propose stagewise DP zeroth-order method in conjunction with reduced trainable parameters (DP-ZOPO). Unlike existing works in DP-SGD, which introduce additional trainable modules to modify the LLM structure, or the LoRA-based method, which still entails a certain number of variables, we propose to initially identify key parameters within the given LLM. Subsequently, we treat these identified parameters as trainable while freezing the remainder. To identify the key parameters, we repurpose a data-free pruning method, which does not necessitate public data or incur additional privacy costs during the pruning stage. In conjunction with the stagewise fine-tuning process, we also propose several dynamic pruning strategies and point out the superior strategy, it is proved that implementing pruning can enhance optimization both theoretically and empirically.

Third, we conduct extensive empirical evaluations to corroborate the superiority of our proposed DP zeroth-order methods for LLM fine-tuning. We utilize four different open-source LLMs, including masked language model (RoBERTa-large) and autoregressive language model (OPT-1.3b, OPT-2.7b) for downstream tasks such as sentiment classification, natural language inference, topic classification (including datasets like SST-2, SST-5, SNLI, MNLI, and TREC), question answering (SQuAD), translation (Europarl) and summarization (XSum and CNN/DM). Our method exhibits strong scalability and performs well across all models and datasets. Our method has enhanced the model's performance greatly (improves $4.5\%$ on SST-5, $5.5\%$ on MNLI with RoBERTa-Large and 9.2\% on CB, 3.9\% on BoolQ with OPT-2.7b when $\epsilon=4$) compared with existing differentially private zeroth-order fine-tuning method. Similar results can be observed in the tasks of question-answering, translation and summarization tasks.

\begin{figure*}[htbp]
    \centering
    \includegraphics[scale=0.6]{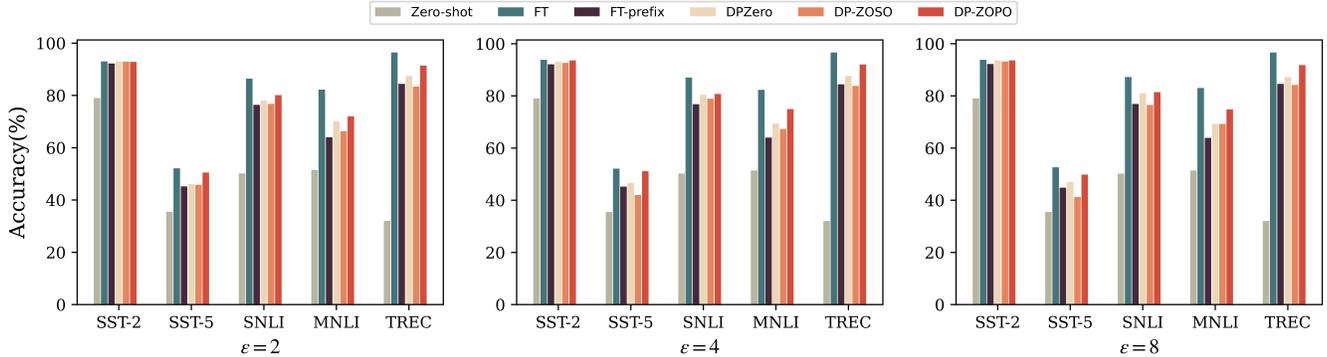}
    \caption{Experiments on RoBERTAa-large. We report zero-shot, DPZero \cite{zhang2024dpzero}, DP-ZOPO, DP-ZOSO and DP full-parameter fine-tuning (FT) and DP prefix-tuning (FT-prefix). DP-ZOSO and DP-ZOSO both outperform zero-shot, FT-prefix with much less memory. DP-ZOPO far outperforms DPZero, and DP-ZOSO and approaches FT  (Detailed numbers in Table \ref{table robert}).}\label{fig1}
\end{figure*} 
Our main contributions can be summarized as follows:
\begin{itemize}
    \item We are the first to conduct a comprehensive investigation of DP LLM fine-tuning from a zeroth-order perspective.
    \item We propose two novel stagewise DP zeroth-order algorithms DP-ZOSO and DP-ZOPO. DP-ZOSO optimizes the model stagewise on all dimensions while DP-ZOPO utilizes the pruning mask to guide the model towards updating in more important directions. We provide a comprehensive theoretical analysis of privacy and convergence for both of them. DP-ZOPO reduces the total complexity of DP-ZOSO by a factor of $1/r$ theoretically.
    \item We extensively experiment with our methods, providing empirical evidence of their scalability and utility. DP-ZOPO outperforms parameter-efficient method (DP prefix tuning) by 10.9\% on MNLI and 7.6\% on TREC when $\epsilon=4$ with RoBERTa-large. DP-ZOPO also outperforms DPZero by 5.5\% on MNLI and 4.4\% on TREC when $\epsilon=4$ with RoBERTa-large (see Figure \ref{fig1}). More results pertaining to a broader range of tasks are presented in Section \ref{sec generation task}.
\end{itemize}
In sum, we are the only work providing both theoretical and empirical studies among all concurrent works, offering more involved theoretical analysis and more comprehensive empirical analysis. Therefore, we believe our work offers distinct and significant contributions to both fields of DP LLM fine-tuning and DP zeroth-order methods, compared to these concurrent works.


\section{Background and Related Work}
\label{sec:relwork}

\subsection{Private Parameter Efficient Fine-Tuning} 
As LLMs scale up, full-parameter fine-tuning becomes impractical due to the requirement for extensive GPU memory. Things only get worse with privacy, which leads to overheads in terms of running time, memory usage, and most importantly, accuracy. The magnitude of noise introduced to a model due to DP grows as the model size increases, which can result in poor performance of LLMs. Parameter-efficient fine-tuning methods reduce memory consumption by updating just a fraction of the network parameters. Differentially private prompt tuning \cite{li2023privacy} \cite{duan2023flocks} has emerged as a simple and parameter-efficient solution for steering LLMs to downstream tasks. DP fine-tuning with Reparametrized Gradient Perturbation \cite{yu2021large} \cite{li2021large} computes the low-dimension projected gradient without computing the gradient itself, significantly reducing computation cost. Besides parameter-efficient fine-tuning, there also exist other memory-efficient methods.

\subsection{DP Zeroth-Order Fine-Tuning} 
Zeroth-order fine-tuning methods decrease memory consumption by replacing backpropagation with two inferences \cite{malladi2023fine} \cite{zhang2024revisiting}, each done with the same random perturbation with flipped signs. Recently, there have been several research that explores differential private zeroth-order fine-tuning: a workshop paper \cite{zhang2024dpzero} and a ``work in progress'' paper on arXiv \cite{tang2024private}. We stress that our work differs from them in three key aspects: 1) Regarding algorithm design, all these works merely consider constant scheduling of hyperparameters and have not explored parameter sparsification to further boost scalability like us; 2) Regarding theoretical analysis: \cite{tang2024private} did not provide any theoretical analysis for the utility. \cite{zhang2024dpzero}'s utility analysis did not consider the dynamic scheduling of the hyperparameters that render the analysis more involved; 3) Regarding empirical analysis: \cite{zhang2024dpzero} did not provide any empirical studies. While \cite{tang2024private} also considered LLM finetuing, our empirical study is more comprehensive by considering both RoBERTa-large and OPT models, along with serval complicated tasks. Moreover, targeting the sparsity inherent in LLMs, we propose DP-ZO fine-tuning with pruning and achieve better performance both theoretically and empirically.


\subsection{DP-SGD with Sparsification} 
Due to the low-rankness or the sparsity of neural networks, pruning has been widely used as a dimensionality reduction method to improve the scalability of DP-SGD \cite{adamczewski2023differential} \cite{luo2021scalable} \cite{mireshghallah2022differentially}. Adamczewski et. al. \cite{adamczewski2023pre} proposed parameter freezing pruning method: pre-prune the network and update those selected using DP-SGD. They \cite{adamczewski2023differential} further propose parameter selection: select which parameters to update at each step of training and update only those selected using DP-SGD. They use public data for freezing or selecting parameters to avoid privacy loss incurring in these steps. However, there has been no study about stagewise pruning method with dynamic pruning rates in differentially private zeroth-order fine-tuning.


\section{Preliminary}

\subsection{LLM Fine-Tuning}
LLM fine-tuning has been a popular way of adapting a pre-trained large language model to a specific task or domain by further training it on task-specific data.
\begin{definition}[LLM Fine-tuning]
    Fine-tuning a pretained language model $f(\boldsymbol{\theta})$ on the dataset $\mathcal{D}$ of downstream task can be described as the following optimization problem:
    \begin{equation}
        \min \limits_{\boldsymbol{\theta} \in \mathcal{R}^d} \{ f (\boldsymbol{\theta}, \mathcal{D})\}:= \frac{1}{n} \sum_{i=1}^{n} f(\boldsymbol{\theta}, x_{i}),
    \end{equation}
    where $x_{i}\in\mathcal{D}$ is the $i$-th training sample of the total training dataset $\mathcal{D}$ and $n$ is the number of training samples.
\end{definition}
Stochastic Gradient Descent (SGD) is an optimization algorithm commonly used in LLM fine-tuning. It's a variant of the gradient descent algorithm that's designed to handle large datasets more efficiently.
\begin{definition}[Stochastic gradient descent]
    SGD is a differentially private optimizer with learning rate $\eta$ that updates parameters as 
    \begin{equation}
        \boldsymbol{\theta}_{t}=\boldsymbol{\theta}_{t-1} - \eta\cdot\nabla f(\boldsymbol{\theta}_{t-1},\xi_{t}),
    \end{equation}
    where $\xi_{t}\in\mathcal{D}$ is the minibatch at time $t$ and $\nabla f(\boldsymbol{\theta}_{t-1},\xi_{t})$ is the average of gradients estimated by back propagation on $\xi_{t}$.
\end{definition}

\subsection{Differential Privacy}
Differential Privacy (DP) aims to provide a rigorous mathematical definition for privacy guarantees.
\begin{definition}[Differential Privacy \cite{dwork2006calibrating}]
    A randomized mechanism $\mathcal{A} : \mathcal{X} \to \mathcal{S}$ is called $(\epsilon,\delta)$ - differential private with respect to $\operatorname{d}$ if for every pair of adjacent datasets $X, X' \in \mathcal{X}$ satisfying $\operatorname{d}(X,X')\leq1$ and every subset of outputs $s \in \mathcal{S}$ it holds that:
    \begin{equation}
        \mathbb{P} \left[\mathcal{A}(X) \in s\right] \leq e^\epsilon * \mathbb{P} \left[\mathcal{A}(X') \in s\right] + \delta ,
    \end{equation}
    where $\operatorname{d}:\mathcal{X}^2 \rightarrow [0,\infty)$ be the distance between two datasets.
\end{definition}

DP-SGD is a privacy-preserving variant of SGD, designed to train with strong guarantees of differential privacy. 

\begin{definition}[DP-SGD \cite{abadi2016deep}]
    DP-SGD optimizes with clipping threshold $C$, noise scale $\sigma$, learning rate $\eta$ and updates parameters as 
    \begin{equation}
        \boldsymbol{\theta}_{t}=\boldsymbol{\theta}_{t-1} - \eta\cdot\frac{1}{m}\left(\sum_{i}\hat{g}(\boldsymbol{\theta}_{t-1},x_i)+N(0,\sigma^2 C^2\textbf{I}_d)\right),
    \end{equation}
    where $\hat{g}(\boldsymbol{\theta}_{t-1},x_i)$ is the gradient clipped by clipping threshold $C$, $\hat{g}(\boldsymbol{\theta}_{t-1},x_i)=g(\boldsymbol{\theta}_{t-1},x_i)/\max(1,\frac{||g(\boldsymbol{\theta}_{t-1},x_i)||_2}{C})$ and $g(\boldsymbol{\theta}_{t-1},x_i)$ denote the true gradient on data point $x_i$.
\end{definition} 

Moments accountant method tracks the accumulation of privacy loss over multiple iterations. By using the moments accountant, it is possible to estimate the overall privacy loss over a sequence of operations. 

\begin{lemma}[Moments Accountant (Poisson Subsampling) \cite{abadi2016deep}]\label{lemma ma}
    There exist constant $c_1$ and $c_2$ so that given the sampling probability $q=m/n$ and the number of steps $T$, for any $\epsilon< c_1q^2T$, DP-SGD is $(\epsilon,\delta)$-differentially private for any $\delta> 0$ if we choose
    \begin{equation}
        \sigma \geq c_2 \frac{q\sqrt{T\log(1/\delta)}}{\epsilon}.
    \end{equation}
\end{lemma}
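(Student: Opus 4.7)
The plan is to invoke the moments accountant framework of \cite{abadi2016deep}. The central object is the log-moment generating function of the privacy loss: for neighboring datasets $D, D'$ and a mechanism $\mathcal{M}$ acting on $D$,
\[
\alpha_{\mathcal{M}}(\lambda) \;\defeq\; \sup_{D,D'} \log \mathbb{E}_{o \sim \mathcal{M}(D)}\!\Bigl[\Bigl(\tfrac{\Pr[\mathcal{M}(D)=o]}{\Pr[\mathcal{M}(D')=o]}\Bigr)^{\!\lambda}\Bigr].
\]
The argument has three ingredients: (i) a composition property making $\alpha$ additive across iterations; (ii) a tail bound converting moments into $(\epsilon,\delta)$-DP; and (iii) a tight per-step moment bound for the Poisson-subsampled Gaussian mechanism that constitutes each DP-SGD update.

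For (i), I would show by the tower rule applied to the multiplicative privacy loss that the adaptive composition of $T$ sub-mechanisms satisfies $\alpha_{\mathcal{M}}(\lambda) \leq \sum_{t=1}^{T} \alpha_{\mathcal{M}_t}(\lambda)$, using that conditioning on previous outputs does not break the worst-case bound. For (ii), Markov's inequality on $\exp(\lambda \cdot \mathrm{loss})$ yields $\delta \leq \min_{\lambda \geq 0} \exp\bigl(\alpha_{\mathcal{M}}(\lambda) - \lambda \epsilon\bigr)$, so it suffices to exhibit a single $\lambda$ realising the target $\delta$. The real technical work is (iii): by rotational invariance of the Gaussian and the fact that the clipped per-sample gradient has $\ell_2$-sensitivity at most $C$, the problem reduces to comparing $\mu_0 = N(0,\sigma^2)$ with the mixture $\mu_1 = (1-q) N(0,\sigma^2) + q N(1,\sigma^2)$. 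Expanding $(\mu_1/\mu_0)^{\lambda}$ by the binomial theorem and controlling each term with standard Gaussian moment identities yields, for small $q$ and $\lambda \leq \sigma^2 \log(1/(q\sigma))$, the bound $\alpha_t(\lambda) \leq c\, q^2 \lambda^2 / \sigma^2$ for an absolute constant $c$.

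To finish, combine the ingredients: the composed moment is at most $T c q^2 \lambda^2 / \sigma^2$, so picking $\lambda = \log(1/\delta)/\epsilon$ and forcing $T c q^2 \lambda^2/\sigma^2 \leq \lambda\epsilon/2$ gives $\sigma^2 \geq c' q^2 T \log(1/\delta)/\epsilon^2$, which is exactly the claimed lower bound on $\sigma$. The side condition $\epsilon < c_1 q^2 T$ in the statement is precisely what is needed to ensure that this choice of $\lambda$ lies inside the admissible range $\lambda \leq \sigma^2 \log(1/(q\sigma))$ where the per-step bound of (iii) applies. The hard part will be step (iii): a naive application of strong composition from Lemma \ref{composition} would be looser by a factor of $\sqrt{\log(1/\delta)}$, and recovering the stated rate depends critically on the subsampled-Gaussian-specific binomial expansion argument, particularly the careful partitioning of the moment sum into small-order and large-order terms and the tight control of both via Gaussian tail estimates.
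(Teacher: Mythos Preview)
The paper does not actually prove this lemma: it is stated in the preliminaries as a known result imported verbatim from \cite{abadi2016deep}, with no accompanying argument. Your proposal correctly reconstructs the original moments-accountant proof from that reference (additivity of $\alpha_{\mathcal{M}}(\lambda)$ under adaptive composition, the Markov tail bound, and the binomial-expansion control of the subsampled Gaussian moment), so there is nothing in the paper to compare against and your sketch is sound.
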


\begin{lemma}[Privacy amplification by Poisson Subsampling and Uniform Sampling Without Replacement \cite{balle2018privacy}]\label{lemma poisson and uniform}
Due to the convenience of unifrom sampling in convergence analysis, uniform sampling is also adopted in \cite{bassily2021differentially} \cite{arora2023faster}.
\begin{itemize}
    \item \textbf{Poisson Subsampling.} 
    Let $\mathcal{A}:\mathcal{X} \rightarrow \mathcal{S}$ be a $(\epsilon,\delta)$-DP machanism. Then, the mechanism $\mathcal{A}^{\prime}: \mathcal{X}\rightarrow \mathcal{S}$ defined by $\mathcal{A}^{\prime}=\mathcal{A} \circ \mathrm{samp}_{\tau}^{\mathrm{po}}$ where the poisson subsampling mechanism $ \mathrm{samp}_{\tau}^{\mathrm{po}}$ takes a set $x$ and outputs a sample $y$ from the distribution $w=\mathrm{samp}_{\tau}^{\mathrm{po}}(x)$ supported on all set $y\subseteq x$ given by $w(y)=\tau^{|y|}(1-\tau)^{|x|-|y|}$. The mechanism $\mathcal{A}^{\prime}$ is $\left(\epsilon^{\prime}, \delta^{\prime}\right)$-DP, where 
    \begin{equation}
        \epsilon^{\prime}=\log \left(1+\tau\left(e^\epsilon-1\right)\right) , \delta^{\prime}=\tau \delta.
    \end{equation}
    \item \textbf{Uniform Sampling without Replacement.} 
    Let $\mathcal{A}:\mathcal{X}^{r_2} \rightarrow \mathcal{S}$ be a $(\epsilon,\delta)$-DP machanism. Then, the mechanism $\mathcal{A}^{\prime}: \mathcal{X}^{r_1} \rightarrow \mathcal{S}$ defined by $\mathcal{A}^{\prime}=\mathcal{A} \circ \mathrm{samp}_{r_2/r_1}^{\mathrm{wo}}$ where the subsampling mechanism $\mathrm{samp}_{r_2/r_1}^{\mathrm{wo}}$ takes a set $x$ of size $r_1$ and outputs a sample from the uniform distribution $w=\mathrm{samp}_{r_2/r_1}^{\mathrm{wo}}(x)$ over all sebsets $y\subseteq x$ of size $r_2\leq r_1$. The mechanism $\mathcal{A}^{\prime}$ is $\left(\epsilon^{\prime}, \delta^{\prime}\right)$-DP, where 
    \begin{equation}
        \epsilon^{\prime}=\log \left(1+\frac{r_2}{r_1}\left(e^\epsilon-1\right)\right) , \delta^{\prime}=\frac{r_2}{r_1} \delta.
    \end{equation}
    Since Poisson subsampling and uniform sampling without replacement share the same order of privacy amplification. Following the privacy analysis of Moments Accountant, DP-SGD with uniform data sampling without replacement shares the same privacy guarantee as DP-SGD with Poisson subsampling.
\end{itemize}
\end{lemma}

\subsection{Zeroth-Order Optimization}
Zeroth-order gradient estimation does not rely on explicit gradient information to update the parameters of a model. In zeroth-order methods, function evaluations at different points in the parameter space are used to approximate the gradient.
\begin{definition}[Zeroth-order gradient estimation \cite{spall1992multivariate}]
Given a model with parameters $\boldsymbol{\theta}\in\mathcal{R}^d$ and a loss function $f$, SPSA estimates the gradient on a minibatch $\xi$ as 
\begin{equation}
    \nabla f_{\beta}(\boldsymbol{\theta},\xi)=\frac{1}{2\beta}\left(f(\boldsymbol{\theta}+\beta\mathbf{v},\xi)-f(\boldsymbol{\theta}-\beta\mathbf{v},\xi)\right)\mathbf{v} ,
\end{equation}
where $\mathbf{v}\sim N(0,\textbf{I}_{d})$ and $\beta$ is the ZO scale parameter.
\end{definition}
$\textbf{Gaussian Smoothing:}$ Gaussian smoothing is a well-known technique that converts a possibly non-smooth function to a smooth approximation \cite{nesterov2017random}. Given a differentiable function $f:\mathcal{R}^d\rightarrow\mathcal{R}$ and $\beta\geq 0$, we define the Gaussian smoothing of $f$ as $f_{\beta}(\boldsymbol{\theta})=\mathbb{E}_{\mathbf{v}\sim N(0,\textbf{I}_{d})}\left[f(\boldsymbol{\theta}+\beta\mathbf{v})\right]$ where $N(0,\textbf{I}_{d})$ is a standard Gaussian distribution. Gaussian smoothing has the following properties.
\begin{lemma}\label{lemma f to smooth}
    Suppose $f:\mathcal{R}^d\sim\mathcal{R}$ is differentiable and $L$-Lipschitz. Then $\mathrm{(i)}$ $f_{\beta}$ is $L$-Lipschitz; $\mathrm{(ii)} ||f_{\beta}(\boldsymbol{\theta})-f(\boldsymbol{\theta})||\leq L\beta\sqrt{d}$; $\mathrm{(iii)} f_{\beta}$ is differentiable and $\frac{\sqrt{d}L}{\beta}$-smooth; $\mathrm{(iv)}$
    \begin{equation}
        \nabla f_{\beta}(\boldsymbol{\theta})=\mathbb{E}_{\mathbf{v}\sim N(0,\textbf{I}_{d})}\left[\frac{1}{2\beta}\left(f(\boldsymbol{\theta}+\beta\mathbf{v})-f(\boldsymbol{\theta}-\beta\mathbf{v})\right)\mathbf{v}\right].
    \end{equation}
\end{lemma}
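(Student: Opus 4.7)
The plan is to treat the four claims in order, with (i) and (ii) following immediately from Jensen's inequality and $L$-Lipschitzness of $f$, then deriving the gradient formula (iv) via a change-of-variables trick that shifts $\nabla_{\boldsymbol{\theta}}$ onto the smooth Gaussian density, and finally obtaining the smoothness bound (iii) as a direct consequence of (iv) combined with the Lipschitz hypothesis.

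For (i), since $f_\beta(\boldsymbol{\theta})=\mathbb{E}_{\mathbf{v}\sim\mathbb{V}}[f(\boldsymbol{\theta}+\beta\mathbf{v})]$, passing the difference inside the expectation and invoking $L$-Lipschitzness yields $|f_\beta(\boldsymbol{\theta}_1)-f_\beta(\boldsymbol{\theta}_2)|\le\mathbb{E}_{\mathbf{v}}|f(\boldsymbol{\theta}_1+\beta\mathbf{v})-f(\boldsymbol{\theta}_2+\beta\mathbf{v})|\le L\|\boldsymbol{\theta}_1-\boldsymbol{\theta}_2\|$, since the $\beta\mathbf{v}$ cancels in the argument. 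For (ii), the same step with the second point being $\boldsymbol{\theta}$ itself gives $|f_\beta(\boldsymbol{\theta})-f(\boldsymbol{\theta})|\le L\beta\,\mathbb{E}[\|\mathbf{v}\|]$, and Jensen's inequality bounds $\mathbb{E}[\|\mathbf{v}\|]\le\sqrt{\mathbb{E}\|\mathbf{v}\|^2}=\sqrt{d}$.

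To prove (iv), I would rewrite $f_\beta(\boldsymbol{\theta})=\int f(\boldsymbol{\theta}+\beta\mathbf{u})\phi(\mathbf{u})\,d\mathbf{u}$ with $\phi$ the standard Gaussian density, change variables via $\mathbf{y}=\boldsymbol{\theta}+\beta\mathbf{u}$ to obtain $f_\beta(\boldsymbol{\theta})=\beta^{-d}\!\int f(\mathbf{y})\phi((\mathbf{y}-\boldsymbol{\theta})/\beta)\,d\mathbf{y}$, and differentiate under the integral in $\boldsymbol{\theta}$. The derivative now acts only on the Gaussian factor, and using $\nabla\phi(\mathbf{u})=-\mathbf{u}\phi(\mathbf{u})$ together with the chain rule, then reversing the substitution, produces $\nabla f_\beta(\boldsymbol{\theta})=\mathbb{E}_{\mathbf{v}}[\beta^{-1}f(\boldsymbol{\theta}+\beta\mathbf{v})\mathbf{v}]$. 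Because $\mathbb{E}[\mathbf{v}]=0$, I may average this with the copy obtained by $\mathbf{v}\mapsto-\mathbf{v}$ (using symmetry of the standard Gaussian), which yields the two-sided symmetric form claimed in (iv).

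For (iii), I use the representation of (iv) (subtracting the constant $f(\boldsymbol{\theta})$, which is permissible since $\mathbb{E}[\mathbf{v}]=0$) to write $\nabla f_\beta(\boldsymbol{\theta}_1)-\nabla f_\beta(\boldsymbol{\theta}_2)=\mathbb{E}_{\mathbf{v}}[\beta^{-1}(f(\boldsymbol{\theta}_1+\beta\mathbf{v})-f(\boldsymbol{\theta}_2+\beta\mathbf{v}))\mathbf{v}]$, then apply the triangle inequality and $L$-Lipschitzness to bound the norm by $(L/\beta)\|\boldsymbol{\theta}_1-\boldsymbol{\theta}_2\|\,\mathbb{E}[\|\mathbf{v}\|]\le(L\sqrt{d}/\beta)\|\boldsymbol{\theta}_1-\boldsymbol{\theta}_2\|$, giving the stated smoothness constant. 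The main technical hurdle is justifying the interchange of $\nabla_{\boldsymbol{\theta}}$ with the integral in (iv); I would handle this by dominated convergence, using $L$-Lipschitzness of $f$ to dominate the difference quotient uniformly by a constant multiple of $(1+\|\mathbf{u}\|)\phi(\mathbf{u})$, which is integrable. Once that interchange is secured the remaining steps are mechanical.
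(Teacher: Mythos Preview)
Your proof is correct and follows the standard route (essentially the argument in Nesterov--Spokoiny, to which the paper implicitly defers). Note that the paper itself does not supply a proof of this lemma: it is stated in the Preliminaries as a known property of Gaussian smoothing, and the appendix only proves Lemma~\ref{lemma conver} and Theorems~\ref{theo stage}--\ref{theo total}. So there is no in-paper proof to compare against; your argument is exactly the kind of self-contained verification one would give, and each of the four parts is handled correctly, including the dominated-convergence justification for differentiating under the integral in (iv). The parenthetical about ``subtracting the constant $f(\boldsymbol{\theta})$'' in (iii) is unnecessary---the one-sided formula $\nabla f_\beta(\boldsymbol{\theta})=\mathbb{E}_{\mathbf{v}}[\beta^{-1}f(\boldsymbol{\theta}+\beta\mathbf{v})\mathbf{v}]$ already gives the difference directly---but it does no harm.
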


Weakly-convex function is a class of “nice-behaviored" non-convex objectives that are close to a convex function which is well studied as non-convexity assumption in \cite{yuan2019stagewise} \cite{bassily2021differentially}.
\begin{definition}[Weakly-convex]
    The loss function $f(\boldsymbol{\theta})$ is $\rho$-weakly-convex if for every $\boldsymbol{\theta},\boldsymbol{\theta}'$, there exist constant $\rho$ that satisfies:
    \begin{equation}
        f(\boldsymbol{\theta}') \geq f(\boldsymbol{\theta}) + \langle\nabla f(\boldsymbol{\theta}),\boldsymbol{\theta}'-\boldsymbol{\theta}\rangle -\frac{\rho}{2}\left\|\boldsymbol{\theta}-\boldsymbol{\theta}'\right\|^2
    \end{equation}
\end{definition}
\begin{assumption}\label{assum 1}
    The function $f_{\beta}(\boldsymbol{\theta},x)$ is $L$-Lipschitz and $\frac{\sqrt{d}L}{\beta}$-smooth. There exist $\gamma$ that for every $x\in\mathcal{D}$, we have
    \begin{equation}
        \mathbb{E}\left[\left\|\nabla F_{\beta}(\boldsymbol{\theta})-\nabla f_{\beta}(\boldsymbol{\theta},x)\right\|^2\right] \leq \gamma^2.
    \end{equation}
\end{assumption}
It is notable that many papers have proposed and analyzed deterministic/stochastic optimization algorithms under the PL condition \cite{karimi2016linear} \cite{bassily2018exponential}.
\begin{assumption}\label{assum 2}
    The function $f_{\beta}(\boldsymbol{\theta},\mathcal{D})$ satisfies $\mu$-PL condition if for any $\boldsymbol{\theta}\in\mathcal{R}^d$ we have
    \begin{equation}
        \left\|\boldsymbol{\theta}-\boldsymbol{\theta}^*\right\|^2 \leq \frac{1}{2\mu}\left(f_{\beta}(\boldsymbol{\theta},\mathcal{D})-f_{\beta}(\boldsymbol{\theta}^*,\mathcal{D})\right).
    \end{equation}
\end{assumption}

\section{Methodology}
\label{sec:methodology}
In Section \ref{sec DPZOFT}, we introduce differential private zeroth-order fine-tuning and describe our novel DP noise injection method for zeroth-order. In Section \ref{sec dynamic}, we propose zeroth-order gradient estimation with dynamic ZO scale parameter. In Section \ref{sec stage}, we propose DP-ZOSO, the first stagewise DP-ZO fine-tuning method to our knowledge, and provide theoretical analysis for both privacy and convergence. In Section \ref{sec pruning}, we propose DP-ZOPO, the first stagewise DP-ZO fine-tuning method with data-free pruning to guide the model to update on important directions, whose schematic diagram is shown in Figure \ref{fig_pruning}.
\subsection{Differential Private Zeroth-Order Fine-Tuning}\label{sec DPZOFT}

The zeroth-order gradient is estimated by calculating the difference of the function on two points $\boldsymbol{\theta}+\beta\mathbf{v}$ and $\boldsymbol{\theta}- \beta\mathbf{v}$. Instead of clipping both the loss function $f(\boldsymbol{\theta}+\beta\mathbf{v})$ and $f(\boldsymbol{\theta}-\beta\mathbf{v})$, we clip the difference of the function on two points since the absolute value of the difference is much smaller than the loss function itself (3.2e-5 compared to 0.45 in average).

For privacy guarantee, Gaussian noise is injected into the clipped gradient during training. However, in the zeroth-order gradient descent algorithm, the direction of the approximate gradient $\nabla f_{\beta}(\boldsymbol{\theta})$, denoted by $\mathbf{v}$, is sampled from the standard Gaussian distribution independent of the private data. 

Compared to adding Gaussian to all dimensions of the gradient in DP-SGD, we propose a novel method of noise injection by adding noise $N(0,\sigma^2C^2)$ to $\frac{1}{2\beta}\left(f(\boldsymbol{\theta}+\beta\mathbf{v})-f(\boldsymbol{\theta}-\beta\mathbf{v})\right)$, achieving $(\epsilon,\delta)$-DP guarantee. By standard post processing, the approximate gradient $\nabla f_{\beta}(\boldsymbol{\theta})$ is also $(\epsilon,\delta)$-DP since the direction $\mathbf{v}$ is independent of private data. We also derive experimental results that with straightforward DP noise injection, the model fails to converge and exhibits output patterns almost indistinguishable from random noise.

\subsection{Zeroth-Order Gradient Estimation with Dynamic ZO scale parameter}\label{sec dynamic}

The zeroth-order method suffers from utility loss owing to the bias between the actual gradient and the zeroth-order gradient. Zeroth-order gradient is estimated by calculating the difference of the function on two points $\boldsymbol{\theta}+\beta\mathbf{v}$ and $\boldsymbol{\theta}- \beta\mathbf{v}$ where $\beta$ is the ZO scale parameter. The choice of ZO scale parameter $\beta$ has only been seen as a hyperparameter in \cite{malladi2023fine} \cite{ghadimi2013stochastic}. Shi et al. \cite{shi2022gradient} pointed out that the ZO scale parameter should be as small as possible but can be set too small in practice since the accuracy of the computing system is limited. However, when the model is close to convergence, the difference of the loss function on two points can be quite small such that the two-point estimated gradient appears disproportionately large when the ZO scale parameter is small. Thus, we propose dynamic zeroth-order gradient estimation with an increasing ZO scale parameter, reducing the bias of the zeroth-order gradient and offering a stabilization effect.

Furthermore, we find that zeroth-order gradients with bigger ZO scale parameters have the lower possibility of zeroth-order gradient getting clipped (detailed in Table \ref{table7}). More results of training loss with fix and dynamic ZO scale parameter are detailed in Table \ref{table dynamic scale}.




\begin{table}[tb]
\vspace{1.0em}
\caption{Clipping bias of RoBERTa-large fine-tuned on 1000 examples of SNLI with different ZO scale parameters. $\mathbb{P}(\mathrm{Clip})$ denotes the possibility of getting clipped.}
\centering
\begin{tabular}{c c c c c c}
\toprule
\specialrule{0em}{1pt}{1pt}
$\beta$ & 1e-6 & 2e-6 & 4e-6 & 6e-6 \\
\hline
\specialrule{0em}{1pt}{1pt}
\specialrule{0em}{1pt}{1pt}
$\mathbb{P}(\mathrm{Clip})$ & 13.40\% & 12.20\% & 11.70\% & 11.60\% \\
\bottomrule
\end{tabular}\label{table7}
\end{table}

\begin{figure}[tp]
    \flushleft
    \includegraphics[width=\linewidth]{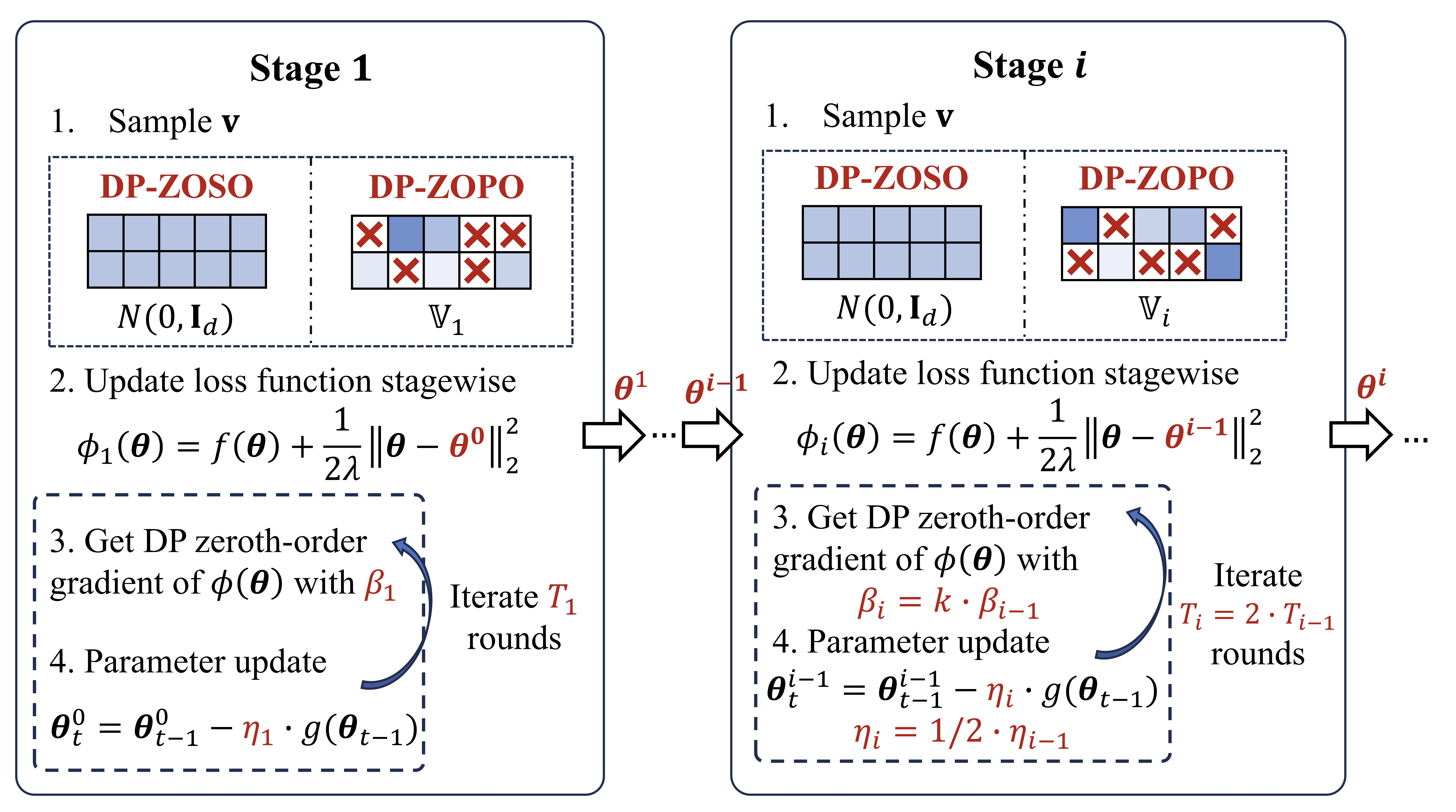}
    \caption{This figure illustrates the parameter variations in the stagewise algorithm and highlights the key difference between DP-ZOSO and DP-ZOPO, which lies in the sampling space of $\mathbf{v}$. The darker-colored dimensions represent those sampled from a normal distribution with larger standard deviation. The "X" denotes the value of the dimension is 0.}
    \label{fig_algorithm}
\end{figure}

\subsection{DP-ZO Stagewise Fine-Tuning}\label{sec stage}

First, let us propose algorithm \textbf{D}ifferentially \textbf{P}rivate \textbf{Z}eroth-\textbf{O}rder \textbf{S}tagewise \textbf{O}ptimizer (DP-ZOSO) in Algorithm \ref{alg:1}. We present the algorithm workflow in Figure \ref{fig_algorithm}. The training process is divided into $S$ stages. At the $s$-th stage, the regularization term $\frac{1}{2\lambda}\left\|\boldsymbol{\theta}-\boldsymbol{\theta}^{s-1}\right\|_2^2$ plays a crucial role in converting the weakly-convex loss function $f_{\beta_s}(\boldsymbol{\theta})$ to convex or strongly-convex $\phi_s(\boldsymbol{\theta})$, which divides the more difficult non-convex objective into easier and more tractable ones for each stage. The regularized function $\phi_s(\boldsymbol{\theta})$ is updated with the reference point $\boldsymbol{\theta}^{s-1}$ which is the returned solution from the previous stage. In earlier stages, the gradient approximation error is controlled to be smaller with smaller ZO scale parameter, and together with a larger learning rate, it allows the LLM fine-tuning to approach the optimum more quickly. In the later stages, we need to deliberately increase the gradient approximation error, together with a decreased learning rate, offering a stabilization effect on the fine-tuning trajectory.

\begin{algorithm}[htb]
    \SetAlgoLined
    \caption{DP-ZOSO (\textbf{D}ifferentially \textbf{P}rivate \textbf{Z}eroth-\textbf{O}rder \textbf{S}tagewise \textbf{O}ptimizer)}
    \label{alg:1}
    \Input{Initial point $\boldsymbol{\theta}^{0}\in\mathcal{R}^d$, initial stepsize $\eta_0\ge 0$, initial ZO scale parameter $\beta_0$, regularization parameter $\lambda$, initial iteration number $T_0$,  number of stage $S$, dataset $\mathcal{D}$}
    \For{$s=1$ to $S$}
    {
    $\beta_s=k\cdot\beta_{s-1},\ T_{s}=2\cdot T_{s-1},\ \eta_s=\eta_{s-1}/2$\;
    $\phi_s(\boldsymbol{\theta})=f_{\beta_s}(\boldsymbol{\theta})+\frac{1}{2\lambda}\left\|\boldsymbol{\theta}-\boldsymbol{\theta}^{s-1}\right\|_2^2$\;
    \small$\boldsymbol{\theta}^{s} = \textbf{DP-ZOO}(\phi_s(\boldsymbol{\theta}),\boldsymbol{\theta}^{s-1},\beta_s,T_s,\eta_s,N(0,\textbf{I}_{d}),\mathcal{D})$
    }
    \textbf{return} final model parameter $\boldsymbol{\theta}^{S}$
\end{algorithm}

Algorithm \ref{alg:2} is a \textbf{D}ifferentially \textbf{P}rivate \textbf{Z}eroth-\textbf{O}rder \textbf{O}ptimizer (DP-ZOO) that updates parameters in specific distribution $\mathbb{V}$. In the step $t$ of Algorithm \ref{alg:2}, $\xi_t$ is uniformly sampled without replacement from dataset $\mathcal{D}$. Random $P$ vectors $\left\{\mathbf{v}_t^p\right\}_{p=1}^{P}\in\mathbb{V}$ are sampled from distribution $\mathbb{V}$. For vector $\mathbf{v}_{t}^p$, we estimate the gradient step $\mathrm{gra}_{f,t}^{i,p}$ of original objective $f(\boldsymbol{\theta}_{t-1})$ on every data point $x_t^i$ in line 7 and then clip it to $\hat{\mathrm{gra}}_{f,t}^{i,p}$ in L2-norm. Gaussian noise is added to the sum of $\hat{\mathrm{gra}}_{f,t}^{i,p}$ and then average. Timing the direction $\mathbf{v}_{t}^p$ is the DP zeroth-order gradient $g_p(\boldsymbol{\theta}_{t-1})$ on direction $\mathbf{v}_{t}^p$. The gradient of the quadratic regularizer is added to the noisy clipped gradient in line 11 (no need to add DP noise). The gradient in iteration $t$ is $g(\boldsymbol{\theta}_{t-1})$ by taking average over $P$ directions. Parameters are then updated by $g(\boldsymbol{\theta}_{t-1})$ with learning rate $\eta_s$. We choose to add DP on the clipped ZO gradients of the original loss $\hat{\mathrm{gra}}_{f,t}^{i,p}$, because the varying gradient of the regularizer $\frac{1}{\lambda}(\boldsymbol{\theta}_{t-1}-\boldsymbol{\theta}_{0})$ does not access private data which will not cause privacy leakage and will complicate the clipping threshold setting.

\begin{algorithm}[tb]
    \SetAlgoLined
    \caption{\small DP-ZOO$(\phi_s(\boldsymbol{\theta}),\boldsymbol{\theta}^{s-1},\beta_s,T_s,\eta_s,\mathbb{V},\mathcal{D})$}
    \label{alg:2}
    \Input{stepsize $\eta_s\ge 0$, ZO scale parameter $\beta_s$, parameter dimension $d$, sample dataset $\xi_t$, sample size $m$, clipping threshold $C$, vector distribution $\mathbb{V}$ and iteration number $T_s$}
    Set initial parameter $\boldsymbol{\theta}_0=\boldsymbol{\theta}^{s-1}$\;
    \For{$t=1$ to $T_s$}
    {
    Randomly sample $\xi_t=\left\{x_{t}^{i}\right\}_{i=1}^{m}$ from dataset $\mathcal{D}$\;
    Sample $P$ random vectors $\footnotesize\left\{\mathbf{v}_t^p\right\}_{p=1}^{P}\in\mathbb{V}$\;
    \For{$p=1$ to $P$}
    {
    \For{$i=1$ to $m$}
    {
    $\scriptsize\mathrm{gra}_{f,t}^{i,p}=\frac{1}{2\beta_{s}}\left(f(\boldsymbol{\theta}_{t-1}+\beta_{s}\mathbf{v}_{t}^p,x_t^i)-f(\boldsymbol{\theta}_{t-1}- \beta_{s}\mathbf{v}_{t}^p,x_t^i)\right)$ \;
    $ \mathrm{Clip}: \hat{\mathrm{gra}}_{f,t}^{i,p}\gets\mathrm{gra}_{f,t}^{i,p}/\max\left\{1,\frac{|\mathrm{gra}_{f,t}^{i,p}|}{C}\right\}$\;
    }
    $\small g_p(\boldsymbol{\theta}_{t-1})=\frac{1}{m}\left(\sum_{i=1}^{m}\hat{\mathrm{gra}}_{f,t}^{i,p}+N(0,\sigma^2 C^2)\right)\cdot\mathbf{v}_{t}^p$\;
    $g_p(\boldsymbol{\theta}_{t-1})=g_p(\boldsymbol{\theta}_{t-1})+\frac{1}{\lambda}\left(\boldsymbol{\theta}_{t-1}-\boldsymbol{\theta}_0\right)$
    }
    $g(\boldsymbol{\theta}_{t-1})=\frac{1}{P}\sum_{p=1}^{P}g_p(\boldsymbol{\theta}_{t-1})$\;
    $\boldsymbol{\theta}_{t}=\boldsymbol{\theta}_{t-1}-\eta_s\cdot g(\boldsymbol{\theta}_{t-1})$\;
    }
    \textbf{return} $\boldsymbol{\theta}_{T_s}$
\end{algorithm}

\begin{theorem}[Privacy analysis of DP-ZOSO]
    In every stage of Algorithm \ref{alg:1} and in iteration $t$ of DP-ZOO, a random set $\xi_t$ of $m$ samples are sampled out of dataset $\mathcal{D}$ and $P$ random vectors are sampled from standard Gaussian distribution. Gaussian noise is added to the sum of zeroth-order on dataset $\xi_t$ in direction $\mathbf{v}_t^p$ after clipping $\left|\mathrm{gra}_{f,t}^{i,p}\right|$ to $C$. There exist constants $c_1$ and $c_2$, for any $\epsilon< c_1m^2T/n^2$, the overall algorithm is $(\epsilon,\delta)$-DP if
    \begin{equation}
        \sigma^2\geq\frac{ c_2^2 P^2m^2 T\log(P/\delta)}{\epsilon^2n^2}.\label{theo privacy}
    \end{equation}
\end{theorem}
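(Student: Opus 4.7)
The plan is to decompose the privacy analysis of Algorithm \ref{alg:1} into three layers: (i) privacy of a single noisy scalar release inside one inner iteration, (ii) basic composition across the $P$ random directions within that iteration, and (iii) a moments-accountant aggregation across the $T$ total inner iterations. Note that the stagewise wrapper in Algorithm \ref{alg:1} is invisible to the privacy accountant, since each stage merely re-invokes the same DP-ZOO subroutine; privacy loss accumulates only through the cumulative iteration count $T=\sum_s T_s$, and the quadratic regularizer $\frac{1}{2\lambda}\|\boldsymbol{\theta}-\boldsymbol{\theta}^{s-1}\|_2^2$ contributes nothing to sensitivity because it is data-independent.

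First I would fix a direction index $p$ at iteration $t$ and isolate the only data-dependent release, namely $\sum_{i=1}^m \hat{\mathrm{gra}}_{f,t}^{i,p} + N(0,\sigma^2 C^2)$. The direction vector $\mathbf{v}_t^p$ is a fresh Gaussian draw independent of $\mathcal{D}$, the regularizer contribution $\frac{1}{\lambda}\|\boldsymbol{\theta}_{t-1}-\boldsymbol{\theta}_0\|$ depends only on previously released parameters, and the subsequent scalar-by-vector multiplication yielding $g_p(\boldsymbol{\theta}_{t-1})$ is post-processing. Because the clip step enforces $|\hat{\mathrm{gra}}_{f,t}^{i,p}|\le C$, this scalar sum has $L_2$ sensitivity at most $C$ (up to the add/remove vs.\ replace constant absorbable into $c_2$), so the release is the standard sampled Gaussian mechanism with sampling rate $q=m/n$ and noise multiplier $\sigma$. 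Applying the Moments Accountant lemma to the sequence of $T$ such releases, for this single fixed $p$, gives $(\epsilon_0,\delta_0)$-DP whenever $\sigma \ge c_2\, q\sqrt{T\log(1/\delta_0)}/\epsilon_0$, provided $\epsilon_0 < c_1 q^2 T$.

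Next I would compose across the $P$ directions within every iteration via basic $P$-fold composition, yielding an overall $(P\epsilon_0, P\delta_0)$-DP guarantee. Setting $P\epsilon_0=\epsilon$ and $P\delta_0=\delta$ and substituting back into the moments-accountant bound produces the required noise scale $\sigma^2 \ge c_2^2 P^2 m^2 T\log(P/\delta)/(\epsilon^2 n^2)$, while the precondition becomes $\epsilon < c_1 P m^2 T/n^2$, which is implied by (in fact, stronger than) the stated hypothesis $\epsilon < c_1 m^2 T/n^2$ since $P\ge 1$.

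The most delicate point will be justifying the composition across the $P$ directions, since they share the same subsampled minibatch $\xi_t$ and thus are not independent in the usual sense. Basic sequential composition, however, makes no independence assumption on the composed mechanisms, so the $(P\epsilon_0, P\delta_0)$ accounting goes through regardless. A secondary subtlety is confirming that adding the regularizer term before the Gaussian perturbation does not inflate sensitivity: conditioned on the history it is a constant shift of the scalar sum and therefore preserves the $L_2$ sensitivity bound $C$, so the end-to-end post-processing by $\mathbf{v}_t^p$ does not weaken the guarantee. A tighter analysis treating each iteration as a single vector-valued Gaussian mechanism with joint $L_2$ sensitivity $C\sqrt{P}$ would yield a $P$ rather than $P^2$ factor, but the basic-composition route above is what is needed to match the theorem as stated.
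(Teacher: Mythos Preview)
Your proposal follows exactly the paper's (very terse) argument: the per-sample scalar is clipped to sensitivity $C$, the overall budget is split equally across the $P$ directions, the moments accountant handles the $T$ iterations, and multiplication by the data-independent $\mathbf{v}_t^p$ is post-processing---you have simply spelled out in full what the paper's three-sentence proof only gestures at. One caveat on the point you yourself flag as most delicate: the assertion that basic sequential composition ``makes no independence assumption'' is not correct in general (mechanisms that share internal randomness---here the common subsample $\xi_t$---need not compose as if independent), so the clean fix is precisely the alternative you sketch at the end, namely treating each iteration as a single subsampled $P$-dimensional Gaussian with effective noise multiplier $\sigma/\sqrt{P}$, which already yields the tighter factor $P$ and hence a fortiori the stated $P^2$ bound; the paper's own proof is silent on this subtlety.
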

\begin{proof}[Proof of Theorem \ref{theo privacy}]
    In DP-ZOSO, the sensitivity of every ZO estimated $\hat{\mathrm{gra}}_{f,t}^{i,p} (i\in[1,m])$ is clipped to $C$. The zeroth-order gradient on data point $x_t^i$ is estimated in $P$ random directions such that the privacy budget is divided equally into $P$. Since vectors are randomly sampled from standard Gaussian distribution, thus by post-processing, it suffices to show that the privacy of the term $\frac{1}{m}\sum_{i=1}^{m}\hat{\mathrm{gra}}_{f,t}^{i,p}$ is guaranteed. By applying Lemma \ref{lemma ma} and Lemma \ref{lemma poisson and uniform}, DP-ZOSO satisfies $(\epsilon,\delta)$-DP if $\sigma^2\geq\frac{ c_2^2 P^2m^2 T\log(P/\delta)}{\epsilon^2n^2}$.
\end{proof}

Before we analyze the convergence of DP-ZOSO, we will first present the approximation bound for noisy ZO gradients in the following Lemma \ref{lemma var}.
\begin{lemma}\label{lemma var}
    In DP-ZOSO, under assumption \ref{assum 1} and the dimension of the parameters to be updated is $d$ with clipping bound $C$. The error between the gradient $\nabla F_{\beta}(\boldsymbol{\theta})$ on the total dataset and the actual update gradient $\nabla\hat{f}_{\beta}(\boldsymbol{\theta}_t,\xi_{t+1})=\frac{1}{P}\sum_{p=1}^{P}\frac{1}{m}\sum_{i=1}^{m}\mathrm{CLIP}\left(\nabla f_{\beta_s}^{p}(\boldsymbol{\theta}_t,x_{t+1}^{i})\right)+\mathbf{z}_t^p$ is bounded by four terms:
    \begin{equation}
        \begin{split}
            \mathbb{E}[||\nabla& F_{\beta}(\boldsymbol{\theta}_{t})-\nabla\hat{f}_{\beta}(\boldsymbol{\theta}_t,\xi_{t+1})||^2] \\
            \leq&\frac{dc_2^2C^2 P T\log(P/\delta)}{\epsilon^2n^2}+\frac{8dC^2}{ePm}+\frac{64d\beta^2L^4}{Pm}+\frac{\gamma^2}{Pm},
        \end{split}
    \end{equation}
    where the first two terms are due to DP noise and clipping bias and the last two are due to zeroth-order estimation and data sampling errors.
\end{lemma}

Lemma \ref{lemma conver} presents the utility of DP-ZOO which is used in Theorem \ref{theo stage} to derive the total ZO oracle complexity.
\begin{lemma}\label{lemma conver}
    In DP-ZOSO, under assumption \ref{assum 1} and $f(\boldsymbol{\theta},x)$ is a $\rho$-weakly-convex function of $\boldsymbol{\theta}$, by applying DP-ZOO (Algorithm \ref{alg:2}) with $\eta_s\leq\frac{\beta_s}{\sqrt{d}L}$, for any $\boldsymbol{\theta}\in\mathcal{R}^d$, we have
    \begin{equation}
    \begin{split}
        \mathbb{E}&[\phi_{s}(\boldsymbol{\theta}_{T_s})-\phi_{s}(\boldsymbol{\theta})]
        \leq\left(\frac{1}{2\eta_s T_s}+\frac{1}{2T_s\lambda}\right)\left\|\boldsymbol{\theta}_{0}-\boldsymbol{\theta}\right\|^2 \\
        &+\eta_s \left(\frac{dc_2^2C^2PT\log(P/\delta)}{\epsilon^2 n^2}+\frac{64d\beta_s^2L^4}{Pm}+\frac{\gamma^2}{Pm}+\frac{8dC^2}{ePm}\right).\\
    \end{split}
    \end{equation}
\end{lemma}

\begin{theorem}\label{theo stage}
    In DP-ZOSO, suppose assumption \ref{assum 1} holds and the loss function $f(\boldsymbol{\theta},x)$ is $\rho$-weakly-convex of $\boldsymbol{\theta}$. Then by setting $\eta_s=\alpha_{s}\cdot\min\left\{\frac{Pm}{7\gamma^2},\frac{Pme}{56dC^2},\frac{\epsilon^2 n^2}{7dc_2^2C^2PT\log(P/\delta)},\frac{Pm}{448d\beta_S^2L^4}\right\}$ and $\lambda=\frac{7}{2\mu}, \eta_s T_s=\frac{7}{2\mu}$, after $S=\lceil\log(\alpha_0/\alpha)\rceil$ stages, we have
    \begin{equation}
        \phi_{s}(\boldsymbol{\theta}^{S})-\phi_{s}(\boldsymbol{\theta}^*)\leq \alpha.
    \end{equation}
    The total ZO oracle complexity of DP-ZOSO is 
    \begin{equation}
        \mathcal{O}\left(\left(\gamma^2+dC^2+d\beta_S^2L^4+\frac{dC^2P^2m\log(P/\delta)}{\epsilon^2 n^2}\right)\cdot\frac{1}{\mu\alpha}\right).
    \end{equation}
\end{theorem}

\subsection{DP-ZO Stagewise Fine-Tuning with pruning}\label{sec pruning}
The success in parameter-efficient fine-tuning motivates us that fine-tuning a subset of parameters LLMs can significantly improve the performance on downstream tasks. Pruning helps the model to update on more important directions, especially in the setting of differentially private zeroth-order fine-tuning. Compared with first-order optimizer, zeroth-order optimizer in conjunction with pruning yields greater enhancement in private settings (more details in Section \ref{sec_zo-fo pruningg}). 

We propose \textbf{D}ifferentially \textbf{P}rivate Stagewise \textbf{P}runing \textbf{O}ptimizer (DP-ZOPO) in Algorithm \ref{alg:5}. In DP-ZOPO, data-free pruning is employed to guide the model to optimize on more important directions sampled from $\mathbb{V}$. Pruning mask can be calculated at initial and remains static during the whole fine-tuning process, or can be updated during stages with constant or dynamic (increasing) pruning rate. Furthermore, we also propose incremental pruning strategy which is a more conservative pruning strategy.

We define the zeroth-order gradient estimation with pruning $\nabla_\mathbb{V} f_\beta(\boldsymbol{\theta})\overset{\text{def}}{=}\mathbb{E}_{\mathbf{v}\sim \mathbb{V}}\left[\frac{1}{2\beta}\left(f(\boldsymbol{\theta}+\beta\mathbf{v})-f(\boldsymbol{\theta}-\beta\mathbf{v})\right)\mathbf{v}\right]$ which samples vectors $\mathbf{v}$ from distribution $\mathbb{V}$.

\begin{figure}[t]
    \flushleft
    \includegraphics[scale=0.47]{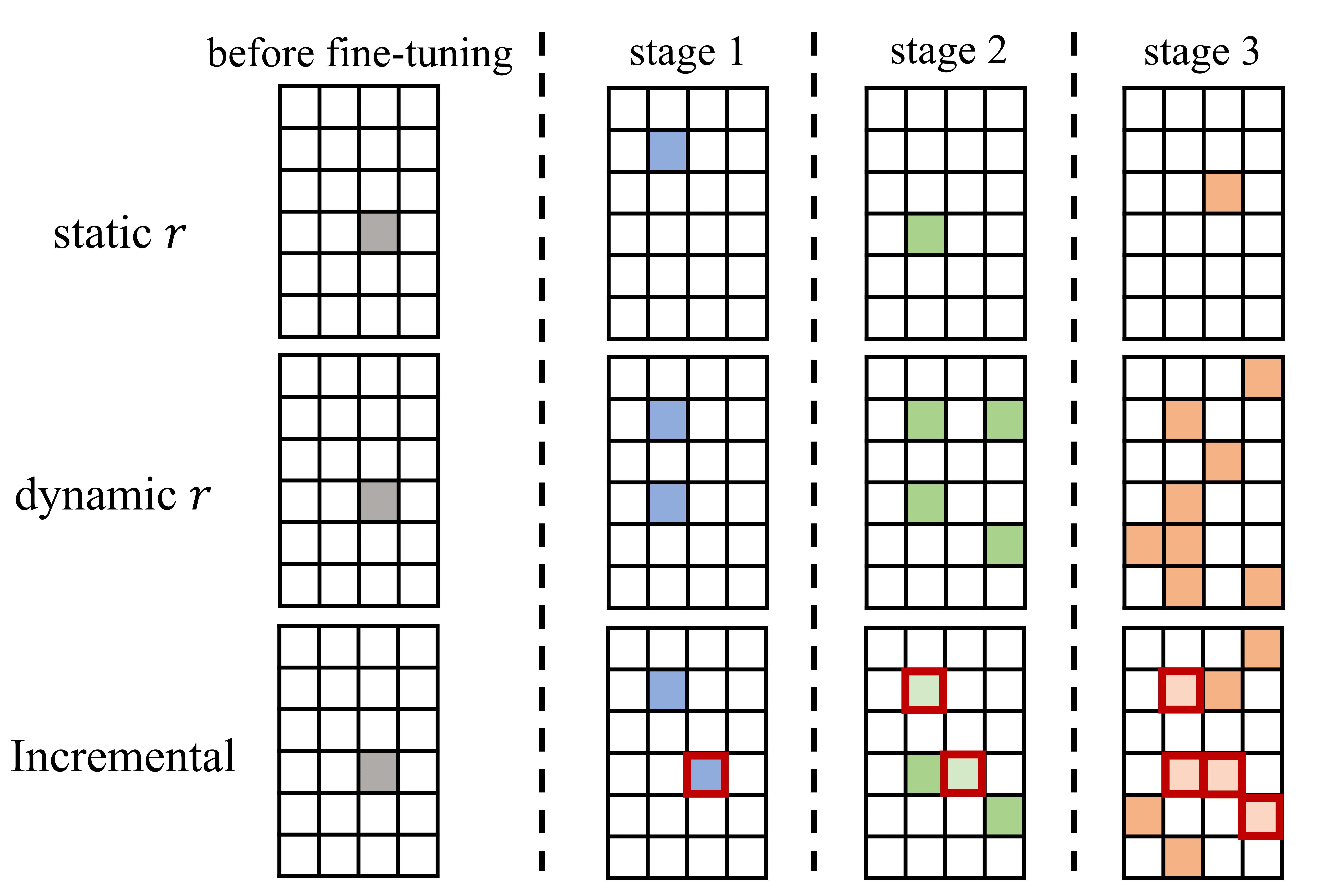}
    \caption{Different pruning strategies of dynamic pruning. The colored squares represent the parameters that require fine-tuning in the current stage. In incremental strategy, the highlighted squares indicate the parameters that were fine-tuned in the previous stage and will be trained in the current stage.}\label{fig_pruning}
\end{figure} 

\subsubsection{Constant Pruning Mask}
Algorithm \ref{alg:3} is a differentially private ZO fine-tuning framework with constant pruning mask calculated at initial. In Algorithm \ref{alg:3}, a new loss function $\mathcal{L}(\boldsymbol{\theta})$ is defined, where $\mathbbm{1}$ is the all ones vector and $|\boldsymbol{\theta}^{[l]}|$ is the element-wise absolute value of parameters in the $l$-th layer. Synaptic saliency scores $\nabla \mathcal{L}_{p}\odot\boldsymbol{\theta}$ are estimated. The scores will be used in Algorithm \ref{alg:4} to update the vector distribution which helps with convergence.
\begin{algorithm}[htb]
    \SetAlgoLined
    \caption{Data-free ZO pruning($\boldsymbol{\theta},r,type$)}
    \label{alg:3}
    \Input{Model parameter $\boldsymbol{\theta}$, pruning rate $r$ and matrix property $type$}
    Define $\mathcal{L}(\boldsymbol{\theta})\gets \mathbbm{1}^{\top}\left(\prod_{l=1}^{L}|\boldsymbol{\theta}^{[l]}|\right)\mathbbm{1}$\;
    Initialize binary mask: $mask=\mathbbm{1}$\;
    Sample $P$ random vectors $\left\{\mathbf{v}_p\right\}_{p=1}^{P}\in N(0,\textbf{I}_d)$ \;
    \For{$p=1$ to $P$}
    {
    $\nabla \mathcal{L}_{p}=\frac{1}{2\beta}\left(\mathcal{L}(\boldsymbol{\theta}+\beta\mathbf{v}_p)-\mathcal{L}(\boldsymbol{\theta}-\beta\mathbf{v}_p)\right)\mathbf{v}_p$
    }
    $Score\gets\frac{1}{P}\sum_{p=1}^{P}\nabla \mathcal{L}_{p}\odot\boldsymbol{\theta}$\;
    \textbf{Update vector distribution $\mathbb{V}$} \;
    $\mathbb{V}\gets \textbf{Importance Matrix}(Score,r,type)\cdot N(0,\mathbf{I}_d)$
\end{algorithm}

Parameter-efficient fine-tuning (PEFT) has been widely used in DP fine-tuning, which only updates a fraction of full parameters and can yield comparable performance. In DP fine-tuning, data-free pruning will not cause extra privacy concerns since no private data is used. Pruning freezes most of the pre-trained parameters and modifies the pre-trained model with small trainable parameters. 

The pruning mask is a sparse vector due to the pruning rate being set at around 1\%. To avoid additional GPU memory overhead, we use $\textbf{to\_sparse()}$ in NumPy to store the indices and values of non-zero coordinates rather than the entire sparse vector, reducing GPU memory from $d$ to $(d+\log d)*r$.


Pruning can be seen as radically setting the direction of the gradient for frozen parameters to be sampled from $N(0,0)$. We further propose $\textbf{Importance Matrix}$ to guide the training of the remaining parameters. Parameters with high scores are sampled from $N(0,x)(x> 1)$ with larger weight, tuned with more effort. The experiment result is shown in Table \ref{table robert}.
We propose rank-based important matrix whose standard deviation follows the uniform distribution from A to B based on the rank of parameters remained.
\begin{algorithm}[htb]
    \SetAlgoLined
    \caption{Importance Matrix$(Score,r,type)$}
    \label{alg:4}
    \Input{Previous pruning matrix $M_0$, pruning rate $r$, Matrix type $type$, list $Score$, upper bound $A$,lower bound $B$}
    Initial importance matrix $M={0}_{\left[d\times d\right]}$\;
    \If{$type==Incremental$}
    {
    $Score[i]=-\infty\ \ \mathrm{if}\ \ M_0[i][i]\neq 0$ 
    }
    $Score=Score$.sort()\;
    $Score=Score[r\cdot \mathrm{len}(Score):]$\;
    \For{$i$-th dimension $\theta_i$ of parameter $\boldsymbol{\theta}$}
    {
    \If{$\theta_i.score\in Score$}{
    \If{$type==pruning-only$}
    {
    $M[i][i]=1$
    }
    \If{$type==rank-based$}
    {
    $rank=Score.\mathrm{index}(\theta_i.score)$\;
    $M[i][i]=A-\frac{(A-B)\cdot rank}{r\cdot\mathrm{len}(Score)}$
    }
    }
    }
    \textbf{return} importance matrix $M$
\end{algorithm}

\subsubsection{Dynamic Pruning Mask}
As iterations progress, the direction in which parameters require the most updates keeps changing, making the dynamic pruning mask essential. To tackle the above problem, we propose a dynamic pruning approach with an increasing pruning rate across stages. At the beginning of each stage, the pruning mask is updated based on the score calculated from the current parameters and will remain unchanged throughout the current stage. The pruning rate will increase as the stage progresses, meaning that more parameters will be fine-tuned, further exploiting the potential of the model.
In the initial stage, DP zeroth-order fine-tuning with a smaller pruning rate accelerates the model to converge, not only because it reduces the scale of DP noise introduced but also because it diminishes the gradient approximation error as the dimensionality decreases. In the following stages, the continuously increasing pruning rate overcomes the limitation of the static pruning strategy, which can only fine-tune a fixed subset of parameters, failing to explore the full potential of the large language model. In Algorithm \ref{alg:5}, the dynamic pruning strategy differs from the static pruning strategy in lines 1 and 3 where the vector distribution is updated during stages.

\begin{algorithm}[htb]
    \SetAlgoLined
    \caption{DP-ZOPO (\textbf{D}ifferentially \textbf{P}rivate Stagewise \textbf{P}runing \textbf{O}ptimizer}
    \label{alg:5}
    \Input{Initial point $\boldsymbol{\theta}^{0}\in\mathcal{R}^d$, initial stepsize $\eta_0\ge 0$, initial ZO scale parameter $\beta_0$, dynamic pruning rate $r_s$, regularization parameter $\lambda$, initial iteration number $T_0$,  number of epoch $S$}
    $\mathbb{V} \gets \mathrm{Algorithm 3}(\boldsymbol{\theta}^{0},r,type)$ if $strategy==static$\;
    \For{$s=1$ to $S$}
    {
     $\small\mathbb{V}_s \gets \mathrm{Algorithm 3}(\boldsymbol{\theta}^{s-1},r_s,type)$ if $strategy==dynamic$\;
     $\beta_s=k\cdot\beta_{s-1},\ T_{s}=2\cdot T_{s-1},\ \eta_s=\eta_{s-1}/2$\;
    $\phi_s(\boldsymbol{\theta})=f_{\beta_s}(\boldsymbol{\theta})+\frac{1}{2\lambda}\left\|\boldsymbol{\theta}-\boldsymbol{\theta}^{s-1}\right\|_2^2$\;
    $\boldsymbol{\theta}^{s} = \textbf{DP-ZOO}(\phi_s(\boldsymbol{\theta}),\boldsymbol{\theta}^{s-1},\beta_s,T_s,\eta_s,\mathbb{V}_s,\mathcal{D})$
    }
    \textbf{return} final model parameter $\boldsymbol{\theta}^{S}$
\end{algorithm}

Due to the introduction of noise in DP fine-tuning, it becomes challenging for the pruning strategy to find the optimal pruning mask. Furthermore, it is deeply concerning that an unreasonable pruning mask can lead to the catastrophic collapse of the entire model. To overcome the above problem, we further propose an incremental dynamic pruning strategy. Incremental dynamic pruning strategy encourages conservative updates of the pruning mask. The parameters requiring fine-tuning in the current stage consist of two parts of equal quantity. One part comprises the parameters that needed updating from the previous stage, while the other part consists of additional parameters calculated based on the scores for the current stage. However, the incremental dynamic pruning strategy will fine-tune fewer parameters compared to the standard dynamic pruning strategy since the parameters that were fine-tuned in the previous stage are not necessarily selected for fine-tuning in the current stage based on the pruning strategy. 

The incremental pruning strategy ensures that the parameters requiring fine-tuning from the previous stage are included in the current stage by setting their scores to negative infinity which is detailed in lines 2-3 of Algorithm \ref{alg:4}.

We present the procedure code of stagewise DP zeroth-order fine tuning with pruning (DP-ZOPO) in Algorithm \ref{alg:5}. If $strategy==static$, the pruning mask is calculated at initial and fixed for the whole fine-tuning process. If $strategy==dynamic$, the pruning mask is dynamically updated for each stage based on the current pruning rate (increasing pruning rate is more optimal). If it is an incremental pruning strategy, the parameters to be updated in the current stage will include parameters that were updated in the previous stage.

DP-ZOPO can be divided into two phases. First, we employ data-free pruning to find the important dimensions of parameters $\boldsymbol{\theta}$ (the parameters to be fine-tuned). Next, we use a ZO-based fine-tuning method to optimize the model on distribution $\mathbb{V}$. We denote $\nabla_{\mathbb{V}}f_{\beta}(\boldsymbol{\theta})$ as the zeroth-order gradient on the direction sampled from the distribution $\mathbb{V}$.

\begin{theorem}[Privacy analysis of DP-ZOPO]
    Since pruning in every stage of DP-ZOPO does not require private data, the calculated vector distribution $\mathbb{V}$ will not leak any information about private data. Thus, DP-ZOPO shares the same privacy guarantee with Algorithm DP-ZOSO.
\end{theorem}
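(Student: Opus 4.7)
The plan is to establish the claim through two observations and a standard composition argument. First, I would audit Algorithm \ref{alg:3} to verify that the pruning phase never queries the private dataset $\mathcal{D}$: the synthetic loss $\mathcal{L}(\boldsymbol{\theta}) = \mathbbm{1}^{\top}\left(\prod_{l=1}^{L}|\boldsymbol{\theta}^{[l]}|\right)\mathbbm{1}$ is a fixed function of the (public) pretrained parameters, the perturbation vectors $\{\mathbf{v}_p\}_{p=1}^{P}$ are drawn from a standard Gaussian that is independent of $\mathcal{D}$, and the saliency scores together with the importance matrix depend only on $\boldsymbol{\theta}$ and this public randomness. Hence the vector distribution $\mathbb{V}$ produced by Algorithm \ref{alg:3} is $(0,0)$-DP with respect to $\mathcal{D}$, and can be treated as public information for the remainder of the analysis.

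Second, I would argue that conditioned on $\mathbb{V}$ being chosen before any data query, invoking Algorithm \ref{alg:1} with this $\mathbb{V}$ enjoys exactly the $(\epsilon,\delta)$-DP bound already proved for Algorithm \ref{alg:1}. The key observation is that the privacy proof of Algorithm \ref{alg:1} only uses: (i) subsampling of $\xi_t$ from $\mathcal{D}$ with ratio $m/n$, yielding privacy amplification; (ii) per-sample clipping of the scalar $\mathrm{gra}_{f,t}^{i,p}$ to magnitude $C$, bounding the $L_2$ sensitivity of the inner sum by $C$; (iii) the Gaussian mechanism on that scalar sum with variance $\sigma^2 C^2$; and (iv) the Moments Accountant composition across the $T_s$ iterations and $P$ directions of each stage and then across the $S$ stages. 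None of these four ingredients depends on the specific covariance structure of the direction distribution $\mathbb{V}$ — they only require that each $\mathbf{v}_t^p$ is drawn independently of $\mathcal{D}$, which holds because $\mathbb{V}$ itself is data-independent. The subsequent multiplication by $\mathbf{v}_t^p$ (line where $g_p$ is formed) is then data-independent post-processing of the noisy clipped sum.

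Combining the two phases by sequential composition completes the proof: the $(0,0)$-DP release of $\mathbb{V}$ followed by the $(\epsilon,\delta)$-DP execution of Algorithm \ref{alg:1} on $\mathbb{V}$ yields an overall mechanism that is $(\epsilon,\delta)$-DP with the same parameters as Algorithm \ref{alg:1}, and post-processing invariance ensures that returning only the final iterate $\boldsymbol{\theta}_{\mathrm{out}}$ does not worsen this guarantee.

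The main obstacle, I expect, will be making the step from standard-Gaussian directions to a data-free but non-isotropic $\mathbb{V}$ fully rigorous: one must confirm that the Moments Accountant bound proved for Algorithm \ref{alg:1} is not implicitly tied to the isotropy of the sampling distribution. I would discharge this by emphasizing that the noise is injected into the scalar difference $\frac{1}{2\beta_s}\bigl(f(\boldsymbol{\theta}_{t-1}+\beta_s\mathbf{v}_t^p,x_t^i)-f(\boldsymbol{\theta}_{t-1}-\beta_s\mathbf{v}_t^p,x_t^i)\bigr)$ \emph{before} multiplication by $\mathbf{v}_t^p$, so the sensitivity analysis and the resulting subsampled Gaussian mechanism never see $\mathbb{V}$, and the accounting carries over verbatim.
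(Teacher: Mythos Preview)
Your proposal is correct and follows exactly the paper's reasoning: the paper's entire argument is already contained in the theorem statement itself (data-free pruning $\Rightarrow$ $\mathbb{V}$ is public $\Rightarrow$ Algorithm~\ref{alg:5} inherits Algorithm~\ref{alg:1}'s guarantee), with no separate proof given. Your write-up is considerably more careful than the paper's---in particular, your explicit observation that the Gaussian noise is added to the clipped \emph{scalar} before multiplication by $\mathbf{v}_t^p$, so the sensitivity and Moments Accountant calculations never depend on the covariance of $\mathbb{V}$, fills a gap the paper leaves implicit.
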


\begin{theorem}\label{theo total}
    In DP-ZOPO, suppose assumption \ref{assum 1} holds and loss function $f(\boldsymbol{\theta},x)$ is $\rho$-weakly-convex of $\boldsymbol{\theta}$. The dimensionality of parameters to be updated in stage $s$ is $d\cdot r_s$. Then by setting $\eta_s$ = $\alpha_{s}$$\cdot\min\{\frac{Pm}{7\gamma^2}$, $\frac{Pme}{56dr_SC^2}$, $\frac{\epsilon^2 n^2}{7dr_Sc_2^2C^2PT\log(P/\delta)}$, $\frac{Pm}{448dr_S\beta_S^2L^4}\}$ and $\lambda=\frac{7}{2\mu}$, $\eta_sT_s=\frac{7}{2\mu}$, after $S=\lceil\log(\alpha_0/\alpha)\rceil$ stages, we have
    \begin{equation}
        \phi_{s}(\boldsymbol{\theta}'^{S})-\phi_{s}(\boldsymbol{\theta}^*)\leq \alpha.
    \end{equation}
    The total ZO oracle complexity of DP-ZOPO is 
    \begin{equation}
        \small\mathcal{O}\left(\left(\gamma^2+dr_SC^2+dr_S\beta_S^2L^4+\frac{dr_SC^2P^2m\log(P/\delta)}{\epsilon^2 n^2}\right)\cdot\frac{1}{\mu\alpha}\right).
    \end{equation}
\end{theorem}
\begin{remark}
    Compared to the total complexity of DP-ZOSO in Theorem \ref{theo stage}, the total complexity of DP-ZOPO is reduced by a factor of $r_S$ theoretically. In the following Section \ref{sec_exp}, DP-ZOPO  is also proven to obtain better convergence results than DP-ZOSO empirically.
\end{remark}

\section{Experiment}\label{sec_exp}

We conduct comprehensive experiments on both medium-sized masked LM (RoBERTa-large, 350M \cite{liu2019roberta}) and large autoregressive LM (OPT-2.7b \cite{zhang2022opt}) in few-shot settings. We present the experimental results of dynamic ZO scale parameter in Section \ref{sec dynamic zo}, the effectiveness of pruning with zeroth-order and first-order method in Section \ref{sec_zo-fo pruningg}. In Section \ref{sec_exp_pruning_static_mask} and Section \ref{sec_dynamic_mask}, we compare the performance of different pruning strategies. In Section \ref{sec generation task}, we report the experiment results on more diverse tasks.

\subsection{Experimental Setup}
We report the metric on downstream tasks that run with random seed=42. For RoBERTa-large, we conduct experiments under the hyperparameters detailed in Appendix \ref{sec exp} Table \ref{roberta-hyper}. We use the hyperparameters in Appendix \ref{sec exp} Table \ref{opt1.3-hyper} and Table \ref{opt2.7-hyper} for our experiments on OPT-1.3b and OPT-2.7b. We consider diverse tasks like classification, question-answering, translation and summarization.

\textbf{Classification task:} We consider datasets: SST-2 \cite{socher2013recursive}, SST-5 \cite{socher2013recursive}, SNLI \cite{bowman2015large}, MNLI \cite{williams2017broad} and TREC \cite{voorhees2000building} on RoBERTa-large. We randomly sample 1000 examples for testing and have 512 examples per class for both training and validation. For OPT, we consider the SuperGLUE dataset collection \cite{wang2019superglue} including: CB \cite{de2019commitmentbank}, BoolQ \cite{clark2019boolq} and MultiRC \cite{khashabi2018looking}. We also include SST-2 \cite{socher2013recursive} in our experiments. We randomly sample 1024 examples for training, 500 examples for validation, and 1000 examples for testing.

\textbf{Question-answering task:} We consider SQuAD \cite{rajpurkar2016squad}, a popular benchmark dataset particularly for machine reading comprehension and question answering systems which consists of questions posed by crowdworkers on a set of Wikipedia articles. We randomly sample 1024 examples for training, 500 examples for validation, and 1000 examples for testing.


\textbf{Translation task:} We focus on the translation direction from French to English and fine-tune with the commonly used Europarl \cite{koehn2005europarl} for sentence-level translation. We fine-tune LLMs using examples that include specifically formatted prompts (\colorbox{gray!10}{\textcolor{red}{French:} \textcolor{darkgreen}{[fr sent]} \textcolor{red}{English:}}) and their corresponding responses (\colorbox{gray!10}{\textcolor{darkgreen}{[en sent]}}). Due to the complexity of text-generation tasks, we use 10000 examples for training and 1000 examples for testing. We use BLEU and COMET \cite{rei2020comet} as evaluation metrics to access the performance of fine-tuned models.

\textbf{Summarization task:} We consider CNN/Daily Mail corpus \cite{nallapati2016abstractive} and XSum \cite{narayan2018don}. The CNN/DM dataset primarily focuses on extractive summarization with multi-sentence summaries, while the XSum dataset emphasizes highly abstractive single-sentence summaries. We use the F1 scores of ROUGE-1/2/L/SUM \cite{lin2004rouge} as our evaluation metrics. We randomly sample 1024 examples for training, 500 examples for validation, and 1000 examples for testing.

\textbf{Models:} We conduct experiments on both masked language model (RoBERTa-large, 350M \cite{liu2019roberta}) and autoregressive language model (OPT-1.3b, OPT-2.7b \cite{zhang2022opt}) in few-shot settings. We include a range of ablation studies, including varying the privacy budget, the pruning strategy and the pruning rate. We also explore both full-parameter tuning and parameter-efficient fine-tuning like prefix-tuning. 

\textbf{Privacy Budgets:} We consider various privacy levels with $\epsilon=[2,4,8]$ and dynamic $\delta=1/n$ where $n$ is the number of private data for $(\epsilon,\delta)$-DP. We also include $\epsilon=\infty$ baseline that is trained without DP noise. In our experiments, we set the clipping threshold to $C=30$. We include the zero-shot does not incur any privacy loss because we evaluate the pre-trained model directly without fine-tuning on private data.

\subsection{Main Results}
We conduct experiments with RoBERTa-large on sentiment classification, natural language inference, and topic classification. We follow \cite{malladi2023fine} \cite{gao2020making} to study the few-shot and many-shot settings, sampling $k$ examples per class for $k=512$. DPZero \cite{zhang2024dpzero} is tested with moment accountant just for fairness. We summarize the results from Table \ref{table robert}.

\begin{table}[tb]
\vspace{1.0em}
\caption{Experiments on RoBERTAa-large. Our method outperforms zero-shot, DPZero and approaches FT with much less memory.}
\centering
\label{table robert}
\setlength{\tabcolsep}{1.9mm}{
\begin{tabular}{c c c c c c c}
\toprule
\multicolumn{2}{l}{Task} & $\textbf{SST-2}$ & $\textbf{SST-5}$ & $\textbf{SNLI}$ & $\textbf{MNLI}$ & $\textbf{TREC}$\\
\multicolumn{2}{l}{Type} & \multicolumn{2}{c}{---sentiment---} & \multicolumn{2}{c}{NL inference} & topic \\
\hline
\specialrule{0em}{1pt}{1pt}
\multicolumn{2}{l}{Zero-shot} & 79.0 & 35.5 & 50.2 & 51.4 & 32.0 \\
\hline
\hline
\specialrule{0em}{1pt}{1pt}
\multicolumn{7}{c}{Small Privacy budget: $\epsilon=2$} \\
\hline
\specialrule{0em}{1pt}{1pt}
\multicolumn{2}{l}{FT} & 93.0 & 52.1 & 86.4 & 82.2 & 96.4 \\
\multicolumn{2}{l}{FT-prefix} & 92.2 & 45.2 & 76.4 & 64.0 & 84.4 \\
\multicolumn{2}{l}{DPZero} & 93.1 & 46.1 & 78.0 & 70.1 & 87.4 \\
\multicolumn{2}{l}{DP-ZOSO} & 92.9 & 45.8 & 76.8 & 66.3 & 83.4 \\
\multicolumn{2}{l}{DP-ZOPO} & 92.8 & 50.5 & 80.1 & 72.0 & 91.4 \\
\hline
\hline
\specialrule{0em}{1pt}{1pt}
\multicolumn{7}{c}{Medium Privacy budget: $\epsilon=4$} \\
\hline
\specialrule{0em}{1pt}{1pt}
\multicolumn{2}{l}{FT} & 93.8 & 52.1 & 87.0 & 82.3 & 96.6 \\
\multicolumn{2}{l}{FT-prefix} & 92.1 & 45.2 & 76.8 & 64.0 & 84.4 \\
\multicolumn{2}{l}{DPZero} & 93.1 & 46.6 & 80.4 & 69.4 & 87.6 \\
\multicolumn{2}{l}{DP-ZOSO} & 92.6 & 42.0 & 78.9 & 67.3 & 83.8 \\
\multicolumn{2}{l}{DP-ZOPO} & 93.6 & 51.1 & 80.7 & 74.9 & 92.0 \\
\hline
\hline
\specialrule{0em}{1pt}{1pt}
\multicolumn{7}{c}{Large Privacy budget: $\epsilon=8$} \\
\hline
\specialrule{0em}{1pt}{1pt}
\multicolumn{2}{l}{FT} & 93.8 & 52.6 & 87.2 & 83.0 & 96.6 \\
\multicolumn{2}{l}{FT-prefix} & 92.2 & 44.8 & 76.9 & 63.9 & 84.6 \\
\multicolumn{2}{l}{DPZero} & 93.4 & 47.0 & 80.9 & 69.2 & 87.2 \\
\multicolumn{2}{l}{DP-ZOSO} & 93.2 & 41.2 & 76.5 & 69.2 & 84.2 \\
\multicolumn{2}{l}{DP-ZOPO} & 93.6 & 49.8 & 81.6 & 74.8 & 91.8 \\

\bottomrule
\end{tabular}}
\end{table}

\textbf{DP-ZOPO works significantly better than zero-shot and other memory-equivalent methods.} On all five tasks, our method optimizes the pre-trained model and consistently performs better than zero-shot and prefix-tuning. We also show for several tasks that DP-ZOPO can outperform DPZero up to 5\% even with same privacy analysis (improve 4.4\% on TREC, 5.5\% on MNLI with $\epsilon=4$). DP-ZOPO outperforms 1.6\% on SST-2 with $\epsilon=4$ compared with the results shown in \cite{tang2024private}.

\textbf{Pruning improves model performance in private settings.} We compare DP-ZOPO (with pruning) and DP-ZOSO (without pruning). We show the performance of DP-ZOPO (static pruning mask) among six different pruning rates in Figure \ref{fig2}. Pruning improves performance greatly in all five tasks in all private settings ($\epsilon=2,4,8$). As a detailed result, the best performance of DP-ZOPO is $74.9\%$ on MNLI, outperforming DP-ZOSO by $7.6\%$ with $\epsilon=4$. In different private settings, the optimal pruning rate varies which will be discussed in Section \ref{sec_exp_pruning_static_mask}. With the promising results from RoBERTa-large, we extend our method to the OPT-2.7b \cite{zhang2022opt}. We select SuperGLUE tasks \cite{wang2019superglue} (including classification and multiple-choice). We randomly sample 1024 examples for training and 1000 test examples for each dataset.

\begin{table}[tb]
\vspace{1.0em}
\caption{Experiments on OPT-2.7b with privacy budget $\epsilon=4,\delta=1/1024$. ICL for in-context learning; FT-prefix for prefix-tuning. DP-ZOSO and DP-ZOPO performs better than zero-shot, ICL, and DPZero with almost the same memory consumption. }
\centering
\label{table opt}
\setlength{\tabcolsep}{1.5mm}{
\begin{tabular}{c c c c c c}
\toprule
\multicolumn{2}{l}{Task} & $\textbf{SST-2}$ & $\textbf{CB}$ & $\textbf{BoolQ}$ & $\textbf{MultiRC}$ \\
\hline
\hline
\specialrule{0em}{1pt}{1pt}
\multicolumn{2}{l}{Zero-shot} & 56.3 & 50.0 & 48.0 & 44.3 \\
\multicolumn{2}{l}{ICL} & 77.6 & 62.5 & 57.9 & 47.8 \\
\hline
\hline
\specialrule{0em}{1pt}{1pt}
\multicolumn{2}{l}{DPZero} & 91.5 & 69.6 & 63.6 & 61.9 \\
\multicolumn{2}{l}{DP-ZOSO} & 90.0 & 62.0 & 63.7 & 50.0 \\
\multicolumn{2}{l}{DP-ZOPO (static)} & \textbf{93.2} & 69.7 & 63.7 & 61.9 \\
\multicolumn{2}{l}{DP-ZOPO (dynamic)} & 92.5 & \textbf{78.8} & \textbf{67.5} & \textbf{65.0} \\
\bottomrule
\end{tabular}}
\end{table}

\textbf{DP-ZOPO works significantly better than zero-shot and ICL.} 
On both classification and multiple-choice tasks, DP-ZOPO exhibits strong performance. On a 2.7b-parameter scale, DP-ZOPO outperforms zero-shot and ICL across all tasks with equivalent memory consumption (details in Table \ref{table opt}). DP-ZOPO outperforms DPZero \cite{zhang2024dpzero} in all five tasks on autoregressive language OPT-2.7b, we improve the accuracy by $3.9\%$ on BoolQ and $9.2\%$ on CB with $\epsilon=4$. We believe that the quality of data-free pruning can greatly influence optimization and we leave how to enhance the performance of pruning as future work.

\textbf{Dynamic pruning outperforms Static pruning.} In dynamic pruning, we select which parameters to update at each stage and update those selected using DP-ZOO. We achieve an improvement of $9.1\%$ on CB and 3.8\% on BoolQ with OPT-2.7b. Same conclusions are also drawn on the RoBERTa-large in Table \ref{table_roberta_cm&dm}. Moreover, incremental pruning strategy is superior to other pruning strategies in most scenarios.

\subsection{Memory Usage}
In this section, we profile the memory usage of zero-shot, ICL, FT, FT-prefix, DP-ZOSO and DP-ZOPO. We test OPT-1.3b and OPT-2.7b with Nvidia A100 GPUs (40GB memory) on SST-2 and report the GPU memory consumption in Figure \ref{fig:memory}. DP-ZOSO costs the same GPU memory consumption as Zero-shot and ICL which only use inference for updates. DP-ZOPO uses pruning mask (the same dimensionality as the model) to freeze model parameters during training which costs quite large GPU memory. However, the pruning rate is usually set quite small which means the pruning mask is a sparse vector. We store non-zero ids of the mask instead of the whole vector to reduce memory consumption overhead. Pruning costs a slight amount of (8\%) GPU memory while improving the model utility greatly.

\begin{figure}[tbp]
    \centering
    \includegraphics[width=0.95\linewidth]{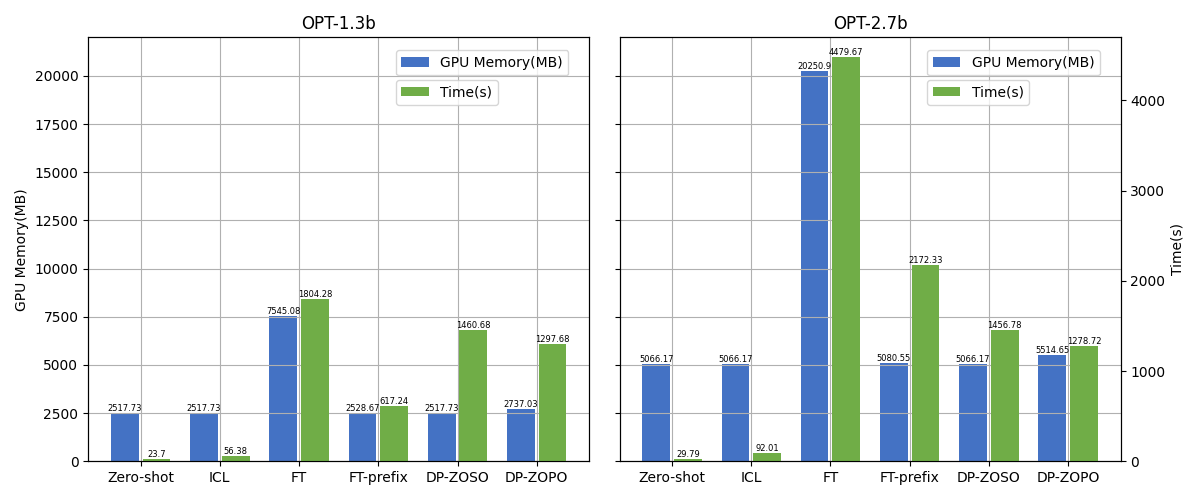}
    \caption{The GPU memory consumption and running time with OPT-1.3b and OPT-2.7b on SST-2. DP-ZOSO and DP-ZOPO cost less GPU memory consumption and running time.}
    \label{fig:memory}
\end{figure}

As shown in Figure \ref{fig:memory}, DP-ZOPO exhibits almost the same ($1.08\times$) memory consumption which offers memory savings of up to 4 times compared to FT. Moreover, if we pre-prune the network and update the remaining parameters using DP-ZOO, the pruning mask can be calculated offline and stored locally, without incurring additional GPU memory overhead. DP-ZOPO also runs faster than first-order methods, iterating 6000 rounds with $58.8\%$ time of FT-prefix and $28.5\%$ of FT.



\subsection{Dynamic ZO scale parameter}\label{sec dynamic zo}
We fine-tune RoBERTa-large on SST-2 with both fixed and dynamic ZO scale parameter to demonstrate the superiority of dynamic ZO scale parameter. We present the final training loss of the fine-tuned model with different ZO scale parameter in Table \ref{table dynamic scale}.
\begin{table}[hb]
\vspace{1.0em}
\caption{Training loss of RoBERTa-large on SST-2 with both fixed and dynamic ZO scale parameter. Dynamic ZO scale parameter is reduced during stages. We denote M1 as the scale parameter decreasing from $1e-5 \sim 1e-4$, M2 as $1e-6 \sim 1e-5$ and M3 as $1e-6 \sim 1e-4$.}
\centering
\label{table dynamic scale}
\setlength{\tabcolsep}{1.2mm}{
\begin{tabular}{c c c|c c c}
\toprule 
\multicolumn{3}{c}{Fix ZO scale parameter} & \multicolumn{3}{c}{Dynamic ZO scale parameter} \\
\hline
\hline
\specialrule{0em}{1pt}{1pt}
1e-4 & 1e-5 & 1e-6 & M1 & M2 & M3 \\
\hline
\specialrule{0em}{1pt}{1pt}
0.31532 & 0.31530 & 0.31639 & 0.31812 & 0.31434 & 0.31368 \\
\bottomrule
\end{tabular}}
\end{table}

All experiments are done under $\epsilon=4$ with 6k steps. M3 represents the algorithm with ZO scale parameter increasing from $1e-6$ to $1e-4$ as the stage advances. It is shown in Table \ref{table dynamic scale} that dynamic ZO scale parameter method M3 ($1e-6 \sim 1e-4$) exhibits the lowest loss on the training set compared with both fixed ZO scale parameter $1e-4$ and $1e-6$. Dynamic M2 and M3 both perform better than all fixed ZO scale parameter, further elaborating on the superiority of the dynamic ZO scale parameter approach. For the following experiments of RoBERTa-large, we set the dynamic ZO scale parameter from $1e-6$ to $1e-4$.

\subsection{Zeroth-Order Pruning VS First-Order Pruning}\label{sec_zo-fo pruningg}
In this section, we compare the effectiveness of pruning in both zeroth-order and first-order methods. In FO method, the gradient is calculated by backpropagation, and only $r$ of the parameters are updated. First-order method in conjunction with pruning reduces the scale of DP noise but also diminishes the model's utility since only a subset of parameters is fine-tuned. 

\begin{figure}[htbp]
    \centering
    \includegraphics[width=0.95\linewidth]{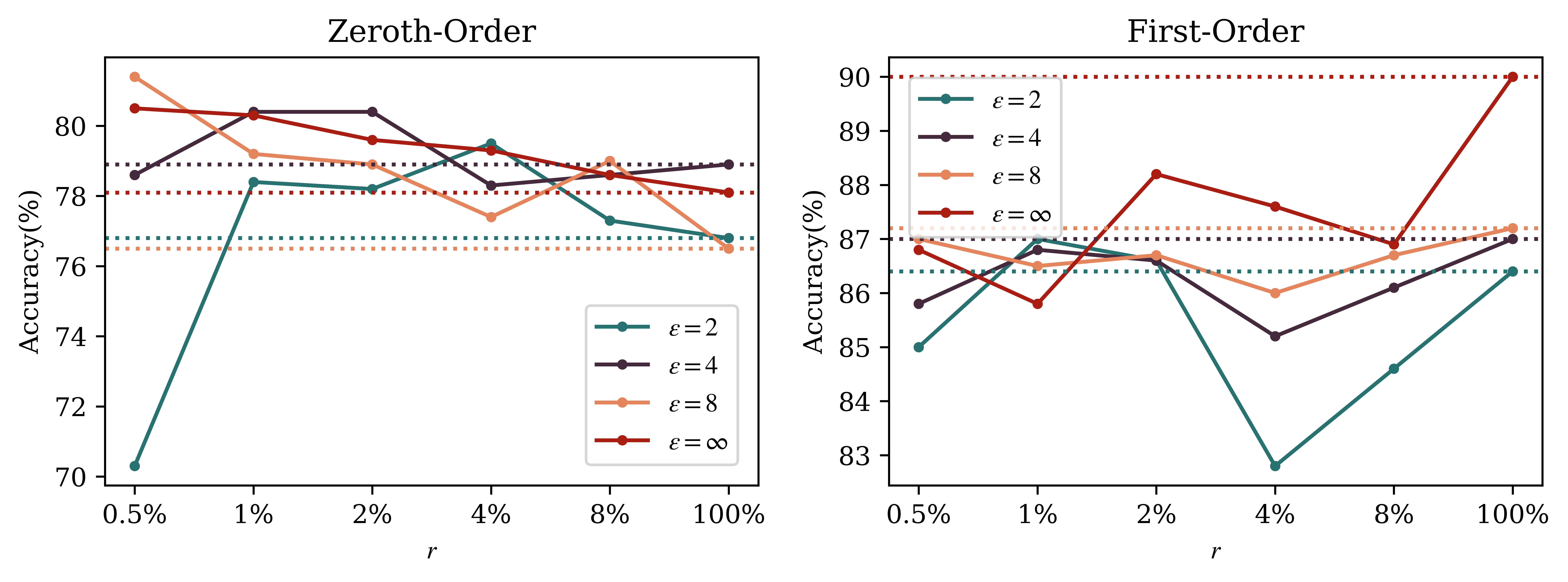}
    \caption{Results of fine-tuned RoBERTa-large on SNLI with zeroth-order and first-order method. In zeroth-order method, fine-tuning with pruning helps with optimization under all private settings.}\label{fig_pruning_zovsfo}
\end{figure} 

In Figure \ref{fig_pruning_zovsfo}, we find that in first-order method, pruning helps with fine-tuning when the privacy budget is small and does not significantly harm fine-tuning, only reducing the accuracy by $0.2\%$ with larger privacy budget. However, in zeroth-order method, pruning not only scales down the DP noise but also diminishes the gradient approximation error as the dimensionality decreases. Under all private settings ($\epsilon=2,4,8,\infty$), pruning helps with zeroth-order fine-tuning and improves 2.7\% when $\epsilon=2$ (more detailed numbers in Table \ref{table_roberta_firstvszero_pruning}). The success of the zeroth-order method with pruning motivates us to investigate the impact of different pruning strategies on DP zeroth-order fine-tuning. We will present more experiment results of different pruning strategies in the following section.

\subsection{Data-free Pruning with static mask}\label{sec_exp_pruning_static_mask}
We first conduct experiments with different learning rates among $\left\{1e-5,2e-5,\dots,1e-4,2e-4\right\}$, aiming to find appropriate learning rates for different pruning rates. We conduct our following experiments under appropriate learning rates. We further study the trends in optimal pruning rate selection under different privacy settings $\epsilon=2,4,8,\infty$.

\textbf{Pruning works well in private ZO fine-tuning.} We show our results in Figure \ref{fig2} that under all private settings, zeroth-order fine-tuning with pruning achieves better performance (above 2\%, more detailed numbers in Table \ref{table_roberta_firstvszero_pruning}). Especially when the privacy budget is large like $\epsilon=8,\infty$, zeroth-order fine-tuning with pruning outperforms without pruning by $4\sim5\%$.

\begin{figure}[t]
    \centering
    \includegraphics[width=0.8\linewidth]{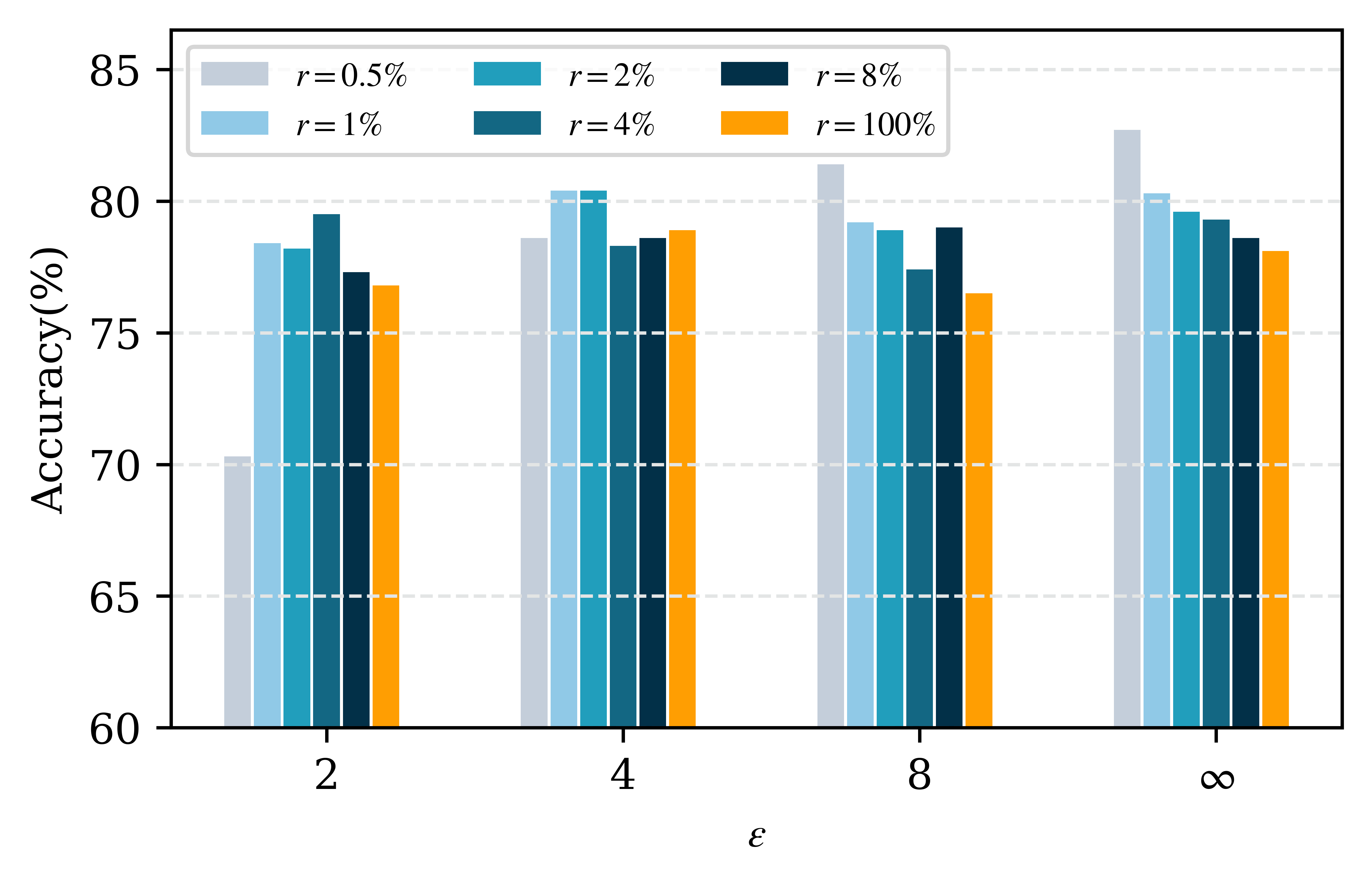}
    \caption{Fine-tuning RoBERTa-large model on dataset SNLI with different pruning rates. Pruning rate $r$ denotes the ratio of parameters to be tuned. Optimal $r$ scales up with the decrease of privacy budget. Detailed numbers in Table \ref{table_roberta_firstvszero_pruning}.}\label{fig2}
\end{figure} 

\textbf{Optimal pruning rate changes with privacy budget.} In our experiments, the pruning rate is seen as a hyperparameter. However, the choice of pruning rate affects the model utility on downstream tasks (see Figure \ref{fig2}). When the privacy budget is large (like $\epsilon=8$), useful parameters can be tuned well, small pruning rate can be chosen to lower the scale of DP noise discussed in Section \ref{sec pruning}. When privacy budget is small (like $\epsilon=2$), $0.5\%$ parameters can not be tuned well for large DP noise, thus bigger pruning rate $r=4\%$ should be chosen for better convergence.


We further conduct experiments on pruning-only method and rank-based important matrix with different intervals (upper bound and lower bound in Algorithm \ref{alg:4}). It is shown in Table \ref{our vs method} that rank-based important matrix outperforms pruning-only method under medium and large privacy budgets while behaves poorly under small privacy budgets.

\begin{table}[tb]
\vspace{1.0em}
\caption{Rank-based important matrix with different intervals on RoBERTa-large. [1.0-1.0] performs best with small privacy budget $\epsilon=2$.}
\centering
\label{our vs method}
\begin{tabular}{c c c c c c}
\toprule
\multicolumn{2}{l}{\textbf{Interval}} & [0.7-1.3] &[0.8-1.2] & [0.9-1.1] & [1.0-1.0] \\
\hline
\hline
\specialrule{0em}{1pt}{1pt}
\multicolumn{6}{c}{Small Privacy budget: $\epsilon=2$} \\
\hline
\specialrule{0em}{1pt}{1pt}
\multicolumn{2}{c}{SST-2} & 91.7 & 92.1 & 92.4 & \textbf{92.7} \\
\multicolumn{2}{c}{SST-5} & 47.3 & 48.3 & 48.1 & \textbf{48.8} \\
\multicolumn{2}{c}{SNLI} & 78.6 & 78.8 & \textbf{78.9} & \textbf{78.9} \\
\multicolumn{2}{c}{MNLI} & 68.7 & 70.2 & 70.1 & \textbf{71.4} \\
\multicolumn{2}{c}{TREC} & 88.4 & 89.0 & 88.2 & \textbf{89.6} \\
\hline
\hline
\specialrule{0em}{1pt}{1pt}
\multicolumn{6}{c}{Medium Privacy budget: $\epsilon=4$} \\
\hline
\specialrule{0em}{1pt}{1pt}
\multicolumn{2}{c}{SST-2} & 92.2 & \textbf{93.3} & 93.2 & 93.0 \\
\multicolumn{2}{c}{SST-5} & 49.2 & 49.2 & \textbf{51.1} & 50.5 \\
\multicolumn{2}{c}{SNLI} & \textbf{80.7} & 80.4 & 80.1 & 80.4 \\
\multicolumn{2}{c}{MNLI} & 69.2 & 71.1 & 70.6 & \textbf{71.5} \\
\multicolumn{2}{c}{TREC} & 90.2 & \textbf{90.4} & 88.8 & 88.2 \\
\hline
\hline
\specialrule{0em}{1pt}{1pt}
\multicolumn{6}{c}{Large Privacy budget: $\epsilon=8$} \\
\hline
\specialrule{0em}{1pt}{1pt}
\multicolumn{2}{c}{SST-2} & 90.8 & \textbf{93.5} & \textbf{93.5} & 93.2 \\
\multicolumn{2}{c}{SST-5} & 49.6 & 48.5 & 49.1 & \textbf{49.8} \\
\multicolumn{2}{c}{SNLI} & 78.7 & 80.1 & 78.0 & \textbf{81.4} \\
\multicolumn{2}{c}{MNLI} & 70.8 & 70.2 & 69.4 & \textbf{73.1} \\
\multicolumn{2}{c}{TREC} & 89.2 & 89.4 & \textbf{91.4} & 90.8 \\
\bottomrule
\end{tabular}
\vspace{-1em}
\end{table}

\begin{figure*}[t]
    \centering
    \includegraphics[width=0.75\linewidth]{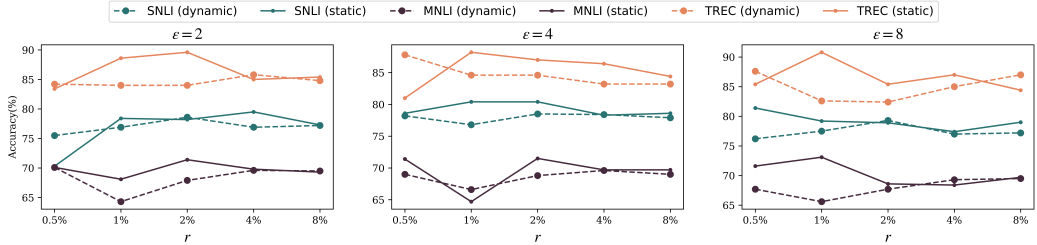}
    \caption{RoBERTa-large on five datasets. Dynamic pruning with a constant pruning rate fails to improve optimization.}\label{fig_dynamic_pruning(constant)&static}
\end{figure*} 

\begin{figure*}[htbp]
    \centering
    \includegraphics[width=\linewidth]{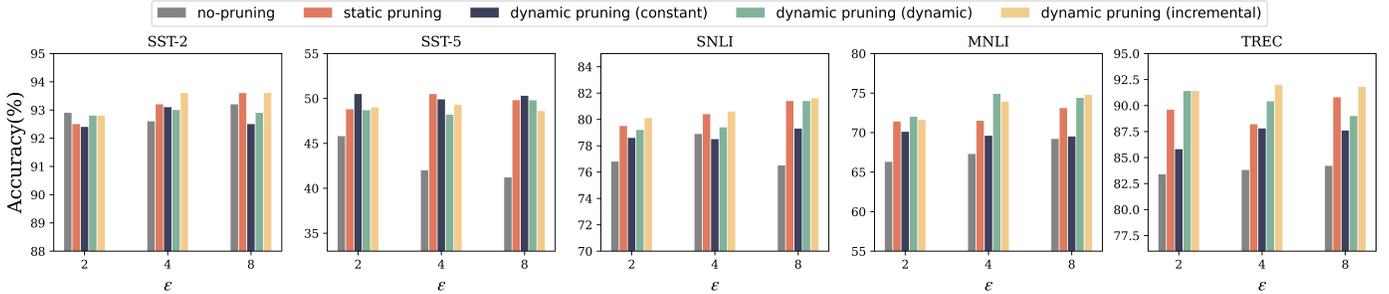}
    \caption{RoBERTa-large fine-tuned with different pruning strategies. Incremental pruning outperforms other pruning strategies.}\label{fig_result_of_pruning_strategy}
\end{figure*} 
We point out that the optimal interval varies with privacy budget $\epsilon$. Taking the interval [0.8-1.2] as an example, the step of zeroth-order gradient on the most important parameter will be sampled from $N(0,1.2)$, thus the faster to converge but at the same time the noise introduced is also larger than other parameters in expectation which makes the parameter hard to find its saddle point. Things get extremely worse with small privacy budget. When $\epsilon=2$, parameters can not be tuned well in the first place, fine-tuning with important matrix makes it even worse. Thus, the optimal interval of important matrix is [1.0-1.0] with small privacy budget $\epsilon=2$. However, optimal interval under different private settings remains an open question to discuss.

\subsection{Data-free Pruning with dynamic mask}\label{sec_dynamic_mask}
In this section, we present two parts of experiments regarding different dynamic pruning strategies. The first part of the experiments follows the pruning strategy where the pruning mask is dynamically updated for each stage, while the pruning rate remains constant. In the second part of the experiments, the pruning mask is dynamically updated for each stage with an increasing pruning rate. We also conduct experiments on the incremental pruning strategy which is a more conservative pruning approach.

\textbf{Dynamic pruning with constant pruning rate. }We compare dynamic and constant pruning strategies under the same (constant) pruning rate. We find that dynamic updating of the pruning mask does not necessarily lead to an improvement in accuracy with the same (fixed) pruning rate. Figure \ref{fig_dynamic_pruning(constant)&static} shows the accuracy under static and dynamic pruning strategy (constant pruning rate) varying pruning rate on five datasets. With constant pruning rate, the static pruning strategy outperforms dynamic pruning strategy in most cases. In Table \ref{table_roberta_cm&dm}, we present the best results of both pruning strategies across all pruning rates which indicates that dynamic pruning with constant pruning rate fails to improve optimization.

We blame the failure on two main reasons: Firstly, in dynamic pruning, the parameters that were fine-tuned in the previous stage were not fully optimized and may not be selected for further fine-tuning in the next stage. As a result, the model may not be well-optimized. Secondly, the introduction of DP noise leads to the low quality of the update of the pruning mask. In Table \ref{table parameter in next stage}, we show the proportion of the parameters fine-tuned in the current stage that are selected for fine-tuning in the next stage. Additionally, we provide the proportion of all parameters involved in the fine-tuning process compared to the total number of model parameters and the final accuracy of the model.

\begin{table}[tb]
\vspace{1em}
\caption{RoBERTa-large on SNLI with $\epsilon=4$. ``Stage 1'' denotes the proportion of parameters fine-tuned in stage 1 that will be fine-tuned in the next stage (stage 2), ``Total'' denotes the proportion of parameters fine-tuned in the whole fine-tuning process, and ``Acc'' denotes the accuracy of the model. ``1-2-4\%''$\ast$ denotes the incremental pruning strategy with increasing pruning rate 1-2-4\%.}
\centering
\linespread{1.5}
\label{table parameter in next stage}
\setlength{\tabcolsep}{1.3mm}{
\begin{tabular}{c|c|c|c|c|c}
\toprule
\specialrule{0em}{1pt}{1pt}
\multicolumn{1}{l}{Method} & \multicolumn{1}{|c|}{Type} & Stage 1 & Stage 2 & Total & Acc \\
\midrule
\midrule
\multicolumn{1}{l|}{\multirow{5}{*}{\small constant $r$}} & \multicolumn{1}{|c|}{\small 0.5-0.5-0.5\%} & 0.54\%  & 0.86\% & 0.75\% & 78.2\% \\
\multicolumn{1}{l|}{} & \multicolumn{1}{|c|}{\small 1-1-1\%} & 1.74\%  & 2.56\% & 1.47\% & 76.8\% \\
\multicolumn{1}{l|}{} & \multicolumn{1}{|c|}{\small 2-2-2\%} & 3.30\% & 4.79\% & 2.89\% & 78.5\% \\
\multicolumn{1}{l|}{} & \multicolumn{1}{|c|}{\small 4-4-4\%} & 8.98\% & 11.11\% & 5.48\% & 78.4\% \\
\multicolumn{1}{l|}{} & \multicolumn{1}{|c|}{\small 8-8-8\%} & 17.60\% & 19.13\% & 10.15\% & 77.9\% \\
\midrule
\multicolumn{1}{l|}{\multirow{6}{*}{\small dynamic $r$}} & \multicolumn{1}{|c|}{\small 0.5-1-2\%} & 1.01\% & 2.86\% & 1.73\% & 79.2\% \\
\multicolumn{1}{l|}{} & \multicolumn{1}{|c|}{\small 0.5-1-2\%$\ast$} & 100\% & 100\% & 2\% & 78.4\% \\
\multicolumn{1}{l|}{} & \multicolumn{1}{|c|}{\small 1-2-4\%} & 3.19\% & 8.27\% & 3.38\% & 79.4\% \\
\multicolumn{1}{l|}{} & \multicolumn{1}{|c|}{\small 1-2-4\%$\ast$} & 100\% & 100\% & 4\% & 80.1\% \\
\multicolumn{1}{l|}{} & \multicolumn{1}{|c|}{\small 2-4-8\%} & 6.00\% & 15.00\% & 6.55\% & 78.7\% \\
\multicolumn{1}{l|}{} & \multicolumn{1}{|c|}{\small 2-4-8\%$\ast$} & 100\% & 100\% & 8\% & \textbf{80.6\%} \\
\bottomrule
\end{tabular}}
\vspace{-1em}
\end{table}

\begin{table}[htb]
\vspace{1em}
\caption{Experiments of four pruning strategies on RoBERTAa-large. All the results are the best result across all pruning rates. ``static*'' denotes the best result when the pruning mask is calculated before fine-tuning and remains static during the whole fine-tuning process. ``dynamic(constant $r$)'' and ``dynamic(dynamic $r$)'' denote the dynamic pruning strategy with constant and dynamic pruning rate.}
\centering
\label{table robert_all4pruning strategy}
\setlength{\tabcolsep}{1.2mm}{
\begin{tabular}{c c c c c c}
\toprule
\multicolumn{1}{l}{Task} & $\textbf{SST-2}$ & $\textbf{SST-5}$ & $\textbf{SNLI}$ & $\textbf{MNLI}$ & $\textbf{TREC}$\\
\hline
\hline
\specialrule{0em}{1pt}{1pt}
\multicolumn{6}{c}{Small Privacy budget: $\epsilon=2$} \\
\hline
\specialrule{0em}{1pt}{1pt}
\multicolumn{1}{l}{\small static*} & 92.5 & 48.8 & 79.5 & 71.4 & 89.6 \\
\multicolumn{1}{l}{\small dynamic(constant $r$)} & 92.4 & \textbf{50.5} & 78.6 & 70.1 & 85.8 \\
\multicolumn{1}{l}{\small dynamic(dynamic $r$)} & \textbf{92.8} & 48.7 & 79.2 & \textbf{72.0} & \textbf{91.4} \\
\multicolumn{1}{l}{\small dynamic(incremental)} & \textbf{92.8} & 49.0 & \textbf{80.1} & 71.6 & \textbf{91.4} \\
\hline
\hline
\specialrule{0em}{1pt}{1pt}
\multicolumn{6}{c}{Medium Privacy budget: $\epsilon=4$} \\
\hline
\specialrule{0em}{1pt}{1pt}
\multicolumn{1}{l}{\small static*} & 93.2 & \textbf{50.5} & 80.4 & 71.5 & 88.2 \\
\multicolumn{1}{l}{\small dynamic(constant $r$)} & 93.1 & 49.9 & 78.5 & 69.6 & 87.8 \\
\multicolumn{1}{l}{\small dynamic(dynamic $r$)} & 93.0 & 48.2 & 79.4 & \textbf{74.9} & 90.4 \\
\multicolumn{1}{l}{\small dynamic(incremental)} & \textbf{93.6} & 49.3 & \textbf{80.6} & 73.9 & \textbf{92.0} \\
\hline
\hline
\specialrule{0em}{1pt}{1pt}
\multicolumn{6}{c}{Large Privacy budget: $\epsilon=8$} \\
\hline
\specialrule{0em}{1pt}{1pt}
\multicolumn{1}{l}{\small static*} & 93.6 & 49.8 & 81.4 & 73.1 & 90.8 \\
\multicolumn{1}{l}{\small dynamic(constant $r$)} & 92.5 & \textbf{50.3} & 79.3 & 69.5 & 87.6 \\
\multicolumn{1}{l}{\small dynamic(dynamic $r$)} & 92.9 & 49.8 & 81.4 & 74.4 & 89.0 \\
\multicolumn{1}{l}{\small dynamic(incremental)} & \textbf{93.6} & 48.6 & \textbf{81.6} & \textbf{74.8} & \textbf{91.8} \\
\bottomrule
\end{tabular}}
\vspace{-1em}
\end{table}

It is shown in Table \ref{table parameter in next stage} that only a minuscule proportion of parameters are selected for fine-tuning in the next stage. For example, in the first stage of 0.5-0.5-0.5\%, only 0.54\% of the parameters that are updated will continue to be fine-tuned in the next stage, which results in parameters not being well-tuned and they will not be fine-tuned furthermore in the following fine-tuning process. This explains why dynamic pruning with a constant rate may diminish the utility of the model compared to static pruning. However, the above issue is effectively addressed with dynamic (increasing) pruning rate and incremental strategy, which further enhance the model's accuracy. As shown in Table \ref{table parameter in next stage}, dynamic pruning with increasing dynamic 1-2-4\% outperforms dynamic pruning with constant pruning rate of all three 1\%, 2\%, and 4\% (improving accuracy by 2.6\%, 0.9\% and 1\%). Incremental dynamic pruning strategy further improves the model accuracy by 0.7\%.

\textbf{Dynamic pruning with dynamic pruning rate and incremental strategy. }
We compare the best results of all four pruning strategies to explore the potential of different pruning strategies in Table \ref{table robert_all4pruning strategy}. As shown in Figure \ref{fig_result_of_pruning_strategy}, dynamic pruning with dynamic (increasing) pruning rate outperforms dynamic pruning with constant pruning rate. For example, on SNLI with $\epsilon=4$, ``1-2-4\%'' outperforms ``1-1-1\%'' by 2.6\% for the reason that ``1-2-4\%'' fine-tunes more parameters (1.47\% compared with 3.38\%) in the whole fine-tuning stage. Compared with ``4-4-4\%'', ``1-2-4\%'' fine-tunes fewer parameters (5.48\% compared with 3.38\%) and outperforms by 1.0\% because fine-tuning with a smaller pruning rate in the early stages can accelerate the convergence. We show in Figure \ref{fig_training_process} the training-set loss of ``1-2-4\%'', ``1-1-1\%'', ``2-2-2\%'' and ``4-4-4\%''. ``1-1-1\%'' converges faster in the initial stages compared to ``1-2-4\%''. However, since ``1-2-4\%'' fine-tunes more parameters, it achieves better convergence with lower training-set loss. Incremental pruning strategy is a more conservative pruning strategy, it inherits the advantages of the static pruning method, fine-tuning the parameters in the previous stage. For example in Table \ref{table robert incremental}, ``2-4-8\%$\ast$'' outperforms static pruning with $r=2\%$ by 0.2\% since more parameters are fine-tuned. Compared with static pruning with $r=8\%$, ``2-4-8\%$\ast$'' achieves a $2\%$ improvement for fast convergence in the early stage. In summary, incremental pruning strategy performs better in most scenarios when it comes to private zeroth-order fine-tuning. We further present our experimental results on OPT-2.7b in Table \ref{optincre_vs}.

\begin{figure}[t]
    \centering
    \includegraphics[width=0.75\linewidth]{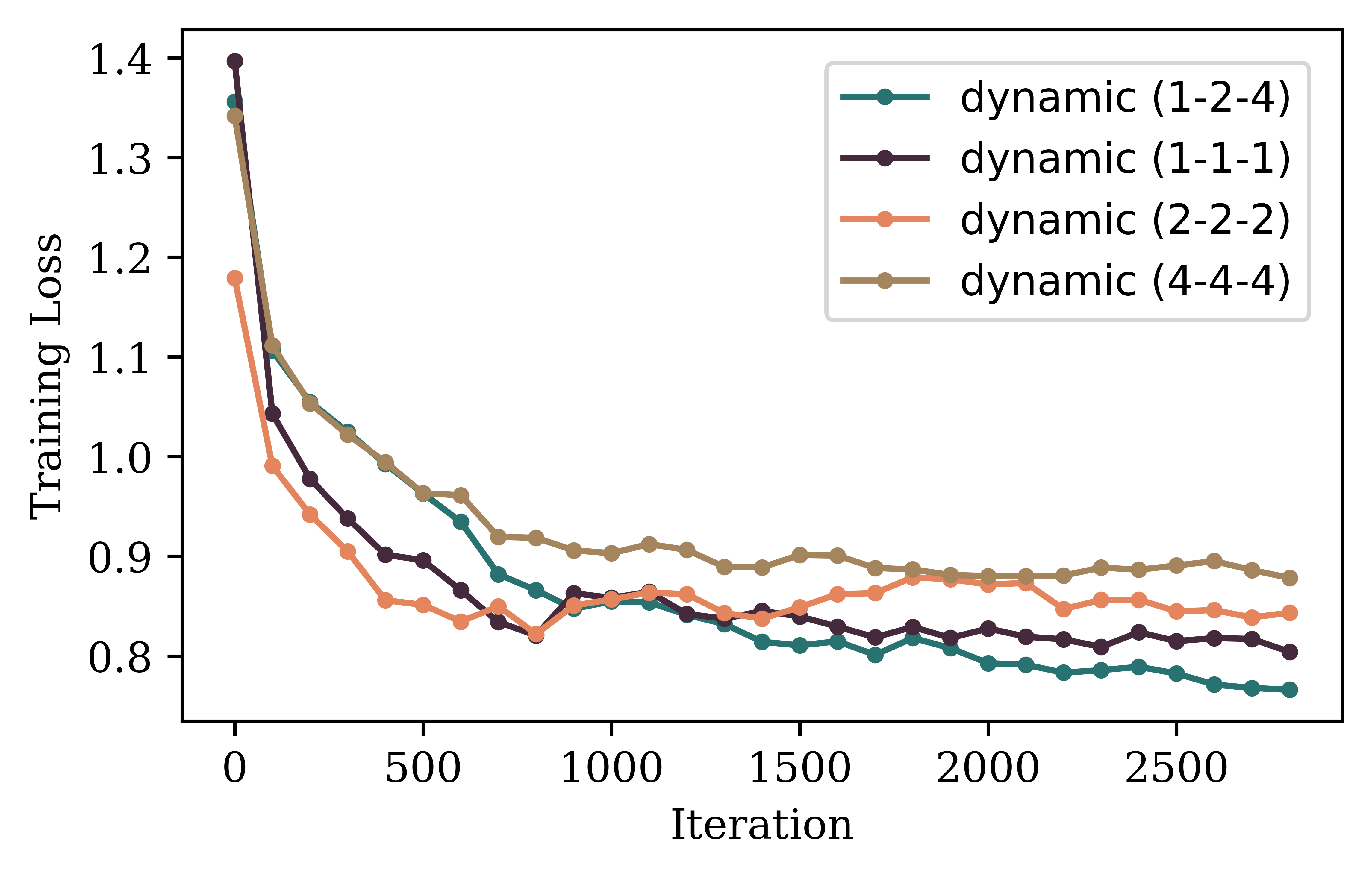}
    \caption{The training-set loss of RoBERTa-large on SNLI with privacy budget $\epsilon=4$. Dynamic pruning with increasing pruning rate ``1-2-4\%'' performs better than both ``1-1-1\%'' and ``4-4-4\%''.}\label{fig_training_process}
\end{figure} 

\begin{figure}[t]
    \vspace{1em}
    \centering
    \includegraphics[width=\linewidth]{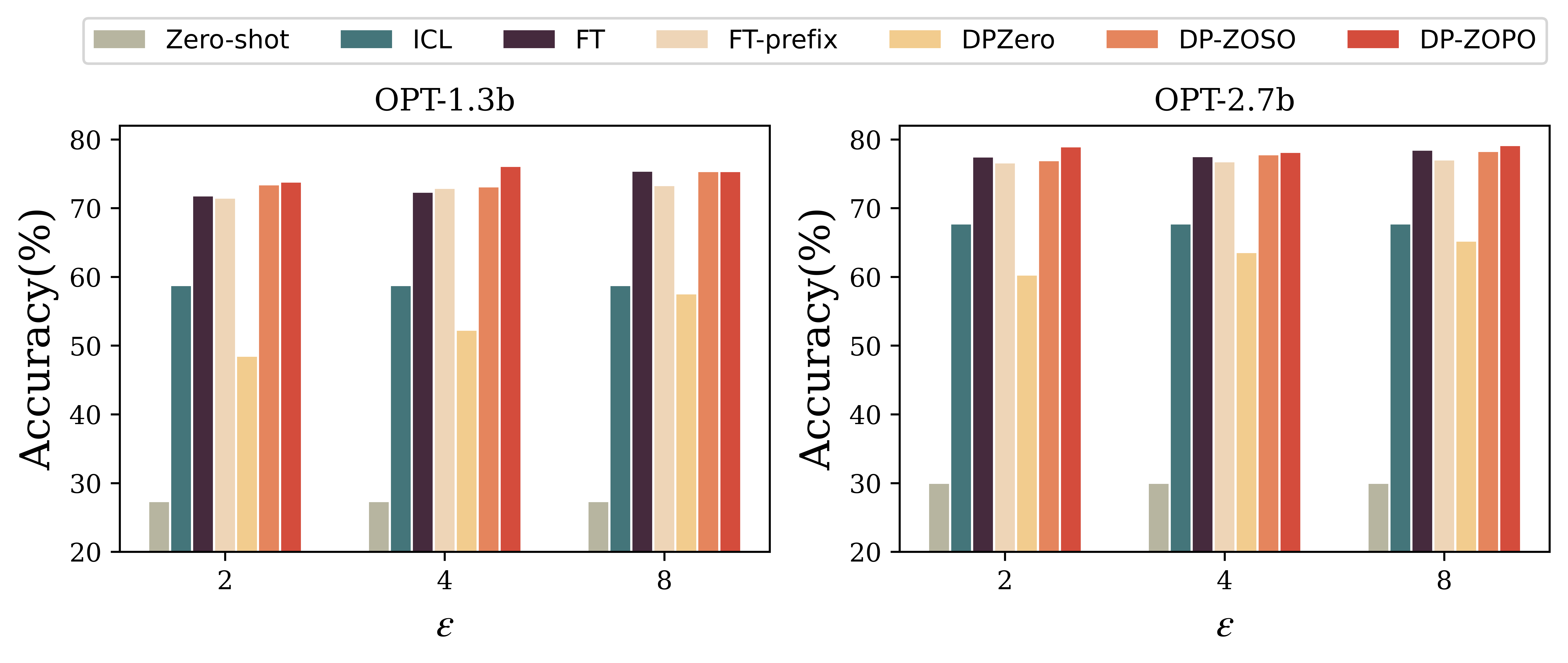}
    \caption{OPT-1.3b and OPT-2.7b fine-tuned on SQuAD. DP-ZOPO performs better than other memory-efficient methods, even FT-prefix and FT under all privacy budgets.}
    \label{fig_QA}
\end{figure}

\subsection{More diverse tasks}\label{sec generation task}
In this section, we present more experimental results pertaining to a broader range of tasks for LLMs, including question-answering, translation, and summarization tasks.

\textbf{Question-answering Task:} We fine-tune OPT-1.3b and OPT-2.7b on SQuAD. As shown in Figure \ref{fig_QA} (detailed in Table \ref{table_squad}), model fine-tuned with DP-ZOPO has better QA accuracy than ICL, DPZero, DP-ZOSO, FT-prefix, even FT in all private settings. Our method far outperforms DPZero \cite{zhang2024dpzero} for its large zeroth-order estimation error, constant learning rate and excessive noise injection.


\begin{table}[t]
    \vspace{1em}
    \centering
    \caption{OPT-1.3b and OPT-2.7b fine-tuned on Europarl.}
    \label{table_translation}
    \begin{tabular}{c c c c c}
    \toprule
        Model & \multicolumn{2}{c}{OPT-1.3b} & \multicolumn{2}{c}{OPT-2.7b} \\
        \hline
        \specialrule{0em}{1pt}{1pt}
        Method & BLEU & COMET & BLEU & COMET \\  
        \hline
        \hline
        \specialrule{0em}{1pt}{1pt}
        Zero-shot & 20.11 & 0.7562 & 25.36 & 0.7990 \\
        ICL & 22.33 & 0.7684 & 27.17 & 0.8105 \\ 
        FT & 21.71 & 0.7695 & 27.11 & 0.8135 \\
        FT-prefix & 20.82 & 0.7609 & 25.21 & 0.8014 \\
        DPZero & 20.34 & 0.7610 & 24.94 & 0.8040 \\
        DP-ZOSO & 21.08 & 0.7567 & 25.44 & 0.8022 \\
        DP-ZOPO & 21.19 & 0.7651 & 26.07 & 0.8044 \\
    \bottomrule
    \end{tabular}
\end{table}

\begin{table}[t]
    \vspace{1em}
    \centering
    \caption{OPT-1.3b fine-tuned on XSum and CNN/DM. DP-ZOPO outperforms ICL and other memory efficient method FT-prefix, DPZero and DP-ZOSO.}
    \label{table_summarization}
    \begin{tabular}{c c c c c c}
    \toprule
         Task & Method & R1 & R2 & RL & R-SUM \\
        \hline
        \hline
        \specialrule{0em}{1pt}{1pt}
        \multirow{6}{*}{XSum} & \multicolumn{1}{l}{Zero-shot} & 18.66 & 3.51 & 13.95 & 13.93 \\
          & \multicolumn{1}{l}{ICL} & 25.15 & 6.92 & 20.07 & 20.07 \\
          & \multicolumn{1}{l}{FT} & 31.10 & 10.55 & 25.06 & 25.03 \\
          & \multicolumn{1}{l}{FT-prefix} & 27.28 & 9.09 & 22.20 & 22.23 \\
          & \multicolumn{1}{l}{DPZero} & 6.40 & 1.20 & 4.66 & 4.66 \\
          & \multicolumn{1}{l}{DP-ZOSO} & 30.13 & 10.34 & 24.43 & 24.45 \\
          & \multicolumn{1}{l}{DP-ZOPO} & 30.49 & 10.44 & 24.64 & 24.63 \\
        \hline
        \specialrule{0em}{1pt}{1pt}
        \multirow{6}{*}{CNN/DM} & \multicolumn{1}{l}{Zero-shot} & 32.54 & 15.07 & 23.11 & 23.17 \\
          & \multicolumn{1}{l}{ICL} & 34.24 & 16.13 & 24.64 & 24.66 \\
          & \multicolumn{1}{l}{FT} & 37.89 & 17.92 & 27.93 & 27.91 \\
          & \multicolumn{1}{l}{FT-prefix} & 31.29 & 14.89 & 22.59 & 22.55 \\
          & \multicolumn{1}{l}{DPZero} & 23.69 & 11.59 & 16.63 & 16.62 \\
          & \multicolumn{1}{l}{DP-ZOSO} & 31.93 & 15.12 & 23.00 & 23.00 \\
          & \multicolumn{1}{l}{DP-ZOPO} & 35.15 & 16.83 & 25.17 & 25.15 \\
        \bottomrule
    \end{tabular}
\end{table}

\textbf{Translation Task:} We fine-tune OPT-1.3b and OPT-2.7b on Europarl. As shown in Table \ref{table_translation}, DP-ZOPO has better results than FT-prefix, DPZero, DP-ZOSO and approaches FT. However, all DP fine-tuning (first-order and zeroth-order) methods are not as good as ICL. We attribute this situation to two reasons: 1. The OPT model is already well-trained, 2. The injection of differential privacy noise is large, although DP-ZOPO performs well among all fine-tuning methods.

\textbf{Summarization Task:} We fine-tune OPT-1.3b on XSum and CNN/DM with $\epsilon=4$. As shown in Table \ref{table_summarization}, models fine-tuned with DP-ZOSO and DP-ZOPO have metrics close to FT while the existing DP zeroth-order method (DPZero) fails to converge the model. DP-ZOPO outperforms ICL, FT-prefix, DPZero and DP-ZOSO. In XSum, DP-ZOPO outperforms FT-prefix by 3.21 on R1, 1.35 on R2, 2.44 on RL, 2.4 on R-SUM and approaches FT with much less GPU consumption. In CNN/DM, DP-ZOPO outperforms FT-prefix by 3.86 on R1, 1.94 on R2, 2.58 on RL and 2.6 on R-SUM.

\section{Conclusion}
\label{sec:conclusion}
We have provided both theoretical and empirical studies among all concurrent works. Detailed theoretical proof of both privacy and convergence for both our algorithms are provided. We have shown that the stagewise DP zeroth-order method with pruning is memory-efficient and can effectively optimize large LMs across many tasks and scales. Further experiments suggest that the optimal pruning rate varies with the privacy budget $\epsilon$. As a limitation, we did not explore the optimal selection of pruning rates for different tasks and different models, which may be regarded as hyperparameter fine-tuning. Moreover, the adaptive choice of optimal pruning rate across stages and the number of stages is worth studying in the future, as are more effective pruning algorithms and pruning strategies.

\section*{Acknowledgment}
We would like to thank the anonymous reviewers for their helpful comments. This work was supported by the National Key Research and Development Program of China (2021YFB3100300), Nature Science Foundation of China (U20A20178 and 62072395), “Pioneer” and “Leading Goose” R$\And$D Program of Zhejiang Province (2024C01169), Nature Science Foundation of China (62206207).

\bibliographystyle{ieeetran.bst}
\bibliography{bib}

\appendix
\section{Appendix}
In Sec \ref{sec proof}, we provide the proof for the oracle complexity of both DP-ZOSO and DP-ZOPO. In Sec \ref{sec exp}, we will provide the experiment settings and more experiment results in Sec \ref{sec more result}.

\subsection{Complete Proof}\label{sec proof}
In this section, we will provide detialed proof for Lemma \ref{lemma var}, \ref{lemma conver} and Theorem \ref{theo stage}, \ref{theo total}.
\begin{lemma}[Restatation of Lemma \ref{lemma var}]
    In DP-ZOSO, under assumption \ref{assum 1} and the dimension of the parameters to be updated is $d$ with clipping bound $C$. The error between the gradient $\nabla F_{\beta}(\boldsymbol{\theta})$ on the total dataset and the actual update gradient $\nabla\hat{f}_{\beta}(\boldsymbol{\theta}_t,\xi_{t+1})=\frac{1}{P}\sum_{p=1}^{P}\frac{1}{m}\sum_{i=1}^{m}\mathrm{CLIP}\left(\nabla f_{\beta_s}^{p}(\boldsymbol{\theta}_t,x_{t+1}^{i})\right)+\mathbf{z}_t^p$ is bounded by:
    \begin{equation}
        \begin{split}
            \mathbb{E}[||\nabla &F_{\beta}(\boldsymbol{\theta}_{t})-\nabla\hat{f}_{\beta}(\boldsymbol{\theta}_t,\xi_{t+1})||^2] \\
            \leq&\frac{dc_2^2C^2 P T\log(P/\delta)}{\epsilon^2n^2}+\frac{8dC^2}{ePm}+\frac{64d\beta^2L^4}{Pm}+\frac{\gamma^2}{Pm}.
        \end{split}
    \end{equation}
\end{lemma}
\begin{proof}[Proof of lemma \ref{lemma var}]
Due to the introduction of DP noise, we have
\begin{equation}\label{proof 1}
    \begin{split}
        &\mathbb{E}\left[\left\|\nabla F_{\beta_s}(\boldsymbol{\theta}_{t})-\nabla\hat{f}_{\beta_s}(\boldsymbol{\theta}_t,\xi_{t+1})\right\|^2\right] \\
        =&\footnotesize\mathbb{E}\left[\left\|\nabla F_{\beta_s}(\boldsymbol{\theta}_{t})-\frac{1}{P}\sum_{p=1}^{P}\frac{1}{m}\sum_{i=1}^{m}\mathrm{CLIP}\left(\nabla f_{\beta_s}^{p}(\boldsymbol{\theta}_t,x_{t+1}^{i})\right)+\mathbf{z}_t^p\right\|^2\right] \\
        =&\small\frac{1}{P^2}\sum_{p=1}^{P}\mathbb{E}\left[\left\|\nabla F_{\beta_s}(\boldsymbol{\theta}_{t})-\frac{1}{m}\sum_{i=1}^{m}\mathrm{CLIP}\left(\nabla f_{\beta_s}^{p}(\boldsymbol{\theta}_t,x_{t+1}^{i})\right)\right\|^2\right] \\
        &+\frac{dc_2^2C^2 P T\log(P/\delta)}{\epsilon^2n^2}  \\
        \small=&\small\left\| \mathbb{E}_{\mathbf{v}}\left[\nabla F_{\beta_s}(\boldsymbol{\theta}_{t})\right]-\frac{1}{P}\sum_{p=1}^{P}\frac{1}{m}\sum_{i=1}^{m}\mathrm{CLIP}\left(\nabla f_{\beta_s}^{p}(\boldsymbol{\theta}_t,x_{t+1}^{i})\right)\right\|^2 \\
        &+\frac{dc_2^2C^2 P T\log(P/\delta)}{\epsilon^2n^2}. \\
    \end{split}
\end{equation}
Using the elementary inequality $(a-b)^2\leq2a^2+2b^2$, we have
\begin{equation}
    \begin{split}
        &\mathbb{E}\left[\left\|\frac{1}{2\beta}\left(f(\boldsymbol{\theta}_{t-1}+\beta\mathbf{v},x)-f(\boldsymbol{\theta}_{t-1}-\beta\mathbf{v},x)\right)\right\|^2\right]  \\
        =&\frac{1}{4\beta^2}\mathbb{E}[|f(\boldsymbol{\theta}_{t-1}+\beta\mathbf{v},x)-\mathbb{E}_{\mathbf{v}}\left[f(\boldsymbol{\theta}_{t-1}+\beta\mathbf{v},x)\right]\\
        &+\mathbb{E}_{\mathbf{v}}\left[f(\boldsymbol{\theta}_{t-1}+\beta\mathbf{v},x)\right]-f(\boldsymbol{\theta}_{t-1}-\beta\mathbf{v},x)|^2] \\
        \leq&\frac{1}{2\beta^2}\mathbb{E}\left[\left\|f(\boldsymbol{\theta}_{t-1}+\beta\mathbf{v},x)-\mathbb{E}_{\mathbf{v}}\left[f(\boldsymbol{\theta}_{t-1}+\beta\mathbf{v},x)\right]\right\|^2\right] \\
        &+\frac{1}{2\beta^2}\mathbb{E}\left[\left\|\mathbb{E}_{\mathbf{v}}\left[f(\boldsymbol{\theta}_{t-1}+\beta\mathbf{v},x)\right]-f(\boldsymbol{\theta}_{t-1}-\beta\mathbf{v},x)\right\|^2\right].
    \end{split}
\end{equation}
Since $\mathbf{v}$ has a symmetric distribution around the origin, we have
\begin{equation}
    \begin{split}
        &\mathbb{E}\left[\left\|f(\boldsymbol{\theta}_{t-1}+\beta\mathbf{v},x)-\mathbb{E}_{\mathbf{v}}\left[f(\boldsymbol{\theta}_{t-1}+\beta\mathbf{v},x)\right]\right\|^2\right] \\
        =&\mathbb{E}\left[\left\|f(\boldsymbol{\theta}_{t-1}-\beta\mathbf{v},x)-\mathbb{E}_{\mathbf{v}}\left[f(\boldsymbol{\theta}_{t-1}+\beta\mathbf{v},x)\right]\right\|^2\right] .
    \end{split}
\end{equation}
Define $h(\boldsymbol{\theta})=f(\boldsymbol{\theta}+\beta\mathbf{v})$. Since $f$ is $L$-Lipschitz and $\mathbf{v}\in\mathcal{R}^d$ is sampled from standard Gaussian distribution. Then, by [\cite{wainwright2019high} Proposition 3.2], we have
\begin{equation}
    \mathbb{P}\left(\left|h(\boldsymbol{\theta})-\mathbb{E}[h(\boldsymbol{\theta})]\right|\geq c\right) \leq 2 e^{-\frac{c^2}{2\beta^2L^2}}.
\end{equation}
Then, we have
\begin{equation}
    \begin{split}
        \small\mathbb{E}&\small\left[\left(h(\boldsymbol{\theta})-\mathbb{E}[h(\boldsymbol{\theta})]\right)^2\right]=\int_{0}^{+\infty}\mathbb{P}\left(\left|h(\boldsymbol{\theta})-\mathbb{E}[h(\boldsymbol{\theta})]\right|^2\geq c\right) dc \\
        =&\int_{0}^{+\infty}\mathbb{P}\left(\left|h(\boldsymbol{\theta})-\mathbb{E}[h(\boldsymbol{\theta})]\right|\geq \sqrt{c}\right) dc \leq 2\int_{0}^{+\infty}e^{-\frac{c}{2\beta^2 L^2}} dc \\
        =& 4\beta^2L^2.
    \end{split}
\end{equation}
By the definition of $h$, we have
\begin{equation}
    \begin{split}
        &\mathbb{E}\left[\left\|\frac{1}{2\beta}\left(f(\boldsymbol{\theta}_{t-1}+\beta\mathbf{v},x)-f(\boldsymbol{\theta}_{t-1}-\beta\mathbf{v},x)\right)\right\|^2\right] \leq 4L^4.\\
    \end{split}
\end{equation}
Furthermore, we can obtain that
\begin{equation}
    \begin{split}
        &\mathbb{E}\left[\left\|\nabla f_{\beta}(\boldsymbol{\theta})-\mathbb{E}_{\mathbf{v}}\left[\nabla f_{\beta}(\boldsymbol{\theta})\right]\right\|^2\right] \\
        \leq&2d\mathbb{E}\left[\left\|\frac{1}{2\beta}\left(f(\boldsymbol{\theta}+\beta\mathbf{v})-\mathbb{E}_{\mathbf{v}}\left[f(\boldsymbol{\theta}+\beta\mathbf{v})\right]\right)\right\|^2\right] \\
        &+2d\mathbb{E}\left[\left\|\frac{1}{2\beta}\left(f(\boldsymbol{\theta}-\beta\mathbf{v})-\mathbb{E}_{\mathbf{v}}\left[f(\boldsymbol{\theta}-\beta\mathbf{v})\right]\right)\right\|^2\right] \\
        \leq& 64d\beta^2L^4.
    \end{split}
\end{equation}
Following assumption \ref{assum 1}, we can derive that
\begin{equation}\label{proof 2}
    \begin{split}
        &\small\left\| \mathbb{E}_{\mathbf{v}}\left[\nabla F_{\beta_s}(\boldsymbol{\theta}_{t})\right]-\frac{1}{P}\sum_{p=1}^{P}\frac{1}{m}\sum_{i=1}^{m}\mathrm{CLIP}\left(\nabla f_{\beta_s}^{p}(\boldsymbol{\theta}_t,x_{t+1}^{i})\right)\right\|^2  \\
        \leq&\frac{1}{P^2m^2}\sum_{p=1}^{P}\sum_{i=1}^{m}\left\| \nabla F_{\beta_s}^{p}(\boldsymbol{\theta}_{t})-\mathrm{CLIP}\left(\nabla f_{\beta_s}^{p}(\boldsymbol{\theta}_t,x_{t+1}^{i})\right)\right\|^2 \\
        &+\frac{64d \beta_s^2 L^4}{Pm} \\
        \leq&\small\frac{1}{P^2m^2}\sum_{p=1}^{P}\sum_{i=1}^{m}\left\| \nabla f_{\beta_s}^{p}(\boldsymbol{\theta}_t,x_{t+1}^{i})-\mathrm{CLIP}\left(\nabla f_{\beta_s}^{p}(\boldsymbol{\theta}_t,x_{t+1}
        ^{i})\right)\right\|^2 \\
        &+\frac{64d \beta_s^2 L^4}{Pm} +\frac{\gamma^2}{Pm} .\\
    \end{split}
\end{equation}
Furthermore, we can bound the possibility of clipping happens
\begin{equation}
    \begin{split}
        \small\mathbb{P}\left(\frac{1}{2\beta}\left|f(\boldsymbol{\theta}_{t-1}+\beta\mathbf{v},x)-f(\boldsymbol{\theta}_{t-1}-\beta\mathbf{v},x)\right|\geq C\right)\leq 2e^{-\frac{C^2}{2L^2}}.
    \end{split}
\end{equation}
We define $Q_{t+1}^{i}$ to be the event that the clipping does not happen at iteration $t+1$ for sample $x_{t+1}^i$, and $\overline{Q_{t+1}^i}$ to be the event that the clipping does happen. When event $Q_{t+1}^{i}$ happens, clipping does not happen at iteration $t+1$ for sample $x_{t+1}^i$ such that $\small\left\|\nabla f_{\beta_s}(\boldsymbol{\theta}_t,x_{t+1}^{i})-\mathrm{CLIP}\left(\nabla f_{\beta_s}(\boldsymbol{\theta}_t,x_{t+1}^{i})\right)\right\|^2=0$, we can obtain that
\begin{equation}
    \begin{split}
        &\mathbb{E}\left[\left\|\nabla f_{\beta_s}(\boldsymbol{\theta}_t,x_{t+1}^i)-\mathrm{CLIP}\left(\nabla f_{\beta_s}(\boldsymbol{\theta}_t,x_{t+1}^i)\right)\right\|^2\right] \\
        =&\left\|\nabla f_{\beta_s}(\boldsymbol{\theta}_t,x_{t+1}^i)-C\right\|^2\mathbb{P}(\overline{Q_{t+1}^{i}}) \\
        \leq&\left(2dC^2+4dL^2\right)\cdot\exp\left(-\frac{C^2}{2L^2}\right). \\
    \end{split}
\end{equation}
By setting $C^2=2L^2$, we have
\begin{equation}\label{proof 3}
    \begin{split}
        &\mathbb{E}\left[\left\|\nabla f_{\beta_s}(\boldsymbol{\theta}_t,x_{t+1}^i)-\mathrm{CLIP}\left(\nabla f_{\beta_s}(\boldsymbol{\theta}_t,x_{t+1}^i)\right)\right\|^2\right] \leq\frac{8dC^2}{e} .\\
    \end{split}
\end{equation}
Combining inequality \ref{proof 1}, \ref{proof 2} and \ref{proof 3}, we can bound the following term
\begin{equation}
    \begin{split}
        \small\mathbb{E}[||\nabla &F_{\beta_s}(\boldsymbol{\theta}_{t})-\nabla\hat{f}_{\beta_s}(\boldsymbol{\theta}_t,\xi_{t+1})||^2] \\
        \leq&\frac{dc_2^2C^2 P T\log(P/\delta)}{\epsilon^2n^2}+\frac{64d\beta_s^2L^4}{Pm}+\frac{8dC^2}{ePm}+\frac{\gamma^2}{Pm}.
    \end{split}
\end{equation}
\end{proof}

\begin{lemma}[Restatation of Lemma \ref{lemma conver}]
    Under Assumption \ref{assum 1} and $f(\boldsymbol{\theta},x)$ is $\rho$-weakly-convex of $\boldsymbol{\theta}$, by applying DP-ZOO (Algorithm \ref{alg:2}) with $\eta_s\leq\frac{\beta_s}{\sqrt{d}L}$, for any $\boldsymbol{\theta}\in\mathcal{R}^d$, we have
\begin{equation}
    \begin{split}
        \mathbb{E}&[\phi_{s}(\hat{\boldsymbol{\theta}}_{T_s})-\phi_{s}(\boldsymbol{\theta})]
        \leq\left(\frac{1}{2\eta_s T_s}+\frac{1}{2T_s\lambda}\right)\left\|\boldsymbol{\theta}^{s-1}-\boldsymbol{\theta}\right\|^2 \\
        &+\eta_s \left(\frac{dc_2^2C^2 P T\log(P/\delta)}{\epsilon^2n^2}+\frac{64d\beta_s^2L^4}{Pm}+\frac{8dC^2}{ePm}+\frac{\gamma^2}{Pm}\right).\\
    \end{split}
\end{equation}
\end{lemma}
\textbf{Proof for Lemma \ref{lemma conver}:} 
Recall that $\phi_{s}(\boldsymbol{\theta})=f_{\beta_s}(\boldsymbol{\theta})+\frac{1}{2\lambda}\left\|\boldsymbol{\theta}_t-\boldsymbol{\theta}^s\right\|^2$, let $F_{\beta_s}(\boldsymbol{\theta})=f_{\beta_s}(\boldsymbol{\theta},\mathcal{D})$, $r_s(\boldsymbol{\theta})=\frac{1}{2\lambda}\left\|\boldsymbol{\theta}-\boldsymbol{\theta}^s\right\|^2$. Due to the $\rho$-weak-convexity of $f(\boldsymbol{\theta})$, the $\frac{1}{\lambda}$-strong-convexity of $r_s(\boldsymbol{\theta})$ and the $\frac{\sqrt{d}L}{\beta_s}$-smoothness of $f_{\beta_s}(\boldsymbol{\theta})$, we have the following three inequality
\begin{equation}
    \begin{split}
        & F_{\beta_s}(\boldsymbol{\theta}) \geq F_{\beta_s}(\boldsymbol{\theta}_t) +\langle \nabla F_{\beta_s}(\boldsymbol{\theta}_t),\boldsymbol{\theta}-\boldsymbol{\theta}_t \rangle - \frac{\rho}{2} \left\|\boldsymbol{\theta}_{t}-\boldsymbol{\theta}\right\|^2 \\
        & r_s(\boldsymbol{\theta}) \geq r_s(\boldsymbol{\theta}_{t+1}) + \langle\partial r_s(\boldsymbol{\theta}_{t+1}),\boldsymbol{\theta}-\boldsymbol{\theta}_{t+1}\rangle +\frac{1}{2\lambda}\left\|\boldsymbol{\theta}-\boldsymbol{\theta}_{t+1}\right\|^2 \\
        & \footnotesize F_{\beta_s}(\boldsymbol{\theta}_t) \geq F_{\beta_s}(\boldsymbol{\theta}_{t+1}) -\langle \nabla F_{\beta_s}(\boldsymbol{\theta}_t),\boldsymbol{\theta}_{t+1}-\boldsymbol{\theta}_t \rangle - \frac{\sqrt{d}L}{2\beta_s}\left\|\boldsymbol{\theta}_t-\boldsymbol{\theta}_{t+1}\right\|^2. \\
    \end{split}
\end{equation}
Combing them together, we have 
\begin{equation}
    \begin{split}
        &F_{\beta_s}(\boldsymbol{\theta}_{t+1})+r_s(\boldsymbol{\theta}_{t+1})-(F_{\beta_s}(\boldsymbol{\theta})+r_s(\boldsymbol{\theta})) \\
        \leq &\langle \nabla F_{\beta_s}(\boldsymbol{\theta}_t)+\partial r_s(\boldsymbol{\theta}_{t+1}),\boldsymbol{\theta}_{t+1}-\boldsymbol{\theta}\rangle \\ &+\frac{\rho}{2}\left\|\boldsymbol{\theta}_{t}-\boldsymbol{\theta}\right\|^2+\frac{\sqrt{d}L}{2\beta_s}\left\|\boldsymbol{\theta}_t-\boldsymbol{\theta}_{t+1}\right\|^2-\frac{1}{2\lambda}\left\|\boldsymbol{\theta}-\boldsymbol{\theta}_{t+1}\right\|^2 .\label{1}
    \end{split}
\end{equation}
If we set the gradient in $\boldsymbol{\theta}_{t+1}$ to 0, there exists $\partial r_s(\boldsymbol{\theta}_{t+1})$ that
\begin{equation}
    \partial r_s(\boldsymbol{\theta}_{t+1})=\frac{1}{\eta_s}(\boldsymbol{\theta}_{t}-\boldsymbol{\theta}_{t+1})- \nabla \hat{f}_{\beta_s}(\boldsymbol{\theta}_t,\xi_{t+1}).
\end{equation}
where $\nabla\hat{f}_{\beta_s}(\boldsymbol{\theta}_t,\xi_{t+1})=\frac{1}{P}\sum_{p=1}^{P}\frac{1}{m}\sum_{i=1}^{m}\mathrm{CLIP}($$\nabla f_{\beta_s}^{p}(\boldsymbol{\theta}_t,$ $x_{t+1}^{i}$ $))$ $+\mathbf{z}_t^p$ and $\mathbf{z}_t^p=z_t^p\cdot\mathbf{v}_t^p\left(z_t^p\sim N(0,\sigma^2 C^2)\right)$. Plugging the above equation into \ref{1} and setting $\hat{\boldsymbol{\theta}}_{t+1}$ be the updated parameter on dataset $\mathcal{D}$ without DP noise in iteration $t+1$. Since $\eta$ is decreasing and $\beta$ is increasing, $\eta_s\leq\frac{\beta_s}{\sqrt{d}L}$ holds true for all $s\in[0,S]$ when $\eta_1\leq\frac{\beta_1}{\sqrt{d}L}$, by taking $\eta_s\leq\frac{\beta_s}{\sqrt{d}L}$, we have
\begin{small}
\begin{equation}
    \begin{aligned}
        &F_{\beta_s}(\boldsymbol{\theta}_{t+1})+r_s(\boldsymbol{\theta}_{t+1})-(F_{\beta_s}(\boldsymbol{\theta})+r_s(\boldsymbol{\theta})) \\
        \leq&\langle\nabla F_{\beta_s}(\boldsymbol{\theta}_{t})-\nabla\hat{f}_{\beta_s}(\boldsymbol{\theta}_t,\xi_{t+1}),\boldsymbol{\theta}_{t+1}-\boldsymbol{\theta}\rangle\\
        &+\langle\frac{1}{\eta_s}(\boldsymbol{\theta}_t-\boldsymbol{\theta}_{t+1}),\boldsymbol{\theta}_{t+1}-\boldsymbol{\theta}\rangle +\frac{\rho}{2}\left\|\boldsymbol{\theta}_{t}-\boldsymbol{\theta}\right\|^2 \\
        &+\frac{\sqrt{d}L}{2\beta_s}\left\|\boldsymbol{\theta}_t-\boldsymbol{\theta}_{t+1}\right\|^2-\frac{1}{2\lambda}\left\|\boldsymbol{\theta}-\boldsymbol{\theta}_{t+1}\right\|^2 \\
        =&\langle\nabla F_{\beta_s}(\boldsymbol{\theta}_{t})-\nabla\hat{f}_{\beta_s}(\boldsymbol{\theta}_t,\xi_{t+1}),\boldsymbol{\theta}_{t+1}-\hat{\boldsymbol{\theta}}_{t+1}+\hat{\boldsymbol{\theta}}_{t+1}-\boldsymbol{\theta}\rangle \\
        &+\left(\frac{1}{2\eta_s}+\frac{\rho}{2}\right)\left\|\boldsymbol{\theta}_{t}-\boldsymbol{\theta}\right\|^2 -\left(\frac{1}{2\eta_s}+\frac{1}{2\lambda}\right)\left\|\boldsymbol{\theta}-\boldsymbol{\theta}_{t+1}\right\|^2\\
        \leq&\langle\nabla F_{\beta_s}(\boldsymbol{\theta}_{t})-\nabla\hat{f}_{\beta_s}(\boldsymbol{\theta}_t,\xi_{t+1}),\hat{\boldsymbol{\theta}}_{t+1}-\boldsymbol{\theta}\rangle \\
        &+\eta_s\left\|\nabla F_{\beta_s}(\boldsymbol{\theta}_{t})-\nabla\hat{f}_{\beta_s}(\boldsymbol{\theta}_t,\xi_{t+1})\right\|^2\\
        &+\left(\frac{1}{2\eta_s}+\frac{\rho}{2}\right)\left\|\boldsymbol{\theta}_{t}-\boldsymbol{\theta}\right\|^2 -\left(\frac{1}{2\eta_s}+\frac{1}{2\lambda}\right)\left\|\boldsymbol{\theta}-\boldsymbol{\theta}_{t+1}\right\|^2.\\
    \end{aligned}
\end{equation}
\end{small}
Taking expectation on both sides, we have
\begin{equation}
    \begin{split}
        &\small\mathbb{E}\left[\phi_{s}(\boldsymbol{\theta}_{t+1})-\phi_{s}(\boldsymbol{\theta})\right] 
        \leq\eta_s\mathbb{E}\left[\left\|\nabla F_{\beta_s}(\boldsymbol{\theta}_{t})-\nabla\hat{f}_{\beta_s}(\boldsymbol{\theta}_t,\xi_{t+1})\right\|^2\right] \\
        &\small+\left(\frac{1}{2\eta_s}+\frac{\rho}{2}\right)\left\|\boldsymbol{\theta}_{t}-\boldsymbol{\theta}\right\|^2-\left(\frac{1}{2\eta_s}+\frac{1}{2\lambda}\right)\left\|\boldsymbol{\theta}-\boldsymbol{\theta}_{t+1}\right\|^2 .\\
    \end{split}
\end{equation}
Following Lemma \ref{lemma var} and by taking summation of the above inequality from $t=0$ to $T_s-1$, $\lambda< 1/\rho$, we have
\begin{equation}
    \begin{split}
        &\sum_{t=0}^{T_s-1}\phi_{s}(\boldsymbol{\theta}_{t+1})-\phi_{s}(\boldsymbol{\theta})
        \leq\left(\frac{1}{2\eta_s}+\frac{\rho}{2}\right)\left\|\boldsymbol{\theta}^{s-1}-\boldsymbol{\theta}\right\|^2 \\
        &+\eta_s T_s\left(\frac{dc_2^2C^2 P T\log(P/\delta)}{\epsilon^2n^2}+\frac{64d\beta_s^2L^4}{Pm}+\frac{8dC^2}{ePm}+\frac{\gamma^2}{Pm}\right) \\
        &-\left(\frac{1}{2\eta_s}+\frac{1}{2\lambda}\right)\left\|\boldsymbol{\theta}-\boldsymbol{\theta}_{T_s}\right\|^2.
    \end{split}
\end{equation}
By employing Jensens' inequality, denoting the output of stage $s$ by $\boldsymbol{\theta}^s=\hat{\boldsymbol{\theta}}_{T_s}=\frac{1}{T_s}\sum_{t=1}^{T_s}\boldsymbol{\theta}_{t}$, we can obtain that
\begin{equation}
    \begin{split}
        &\phi_{s}(\hat{\boldsymbol{\theta}}_{T_s})-\phi_{s}(\boldsymbol{\theta})
        \leq\left(\frac{1}{2\eta_s T_s}+\frac{1}{2T_s\lambda}\right)\left\|\boldsymbol{\theta}^{s-1}-\boldsymbol{\theta}\right\|^2 \\
        &+\eta_s \left(\frac{dc_2^2C^2 P T\log(P/\delta)}{\epsilon^2n^2}+\frac{64d\beta_s^2L^4}{Pm}+\frac{\gamma^2}{Pm}+\frac{8dC^2}{ePm}\right).\\
    \end{split}
\end{equation}
\begin{theorem}[Restatation of Theorem \ref{theo stage}]
    In DP-ZOSO, suppose assumption \ref{assum 1} holds and the loss function $f(\boldsymbol{\theta},x)$ is $\rho$-weakly-convex of $\boldsymbol{\theta}$. Then by setting $\eta_s=\alpha_{s}\cdot\min \{\frac{Pm}{7\gamma^2},\frac{Pme}{56dC^2}$,$\frac{\epsilon^2 n^2}{7dc_2^2C^2PT\log(P/\delta)}$, 
    $\frac{Pm}{448d\beta_S^2L^4}\}$ and $\lambda=\frac{7}{2\mu}, \eta_s T_s=\frac{7}{2\mu}$, after $S=\lceil\log(\alpha_0/\alpha)\rceil$ stages, we have
    \begin{equation}
        \phi_{s}(\boldsymbol{\theta}^{S})-\phi_{s}(\boldsymbol{\theta}^*)\leq \alpha.
    \end{equation}
    The total ZO oracle complexity is 
    \begin{equation}
        \mathcal{O}\left(\left(\gamma^2+dC^2+d\beta_S^2L^4+\frac{dC^2P^2m\log(P/\delta)}{\epsilon^2 n^2}\right)\cdot\frac{1}{\mu\alpha}\right).
    \end{equation}
\end{theorem}
\begin{proof}[Proof of Theorem \ref{theo stage}]
By taking $\boldsymbol{\theta}=\boldsymbol{\theta}^*$, we have
\begin{footnotesize}
    \begin{equation}
        \begin{aligned}
            f_{\beta_s}(\hat{\boldsymbol{\theta}}_{T_s})&-f_{\beta_s}(\boldsymbol{\theta}^*) \leq\left(\frac{1}{2\lambda}+\frac{1}{2\eta_s T_s}+\frac{1}{2T_s\lambda}\right)\left\|\boldsymbol{\theta}^{s-1}-\boldsymbol{\theta}^*\right\|^2 \\
            &+\eta_s \left(\frac{dc_2^2C^2 P T\log(P/\delta)}{\epsilon^2n^2}+\frac{64d\beta_s^2L^4}{Pm}+\frac{\gamma^2}{Pm}+\frac{8dC^2}{ePm}\right).\\
        \end{aligned}
    \end{equation}
\end{footnotesize}

Since $f_{\beta}$ satisfies $\mu$-PL condition, we will prove by induction that $f_{\beta_s}(\boldsymbol{\theta}^s)-f_{\beta_s}(\boldsymbol{\theta}^*)\leq\alpha_s$, where $\alpha_s=\alpha_0/2^s$, which is true for $s = 0$, we have
\begin{small}
\begin{equation}
    \begin{aligned}
        &f_{\beta_s}(\boldsymbol{\theta}^{s})-f_{\beta_s}(\boldsymbol{\theta}^*)\\
        \leq&\left(\frac{1}{4\lambda\mu}+\frac{1}{4\eta_s T_s\mu}+\frac{1}{4T_s\lambda\mu}\right)\left(f_{\beta_{s-1}}(\boldsymbol{\theta}^{s-1})-f_{\beta_{s-1}}(\boldsymbol{\theta}^*)\right) \\
        &+\eta_s \left(\frac{dc_2^2C^2 P T\log(P/\delta)}{\epsilon^2n^2}+\frac{64d\beta_s^2L^4}{Pm}+\frac{\gamma^2}{Pm}+\frac{8dC^2}{ePm}\right) \\
        \leq&\left(\frac{1}{4\lambda\mu}+\frac{1}{4\eta_s T_s\mu}+\frac{1}{4T_s\lambda\mu}\right)\alpha_{s-1} \\
        &+\eta_s \left(\frac{dc_2^2C^2 P T\log(P/\delta)}{\epsilon^2n^2}+\frac{64d\beta_S^2L^4}{Pm}+\frac{\gamma^2}{Pm}+\frac{8dC^2}{ePm}\right). \\
    \end{aligned}
\end{equation}
\end{small}
By setting $\eta_s=\alpha_{s}\cdot\min\{\frac{Pm}{7\gamma^2}$, $\frac{Pme}{56dC^2}$, $\frac{\epsilon^2 n^2}{7dc_2^2C^2PT\log(P/\delta)}$, $\frac{Pm}{448d\beta_S^2L^4}\}$ and $\lambda=\frac{7}{2\mu}, \eta_sT_s=\frac{7}{2\mu}$, we have 
\begin{equation}
    f_{\beta_s}(\boldsymbol{\theta}^{s})-f_{\beta_s}(\boldsymbol{\theta}^*)\leq \alpha_s.
\end{equation}
By induction, after $S=\lceil\log(\alpha_0/\alpha_S)\rceil$ stages, we have
\begin{equation}
    f_{\beta_S}(\boldsymbol{\theta}^{S})-f_{\beta_S}(\boldsymbol{\theta}^*)\leq \alpha_S.
\end{equation}
The total ZO oracle complexity of DP-ZOSO is 
\begin{equation}
    \mathcal{O}\left(\left(\gamma^2+dC^2+d\beta_S^2L^4+\frac{dC^2P^2m\log(P/\delta)}{\epsilon^2 n^2}\right)\cdot\frac{1}{\mu\alpha}\right).
\end{equation}
\end{proof}
\begin{lemma}\label{lemma pruning}
    In DP-ZOPO, under assumption \ref{assum 1}, the dimension of parameters to be fine-tuned in stage $s$ is reduced to $d*r_s$. The error between the gradient $\nabla_\mathbb{V} F_{\beta}(\boldsymbol{\theta})$ on the total dataset and the actual update gradient $\nabla_\mathbb{V}\hat{f}_{\beta}(\boldsymbol{\theta}_t,\xi_{t+1})$ is bounded by:
    \begin{small}
    \begin{equation}
        \begin{aligned}
            \mathbb{E}[&||\nabla_\mathbb{V} F_{\beta}(\boldsymbol{\theta}_{t})-\nabla_\mathbb{V}\hat{f}_{\beta}(\boldsymbol{\theta}_t,\xi_{t+1})||^2] \\
            \leq&\frac{dr_sc_2^2C^2 P T\log(P/\delta)}{\epsilon^2n^2}+\frac{8dr_sC^2}{ePm}+\frac{64dr_s\beta^2L^4}{Pm}+\frac{\gamma^2}{Pm}.
        \end{aligned}
    \end{equation}
    \end{small}
\end{lemma}
\begin{proof}[Proof of Lemma \ref{lemma pruning}]
    In DP-ZOPO, the dimension of parameters to be fine-tuned in stage $s$ is $d*r_s(r_s\leq 1)$. First, the scale of DP noise introduced is reduced by the current pruning rate $r_s$. Then, the $d$ in the clipping error and the zeroth-order estimation error(second and third terms) is replaced with $d*r_s$ since the $l2$-norm of the direction vector $\left\|\mathbf{v}\right\|(\mathbf{v}\sim\mathbb{V})$ is reduced to $d*r_s$.
\end{proof}
\begin{theorem}[Restatation of Theorem \ref{theo total}]
    In DP-ZOPO, suppose assumption \ref{assum 1} holds and loss function $f(\boldsymbol{\theta},x)$ is $\rho$-weakly-convex of $\boldsymbol{\theta}$. The dimensionality of parameters to be updated in stage $s$ is $d\cdot r_s$. Then by setting $\eta_s=\alpha_{s}\cdot\min\left\{\frac{Pm}{7\gamma^2},\frac{Pme}{56dr_SC^2},\frac{\epsilon^2 n^2}{7dr_Sc_2^2C^2PT\log(P/\delta)},\frac{Pm}{448dr_S\beta_S^2L^4}\right\}$ and $\lambda=\frac{7}{2\mu}, \eta_sT_s=\frac{7}{2\mu}$, after $S=\lceil\log(\alpha_0/\alpha)\rceil$ stages, we have
    \begin{equation}
        \phi_{s}(\boldsymbol{\theta}'^{S})-\phi_{s}(\boldsymbol{\theta}^*)\leq \alpha.
    \end{equation}
    The total ZO oracle complexity of DP-ZOPO is 
    \begin{equation}
    \footnotesize\mathcal{O}\left(\left(\gamma^2+rdr_SC^2+rdr_S\beta_S^2L^4+\frac{rdr_SC^2P^2m\log(P/\delta)}{\epsilon^2 n^2}\right)\cdot\frac{1}{\mu\alpha}\right).
    \end{equation}
\end{theorem}
\begin{proof}[Proof of Theorem \ref{theo total}]
Similar in Theorem \ref{theo stage}, we can obtain that
\begin{small}
    \begin{equation}
        \begin{aligned}
            &f_{\beta_s}(\boldsymbol{\theta}^{s})-f_{\beta_s}(\boldsymbol{\theta}^*)\\
            \leq&\left(\frac{1}{4\lambda\mu}+\frac{1}{4\eta_s T_s\mu}+\frac{1}{4T_s\lambda\mu}\right)\alpha_{s-1}+\frac{64\eta_s dr_S\beta_S^2L^4}{Pm} \\
            &+ \frac{\eta_sdr_Sc_2^2C^2 P T\log(P/\delta)}{\epsilon^2n^2}+\frac{8\eta_s dr_SC^2}{ePm}+\frac{\eta_s\gamma^2}{Pm}. \\
        \end{aligned}
    \end{equation}
\end{small}
By setting $\eta_s=\alpha_{s}\cdot\min\{\frac{Pm}{7\gamma^2}$, $\frac{Pme}{56dr_SC^2}$, $\frac{\epsilon^2 n^2}{7dr_Sc_2^2C^2PT\log(P/\delta)}$, $\frac{Pm}{448dr_S\beta_S^2L^4}\}$ and $\lambda=\frac{7}{2\mu}, \eta_sT_s=\frac{7}{2\mu}$, we have 
\begin{equation}
    f_{\beta_s}(\boldsymbol{\theta}^{s})-f_{\beta_s}(\boldsymbol{\theta}^*)\leq \alpha_s.
\end{equation}
By induction, after $S=\lceil\log(\alpha_0/\alpha_S)\rceil$ stages, we have
\begin{equation}
    f_{\beta_S}(\boldsymbol{\theta}^{S})-f_{\beta_S}(\boldsymbol{\theta}^*)\leq \alpha_S.
\end{equation}
The total ZO oracle complexity of DP-ZOPO is 
\begin{equation}
\small\mathcal{O}\left(\left(\gamma^2+dr_SC^2+dr_S\beta_S^2L^4+\frac{dr_SC^2P^2m\log(P/\delta)}{\epsilon^2 n^2}\right)\cdot\frac{1}{\mu\alpha}\right).
\end{equation}
\end{proof}
\subsection{Hyperparameters}\label{sec exp}

In this section, we will provide the experiment settings of RoBERTa-large, OPT-1.3b and OPT-2.7b. We use 6000 steps of training for classification and question-answering tasks. However, due to the complexity of text-generation tasks (translation and summarization), we use 11000 steps for training. In the following tables, BS denotes batch size, LR denotes learning rate, regularization denotes the regularization coefficient and $\beta$ denotes the ZO scale parameter.

\begin{table}[ht]
    \centering
    \vspace{1em}
    \caption{The hyperparameters grids used for RoBERTa-large.}
    \label{roberta-hyper}
    \resizebox{0.8\linewidth}{!}{
    \begin{tabular}{l l c}
    \toprule
        Method & Hyper & Values \\
        \hline
        \hline
        \specialrule{0em}{1pt}{1pt}
        \multirow{2}{*}{FT} & BS & 64 \\
        & LR & \{1e-4, 5e-4, 1e-3, 5e-3\} \\
        \hline
        \specialrule{0em}{1pt}{1pt}
        \multirow{3}{*}{FT-prefix} & BS & 64 \\
        & LR & \{1e-2, 3e-2, 5e-2\} \\
        & prefix tokens & 5 \\
        \hline
        \specialrule{0em}{1pt}{1pt}
        \multirow{3}{*}{DPZero} & BS & 64 \\
        & LR & \{2e-5, 1e-5, 5e-6\} \\
        & $\beta$ & 1e-6 \\
        \hline
        \specialrule{0em}{1pt}{1pt}
        \multirow{6}{50pt}{DP-ZOSO DP-ZOPO} & BS & 64 \\
        & LR & \{2e-4, 1e-4, 5e-6\} \\
        & $\beta$ & 1e-6 to 1e-5 \\
        & pruning rate & \{0.5\%,1\%,2\%,4\%,8\%,100\%\} \\
        & stage size & 3 \\
        & regularization & 5e-4 \\
        \bottomrule
    \end{tabular}
    }
    \vspace{2em}
\end{table}

\begin{table}[ht]
    \centering
    \vspace{1em}
    \caption{The hyperparameters grids used for OPT-1.3b.}
    \label{opt1.3-hyper}
    \resizebox{0.8\linewidth}{!}{
    \begin{tabular}{l l c}
    \toprule
        Method & Hyper & Values \\
        \hline
        \hline
        \specialrule{0em}{1pt}{1pt}
        \multirow{2}{*}{FT} & BS & 8, 16 \\
        & LR & \{1e-6, 5e-6, 1e-5, 5e-5, 1e-4\} \\
        \hline
        \specialrule{0em}{1pt}{1pt}
        \multirow{3}{*}{FT-prefix} & BS & 16 \\
        & LR & \{1e-2, 3e-2, 5e-2\} \\
        & prefix tokens & 5 \\
        \hline
        \specialrule{0em}{1pt}{1pt}
        \multirow{3}{*}{DPZero} & BS & 16, 64 \\
        & LR & \{6e-7, 8e-7, 1e-6, 2e-6\} \\
        & $\beta$ & 1e-6 \\
        \hline
        \specialrule{0em}{1pt}{1pt}
        \multirow{6}{50pt}{DP-ZOSO DP-ZOPO} & BS & 16, 64 \\
        & LR & \{6e-7, 8e-7, 1e-6, 2e-6\} \\
        & $\beta$ & 1e-6 to 1e-5 \\
        & pruning rate & \{0.5\%,1\%,2\%,4\%,8\%,100\%\} \\
        & stage size & 3, 5 \\
        & regularization & 5e-4 \\
        \bottomrule
    \end{tabular}
    }
    \vspace{1em}
\end{table}

\begin{table}[ht]
    \centering
    \vspace{1em}
    \caption{The hyperparameters grids used for OPT-2.7b.}
    \label{opt2.7-hyper}
    \resizebox{0.8\linewidth}{!}{
    \begin{tabular}{l l c}
    \toprule
        Method & Hyper & Values \\
        \hline
        \hline
        \specialrule{0em}{1pt}{1pt}
        \multirow{2}{*}{FT} & BS & 4, 8 \\
        & LR & \{1e-6, 5e-6, 1e-5, 5e-5\} \\
        \hline
        \specialrule{0em}{1pt}{1pt}
        \multirow{3}{*}{FT-prefix} & BS & 4, 8 \\
        & LR & \{1e-3, 5e-3, 1e-2\} \\
        & prefix tokens & 5 \\
        \hline
        \specialrule{0em}{1pt}{1pt}
        \multirow{3}{*}{DPZero} & BS & 16 \\
        & LR & \{6e-7, 8e-7, 1e-6, 2e-6\} \\
        & $\beta$ & 1e-4 \\
        \hline
        \specialrule{0em}{1pt}{1pt}
        \multirow{6}{50pt}{DP-ZOSO DP-ZOPO} & BS & 16 \\
        & LR & \{6e-7, 8e-7, 1e-6, 2e-6\} \\
        & $\beta$ & 1e-6 to 1e-5 \\
        & pruning rate & \{0.5\%,1\%,2\%,4\%,8\%,100\%\} \\
        & stage size & 3, 5 \\
        & regularization & 5e-4 \\
        \bottomrule
    \end{tabular}
    }
    \vspace{1em}
\end{table}


\subsection{More Experiment Results}\label{sec more result}
In this section, we will provide more experiment results that are not included in the main body of this paper. 

\begin{table}[htbp]
\vspace{1em}
\caption{Experiments on RoBERTa-large (350M parameters) with privacy budget $\epsilon=2$.}
\centering
\begin{tabular}{c c c c c c c }
\toprule
\specialrule{0em}{1pt}{1pt}
\multicolumn{2}{c}{Task} & SST-2 & SST-5 & SNLI & MNLI & TREC  \\
\hline
\specialrule{0em}{1pt}{1pt}
\multicolumn{2}{l}{$r=0.5\%$} & 88.5 & 46.5 & 70.3 & 70.1 & 83.4 \\
\multicolumn{2}{l}{$r=1\%$} & 91.7 & 48.8 & 78.4 & 68.1 & 88.6 \\
\multicolumn{2}{l}{$r=2\%$} & 88.4 & 47.1 & 78.2 & 71.4 & 89.6 \\
\multicolumn{2}{l}{$r=4\%$} & 92.2 & 46.6 & 79.5 & 69.8 & 85.0 \\
\multicolumn{2}{l}{$r=8\%$} & 92.5 & 46.6 & 77.3 & 69.3 & 85.4 \\
\multicolumn{2}{l}{$r=100\%$} & 92.9 & 45.8 & 76.8 & 66.3 & 83.4 \\
\bottomrule
\end{tabular}
\end{table}

\begin{table}[htbp]
\vspace{1em}
\caption{Experiments on RoBERTa-large (350M parameters) with privacy budget $\epsilon=4$.}
\centering
\begin{tabular}{c c c c c c c }
\toprule
\specialrule{0em}{1pt}{1pt}
\multicolumn{2}{c}{Task} & SST-2 & SST-5 & SNLI & MNLI & TREC  \\
\hline
\specialrule{0em}{1pt}{1pt}
\multicolumn{2}{l}{$r=0.5\%$} & 92.2 & 50.5 & 78.6 & 71.4 & 81.0 \\
\multicolumn{2}{l}{$r=1\%$} & 92.2 & 47.1 & 80.4 & 64.7 & 88.2 \\
\multicolumn{2}{l}{$r=2\%$}& 91.6 & 47.7 & 80.4 & 71.5 & 87.0 \\
\multicolumn{2}{l}{$r=4\%$}& 93.2 & 47.0 & 78.3 & 69.7 & 86.4 \\
\multicolumn{2}{l}{$r=8\%$} & 92.5 & 48.4 & 78.6 & 69.7 & 84.4 \\
\multicolumn{2}{l}{$r=100\%$}& 92.6 & 42.0 & 78.9 & 67.3 & 83.8 \\
\bottomrule
\end{tabular}
\end{table}

\begin{table}[htbp]
\vspace{1em}
\caption{Experiments on RoBERTa-large (350M parameters) with privacy budget $\epsilon=8$.}
\centering
\begin{tabular}{c c c c c c c }
\toprule
\specialrule{0em}{1pt}{1pt}
\multicolumn{2}{c}{Task} & SST-2 & SST-5 & SNLI & MNLI & TREC  \\
\hline
\specialrule{0em}{1pt}{1pt}
\multicolumn{2}{l}{$r=0.5\%$} & 92.2 & 49.8 & 81.4 & 71.6 & 85.4\\
\multicolumn{2}{l}{$r=1\%$} & 92.2 & 47.4 & 79.2 & 73.1 & 90.8\\
\multicolumn{2}{l}{$r=2\%$} & 91.8 & 49.4 & 78.9 & 68.6 & 85.4\\
\multicolumn{2}{l}{$r=4\%$} & 93.6 & 48.0 & 77.4 & 68.4 & 87.0\\
\multicolumn{2}{l}{$r=8\%$} & 92.4 & 48.0 & 79.0 & 69.7 & 84.4\\
\multicolumn{2}{l}{$r=100\%$} & 93.2 & 41.2 & 76.5 & 69.2 & 84.2\\
\bottomrule
\end{tabular}
\end{table}

\begin{table}[htbp]
\vspace{1.5em}
\caption{Experiments of RoBERTa-large zeroth-order and first-order fine-tuned on SNLI with different pruning rates.}
\raggedright  

\label{table_roberta_firstvszero_pruning}
\vspace{1em}
\begin{subtable}{\linewidth}
\centering
\caption{Zeroth-order}
\begin{tabular}{c c c c c c}
\toprule
\multicolumn{2}{c}{Pruning Rate} & $\epsilon=2$ & $\epsilon=4$ & $\epsilon=8$ & $\epsilon=\infty$ \\
\hline
\hline
\specialrule{0em}{1pt}{1pt}
\multicolumn{2}{c}{$r=0.5\%$} & 70.3 & 78.6 & \textbf{81.4} & \textbf{80.5} \\
\multicolumn{2}{c}{$r=1\%$} & 78.4 & \textbf{80.4} & 79.2 & 80.3 \\
\multicolumn{2}{c}{$r=2\%$} & 78.2 & \textbf{80.4} & 78.9 & 79.6 \\
\multicolumn{2}{c}{$r=4\%$} & \textbf{79.5} & 78.3 & 77.4 & 79.3 \\
\multicolumn{2}{c}{$r=8\%$} & 77.3 & 78.6 & 79.0 & 78.6 \\
\multicolumn{2}{c}{$r=100\%$} & 76.8 & 78.9 & 76.5 & 78.1 \\
\bottomrule
\end{tabular}
\end{subtable}

\vspace{2em}

\begin{subtable}{\linewidth}
\centering
\caption{First-order}
\begin{tabular}{c c c c c c}
\toprule
\multicolumn{2}{c}{Pruning Rate} & $\epsilon=2$ & $\epsilon=4$ & $\epsilon=8$ & $\epsilon=\infty$ \\
\hline
\hline
\specialrule{0em}{1pt}{1pt}
\multicolumn{2}{c}{$r=0.5\%$} & 85.0 & 85.8 & 87.0 & 86.8 \\
\multicolumn{2}{c}{$r=1\%$} & \textbf{87.0} & 86.8 & 86.5 & 85.8 \\
\multicolumn{2}{c}{$r=2\%$} & 86.6 & 86.6 & 86.7 & 88.2 \\
\multicolumn{2}{c}{$r=4\%$} & 82.8 & 85.2 & 86.0 & 87.6 \\
\multicolumn{2}{c}{$r=8\%$} & 84.6 & 86.1 & 86.7 & 86.9 \\
\multicolumn{2}{c}{$r=100\%$} & 86.4 & \textbf{87.0} & \textbf{87.2} & \textbf{90.0} \\
\bottomrule
\end{tabular}
\end{subtable}
\end{table}

\begin{figure}[htb]
    \centering
    \includegraphics[width=\linewidth]{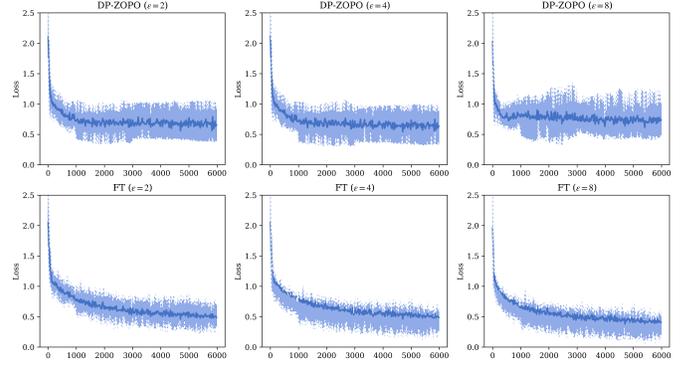}
    \caption{The training loss on SNLI with RoBERTa-large of FT and DP-ZOPO with different privacy budgets ($\epsilon=2,4,8$) that are commonly considered in related works \cite{zhang2024dpzero} \cite{yu2021differentially} \cite{li2021large}. DP-ZOPO with larger privacy budget ($\epsilon=8$) has faster convergence rate. Under the same privacy budget, DP-ZOPO converges faster than FT in the early stage while FT can continue to converge more stably in the later stages.}
    \label{fig:enter-label}
    \vspace{1em}
\end{figure}

\begin{table}[htb]
\vspace{1em}
\caption{Experiments on RoBERTa-large (350M parameters). Dynamic pruning with a constant pruning rate fails to improve optimization.}
\centering
\label{table_roberta_cm&dm}
\begin{tabular}{c c c c c c}
\toprule
\specialrule{0em}{1pt}{1pt}
\multicolumn{1}{c}{Task} & SST-2 & SST-5 & SNLI & MNLI & TREC \\
\hline
\hline
\specialrule{0em}{1pt}{1pt}
\multicolumn{6}{c}{Small Privacy budget: $\epsilon=2$} \\
\hline
\specialrule{0em}{1pt}{1pt}
\multicolumn{1}{c}{\small static mask} & 92.5 & 48.8 & 79.5 & 71.4 & 89.6 \\
\multicolumn{1}{c}{\small dynamic mask} & 92.4 $\downarrow$ & 50.5 $\uparrow$ & 78.6 $\downarrow$ & 70.1 $\downarrow$ & 85.8 $\downarrow$ \\
\hline
\hline
\specialrule{0em}{1pt}{1pt}
\multicolumn{6}{c}{Medium Privacy budget: $\epsilon=4$} \\
\hline
\specialrule{0em}{1pt}{1pt}
\multicolumn{1}{c}{\small static mask} & 93.2 & 50.5 & 80.4 & 71.5 & 88.2 \\
\multicolumn{1}{c}{\small dynamic mask} & 93.1 $\downarrow$ & 49.9 $\downarrow$ & 78.5 $\downarrow$ & 69.6 $\downarrow$ & 87.8 $\downarrow$ \\
\hline
\hline
\specialrule{0em}{1pt}{1pt}
\multicolumn{6}{c}{Large Privacy budget: $\epsilon=8$} \\
\hline
\specialrule{0em}{1pt}{1pt}
\multicolumn{1}{c}{\small static mask} & 93.6 & 49.8 & 81.4 & 73.1 & 90.8 \\
\multicolumn{1}{c}{\small dynamic mask} & 92.5 $\downarrow$ & 50.3 $\uparrow$ & 79.3 $\downarrow$ & 69.5 $\downarrow$ & 87.6 $\downarrow$ \\
\bottomrule
\end{tabular}
\end{table}

\begin{table}[b]
\caption{Experiments on RoBERTa-large (350M parameters). $r=0.5\%$ denotes the static pruning with pruning rate $r=0.5\%$ before fine-tuning and $r=0.5\%$* denotes the dynamic pruning with constant pruning rate $r=0.5\%$.}
\centering
\label{roberta static vs dynamic}
\begin{tabular}{c c c c c c c}
\toprule
\specialrule{0em}{1pt}{1pt}
\multicolumn{2}{c}{Task} & SST-2 & SST-5 & SNLI & MNLI & TREC \\
\hline
\hline
\specialrule{0em}{1pt}{1pt}
\multicolumn{7}{c}{Small Privacy budget: $\epsilon=2$} \\
\hline
\specialrule{0em}{1pt}{1pt}
\multicolumn{2}{l}{$r=0.5\%$} & 88.5 & 46.5 & 70.3 & 70.1 & 83.4 \\
\multicolumn{2}{l}{$r=0.5\%$*} & 88.8 & 50.5 & 75.5 & 70.1 & 84.2 \\
\multicolumn{2}{l}{$r=1\%$} & 91.7 & 48.8 & 78.4 & 68.1 & 88.6 \\
\multicolumn{2}{l}{$r=1\%$*} & 91.7 & 48.8 & 76.9 & 64.3 & 84.0 \\
\multicolumn{2}{l}{$r=2\%$} & 88.4 & 47.1 & 78.2 & 71.4 & 89.6 \\
\multicolumn{2}{l}{$r=2\%$*} & 90.9 & 50.0 & 78.6 & 67.9 & 84.0 \\
\multicolumn{2}{l}{$r=4\%$} & 92.2 & 46.6 & 79.5 & 69.8 & 85.0 \\
\multicolumn{2}{l}{$r=4\%$*} & 92.1 & 48.9 & 76.9 & 69.6 & 85.8 \\
\multicolumn{2}{l}{$r=8\%$} & 92.5 & 46.6 & 77.3 & 69.3 & 85.4 \\
\multicolumn{2}{l}{$r=8\%$*} & 92.4 & 47.6 & 77.2 & 69.5 & 84.8 \\
\hline
\hline
\specialrule{0em}{1pt}{1pt}
\multicolumn{7}{c}{Medium Privacy budget: $\epsilon=4$} \\
\hline
\specialrule{0em}{1pt}{1pt}
\multicolumn{2}{l}{$r=0.5\%$} & 92.2 & 50.5 & 78.6 & 71.4 & 81.0 \\
\multicolumn{2}{l}{$r=0.5\%$*} & 92.1 & 48.8 & 78.2 & 69.0 & 87.8 \\
\multicolumn{2}{l}{$r=1\%$} & 92.2 & 47.1 & 80.4 & 64.7 & 88.2 \\
\multicolumn{2}{l}{$r=1\%$*} & 91.6 & 49.1 & 76.8 & 66.6 & 84.6 \\
\multicolumn{2}{l}{$r=2\%$} & 91.6 & 47.7 & 80.4 & 71.5 & 87.0 \\
\multicolumn{2}{l}{$r=2\%$*} & 92.7 & 49.9 & 78.5 & 68.8 & 84.6 \\
\multicolumn{2}{l}{$r=4\%$} & 93.2 & 47.0 & 78.3 & 69.7 & 86.4 \\
\multicolumn{2}{l}{$r=4\%$*} & 92.1 & 48.9 & 78.4 & 69.6 & 83.2 \\
\multicolumn{2}{l}{$r=8\%$} & 92.5 & 48.4 & 78.6 & 69.7 & 84.4 \\
\multicolumn{2}{l}{$r=8\%$*} & 93.1 & 49.3 & 77.9 & 69.0 & 83.2 \\
\hline
\hline
\specialrule{0em}{1pt}{1pt}
\multicolumn{7}{c}{Large Privacy budget: $\epsilon=8$} \\
\hline
\specialrule{0em}{1pt}{1pt}
\multicolumn{2}{l}{$r=0.5\%$} & 92.2 & 49.8 & 81.4 & 71.6 & 85.4 \\
\multicolumn{2}{l}{$r=0.5\%$*} & 92.5 & 48.8 & 76.2 & 67.7 & 87.6 \\
\multicolumn{2}{l}{$r=1\%$} & 92.2 & 47.4 & 79.2 & 73.1 & 90.8 \\
\multicolumn{2}{l}{$r=1\%$*} & 91.7 & 47.9 & 77.5 & 65.6 & 82.6 \\
\multicolumn{2}{l}{$r=2\%$} & 91.8 & 49.4 & 78.9 & 68.6 & 85.4 \\
\multicolumn{2}{l}{$r=2\%$*} & 92.5 & 48.6 & 79.3 & 67.7 & 82.4 \\
\multicolumn{2}{l}{$r=4\%$} & 93.6 & 48.0 & 77.4 & 68.4 & 87.0 \\
\multicolumn{2}{l}{$r=4\%$*} & 92.1 & 47.1 & 77.0 & 69.3 & 85.0 \\
\multicolumn{2}{l}{$r=8\%$} & 92.4 & 48.0 & 79.0 & 69.7 & 84.4 \\
\multicolumn{2}{l}{$r=8\%$*} & 92.4 & 46.5 & 77.2 & 69.5 & 87.0 \\
\bottomrule
\end{tabular}
\end{table}

\begin{table}[htb]
\caption{Experiments on RoBERTAa-large. 1-2-4\% $\ast$ and 1-2-4\% denote the incremental pruning and dynamic pruning strategy separately, both with dynamic pruning rate $r$ increasing from 1\% to 2\% and 4\%. Incremental pruning strategy outperforms dynamic pruning strategy in most dataset under all private settings.}
\centering
\label{table robert incremental}
\setlength{\tabcolsep}{2mm}{
\begin{tabular}{c c c c c c c}
\toprule
\multicolumn{2}{l}{Task} & $\textbf{SST-2}$ & $\textbf{SST-5}$ & $\textbf{SNLI}$ & $\textbf{MNLI}$ & $\textbf{TREC}$\\
\hline
\hline
\specialrule{0em}{1pt}{1pt}
\multicolumn{7}{c}{Small Privacy budget: $\epsilon=2$} \\
\hline
\specialrule{0em}{1pt}{1pt}
\multicolumn{2}{l}{0.5-1-2\%} & 89.4 & 48.7 & 79.2 & 71.3 & 86.6 \\
\multicolumn{2}{l}{0.5-1-2\% $\ast$} & 88.5 & 49.0 & 76.7 & 71.6 & 88.4 \\
\multicolumn{2}{l}{1-2-4\%} & 92.8 & 48.2 & 77.6 & 71.3 & 85.2 \\
\multicolumn{2}{l}{1-2-4\% $\ast$} & 92.2 & 47.7 & 80.1 & 70.7 & 91.2 \\
\multicolumn{2}{l}{2-4-8\%} & 89.6 & 46.4 & 75.3 & 72.0 & 91.4 \\
\multicolumn{2}{l}{2-4-8\% $\ast$} & 92.8 & 48.6 & 77.5 & 70.6 & 91.4 \\
\hline
\hline
\specialrule{0em}{1pt}{1pt}
\multicolumn{7}{c}{Medium Privacy budget: $\epsilon=4$} \\
\hline
\specialrule{0em}{1pt}{1pt}
\multicolumn{2}{l}{0.5-1-2\%} & 93.0 & 48.0 & 79.2 & 71.3 & 86.6 \\
\multicolumn{2}{l}{0.5-1-2\% $\ast$} & 93.6 & 49.1 & 80.0 & 73.1 & 90.4 \\
\multicolumn{2}{l}{1-2-4\%} & 92.8 & 48.2 & 79.4 & 71.2 & 87.4 \\
\multicolumn{2}{l}{1-2-4\% $\ast$} & 92.9 & 49.3 & 80.1 & 71.6 & 92.0 \\
\multicolumn{2}{l}{2-4-8\%} & 92.0 & 47.9 & 78.7 & 74.9 & 88.6 \\
\multicolumn{2}{l}{2-4-8\% $\ast$} & 92.8 & 48.6 & 80.6 & 73.9 & 88.6 \\
\hline
\hline
\specialrule{0em}{1pt}{1pt}
\multicolumn{7}{c}{Large Privacy budget: $\epsilon=8$} \\
\hline
\specialrule{0em}{1pt}{1pt}
\multicolumn{2}{l}{0.5-1-2\%} & 92.2 & 49.8 & 80.8 & 71.9 & 88.0 \\
\multicolumn{2}{l}{0.5-1-2\% $\ast$} & 93.6 & 48.5 & 79.2 & 74.8 & 89.2 \\
\multicolumn{2}{l}{1-2-4\%} & 92.5 & 48.8 & 79.3 & 72.0 & 89.0 \\
\multicolumn{2}{l}{1-2-4\% $\ast$} & 92.9 & 48.1 & 79.4 & 72.4 & 88.0 \\
\multicolumn{2}{l}{2-4-8\%} & 92.9 & 46.0 & 81.4 & 74.4 & 89.0 \\
\multicolumn{2}{l}{2-4-8\% $\ast$} & 92.8 & 48.6 & 81.6 & 74.8 & 91.8 \\
\bottomrule
\end{tabular}}
\end{table}

\begin{table}[htbp]
\vspace{1em}
\caption{Experiments on OPT-2.7b with privacy budget $\epsilon=4$. 1-2-4\% $\ast$ denotes the incremental pruning strategy of with $r=$1-2-4\%. Dynamic pruning outperforms static pruning in all five datasets.}
\centering
\label{optincre_vs}
\begin{tabular}{c c c c c c}
\toprule
\specialrule{0em}{1pt}{1pt}
\multicolumn{2}{c}{Task} & SST-2 & CB & BoolQ & MultiRC \\
\hline
\specialrule{0em}{1pt}{1pt}
\multicolumn{2}{l}{static*} & 90.0 & 62.0 & 63.7 & 50.0 \\
\multicolumn{2}{l}{$0.5-1-2\%$} & 92.5 & 76.0 & 67.5 & 65.0 \\
\multicolumn{2}{l}{$0.5-1-2\% \ast$} & 92.2 & 77.2 & 69.2 & 61.6 \\
\multicolumn{2}{l}{$1-2-4\%$} & 88.1 & 77.2 & 67.3 & 55.8 \\
\multicolumn{2}{l}{$1-2-4\% \ast$} & 92.3 & 78.8 & 67.3 & 58.3 \\
\multicolumn{2}{l}{$2-4-8\%$} & 90.0 & 76.8 & 66.7 & 54.0 \\
\multicolumn{2}{l}{$2-4-8\% \ast$} & 89.0 & 76.8 & 66.7 & 57.0 \\
\bottomrule
\end{tabular}
\end{table}

\begin{table}[htbp]
\caption{OPT-1.3b and OPT-2.7b fine-tuned on SQuAD. DP-ZOPO has better QA accuracy than ICL, DPZero, DP-ZOSO, FT-prefix, even FT in all private settings.}
\raggedright  
\label{table_squad}
\vspace{1em}
\begin{subtable}{\linewidth}
\centering
\caption{OPT-1.3b}
\setlength{\tabcolsep}{5mm}{
\begin{tabular}{c c c c c}
\toprule

\multicolumn{2}{l}{Method} & $\epsilon=2$ & $\epsilon=4$ & $\epsilon=8$ \\
\hline
\hline
\specialrule{0em}{1pt}{1pt}
\multicolumn{2}{l}{Zero-shot} & 27.23 & 27.23 & 27.23 \\
\multicolumn{2}{l}{ICL} & 58.66 & 58.66 & 58.66 \\
\multicolumn{2}{l}{FT} & 71.71 & 72.24 & 75.30 \\
\multicolumn{2}{l}{FT-prefix} & 71.37 & 72.80 & 73.21 \\
\multicolumn{2}{l}{DPZero} & 48.38 & 52.17 & 47.46 \\
\multicolumn{2}{l}{DP-ZOSO} & 73.31 & 73.01 & 75.24 \\
\multicolumn{2}{l}{DP-ZOPO} & 73.71 & 76.00 & 75.24 \\
\bottomrule
\end{tabular}}
\end{subtable}

\vspace{2em}

\begin{subtable}{\linewidth}
\centering
\caption{OPT-2.7b}
\setlength{\tabcolsep}{5mm}{
\begin{tabular}{c c c c c}
\toprule
\multicolumn{2}{l}{Method} & $\epsilon=2$ & $\epsilon=4$ & $\epsilon=8$ \\
\hline
\hline
\specialrule{0em}{1pt}{1pt}
\multicolumn{2}{l}{Zero-shot} & 29.89 & 29.89 & 29.89 \\
\multicolumn{2}{l}{ICL} & 67.63 & 67.63 & 67.63 \\
\multicolumn{2}{l}{FT} & 77.35 & 77.41 & 78.35 \\
\multicolumn{2}{l}{FT-prefix} & 76.50 & 76.66 & 76.92 \\
\multicolumn{2}{l}{DPZero} & 60.18 & 63.46 & 65.12 \\
\multicolumn{2}{l}{DP-ZOSO} & 76.83 & 77.68 & 78.16 \\
\multicolumn{2}{l}{DP-ZOPO} & 78.83 & 78.03 & 79.03 \\
\bottomrule
\end{tabular}}
\end{subtable}
\end{table}



\end{document}